\title{On the cohesion and separability of average-link for
hierarchical agglomerative clustering}
\author{%
  Eduardo S. Laber \\
   Departmento de Informática, PUC-RIO\\
   \texttt{laber@inf.puc-rio.br} 
   \And Miguel Batista \\
  Departmento de Informática, PUC-RIO\\   \texttt{miguel260503@gmail.com} \\
}
\theoremstyle{plain}
\newtheorem{theorem}{Theorem}[section]
\newtheorem{proposition}[theorem]{Proposition}
\newtheorem{lemma}[theorem]{Lemma}
\theoremstyle{definition}
\theoremstyle{remark}
\newcommand{\goodnessav}{{\tt cs\mbox{-}ratio_{AV}}}
\newcommand{\goodnessdm}{{\tt cs\mbox{-}ratio_{DM}}}
\newcommand{\diamset}{{\tt diam}}
\newcommand{\maxavgset}{{\tt max\mbox{-}avg}}
\newcommand{\maxdiamset}{{\tt max\mbox{-}diam}}
\newcommand{\avgdiamset}{{\tt avg\mbox{-}diam}}
\newcommand{\complink}{{\tt complete-linkage}}
\newcommand{\singlelink}{{\tt single-linkage}}
\newcommand{\avglink}{{\tt average-link}}
\newcommand{\similarity}{{\tt sim}}
\newcommand{\dasgupta}{{\tt Dasg}}
\newcommand{\dasguptadiss}{{\tt CKMM}}
\newtheorem{claim}{Claim}
\newcommand{\remove}[1]{}
\newcommand{\red}[1]{{\color{red} #1}}
\newcommand{\OPT}{\textrm{OPT}\xspace}
\newcommand{\OPTAVG}{\textrm{OPT$_{{\tt AV}}$}\xspace}
\newcommand{\OPTDIM}{\textrm{OPT$_{{\tt DM}}$}\xspace}
\newcommand{\OPTDIMK}{\textrm{OPT$_{{\tt DM}}(k)$}\xspace}
\newcommand{\OPTAVGK}{\textrm{OPT$_{{\tt AV}}(k)$}\xspace}
\newcommand{\OPTSEPK}{\textrm{OPT$_{{\tt SEP}}(k)$}\xspace}
\newcommand{\A}{\mathcal{A}}
\newcommand{\mom}[1]{{\left\vert\kern-0.25ex\left\vert\kern-0.25ex\left\vert #1 \right\vert\kern-0.25ex\right\vert\kern-0.25ex\right\vert}}
\renewcommand{\L}{\mathcal{L}}
\renewcommand{\S}{\mathcal{S}}
\newcommand{\C}{\mathcal{C}}
\newcommand{\T}{\mathcal{T}}
\newcommand{\avg}{{\tt avg}}
\newcommand{\dist}{{\tt dist}}
\newcommand{\diss}{{\tt diss}}
\newcommand{\costavg}{{\tt sep_{av}}}
\newcommand{\costmin}{{\tt sep_{min}}}
\begin{document}

\maketitle

\begin{abstract}
Average-link is widely recognized as one of the most popular
and effective methods
for building hierarchical agglomerative clustering.
The available theoretical analyses  show
that this method has a much better approximation than other popular heuristics,
as single-linkage and complete-linkage, regarding 
variants of Dasgupta's cost function [STOC 2016]. However, these analyses do not
separate  average-link from a random hierarchy and they 
are not appealing for metric spaces since every  hierarchical clustering has
a $1/2$  approximation with regard  to the variant of Dasgupta's function
that is employed for dissimilarity measures [Moseley and Yang 2020].
In this paper, we 
  present a comprehensive study of the performance
of \avglink \, in metric spaces, regarding  several natural criteria that
capture separability and cohesion, and are more interpretable
than Dasgupta's cost function and its variants.
We also present experimental results with real datasets that, together
with our theoretical analyses, suggest that average-link is a better
choice than other related methods when both cohesion and separability are important goals.
\end{abstract}

\section{Introduction}

\remove{Clustering is the problem of partitioning a set of items so that similar items are grouped together and dissimilar items are separated. It is a fundamental tool in machine learning that is commonly used for exploratory analysis and for reducing the computational resources required to handle large datasets. For comprehensive descriptions of different clustering methods and their applications, we refer to \citep{Jain:1999,HMMR15}. }

Clustering is the task of partitioning a set of objects/points so that similar ones are grouped together while dissimilar ones are put in different groups. 
Clustering methods are widely used for exploratory analysis and for reducing the computational resources required to handle large datasets. 


Hierarchical clustering is an important class of clustering methods.
Given a set  of ${\cal X}$ of $n$ points,  a hierarchical clustering is a sequence of
clusterings $({\cal C}^n,{\cal C}^{n-1},\ldots,\C^{1})$,
where ${\cal C}^n$ is a clustering with $n$ unitary clusters, 
each of them corresponding to a point in ${\cal X}$,
and  the clustering ${\cal C}^{i}$, with $i <n $, is obtained 
from ${\cal C}^{i+1}$ by replacing two of its clusters with 
their union $A^i$.
A hierarchical clustering induces a strictly binary tree with $n$ leaves,
where each leaf corresponds to a point in ${\cal X}$
and the $i$th internal node, with $i<n$, is associated with the cluster
$A^i$; the points in $A^i$ correspond to the leaves of the subtree rooted in $A^i$.
Hierarchical clustering methods are often taught in data science/ML courses, are implemented in many machine learning libraries, such as {\tt scipy}, and  have applications in 
different fields as evolution studies via phylogenetic trees \citep{b684a7bdefca4810bdaaf10bf8196c46},
finance \citep{TUMMINELLO201040} and  detection  of closely related entities \citep{DBLP:conf/kdd/KobrenMKM17,DBLP:conf/kdd/MonathDGZAMMNTT21}.


\remove{
There is a significant literature on hierarchical clustering; for  good surveys we refer to
\citep{10.1093/comjnl/26.4.354,DBLP:journals/widm/MurtaghC12}.
With regards to more theoretical work, one important line of research consists of designing algorithms for hierarchical clustering
 with provable guarantees for natural optimization criteria such as cluster diameter and the sum of quadratic errors \citep{DASGUPTA2005555,DBLP:journals/siamcomp/CharikarCFM04,DBLP:journals/siamcomp/LinNRW10,DBLP:conf/esa/ArutyunovaR22}.
Another relevant line aims
to understand the theoretical properties (e.g. approximation guarantees)
of algorithms widely used in practice, such as linkage methods
\citep{DASGUPTA2005555,DBLP:journals/corr/abs-1012-3697,DBLP:conf/esa/GrosswendtR15,arutyunova_et_al:LIPIcs.APPROX/RANDOM.2021.18,DBLP:conf/soda/GrosswendtRS19}.
Here, we contribute to this second line of research by
 presenting  a number of new analysis for the  \avglink \, algorithm.
}

\remove{Despite of its importance, little is known from a theoretical perspective.
It is not even clear what objective function it optimizes.
}

{\tt Average-link} is widely considered one of the most effective
 hierarchical clustering algorithms. 
 It belongs to the class of {\em agglomerative methods},
 that is, methods  that start with a set of $n$ clusters,
corresponding to the $n$ input points, and iteratively use 
a linkage rule to merge two clusters.
Due to its relevance,  we can find some recent works
dedicated to improving \avglink' efficiency and scalability  
\citep{DBLP:journals/pvldb/YuWGDS21,DBLP:conf/icml/DhulipalaELMS21,DBLP:conf/nips/DhulipalaELMS22,DBLP:journals/pacmmod/DhulipalaLLM23}
as well as  recent theoretical work that
try to understand its success in practice \citep{DBLP:journals/jacm/Cohen-AddadKMM19,DBLP:conf/aistats/CharikarCNY19,DBLP:journals/jmlr/MoseleyW23,DBLP:conf/soda/CharikarCN19}.

\remove{
The mainstream hierarchical clustering methods can be  divided into two
groups, the divisive methods (top-down) and the agglomerative ones (bottom-up).
Here, we focus on the latter.
Agglomerative methods  start with a set of $n$ clusters,
corresponding to the $n$ input points, and iteratively use 
a linkage rule to merge two clusters, so after $i$ iterations we have $n-i$ clusters. 
\avglink \, is widely considered one of the most effective
 linkage methods.
Due to its relevance we can find some recent works
dedicated to improve its efficiency and scalability
\citep{DBLP:journals/pvldb/YuWGDS21,DBLP:conf/icml/DhulipalaELMS21,DBLP:conf/nips/DhulipalaELMS22,DBLP:journals/pacmmod/DhulipalaLLM23}.
}


Most of the available theoretical works give approximation bounds for \avglink \, regarding   the cost function  introduced by \citep{DBLP:conf/stoc/Dasgupta16} as
well as for some
variants of it.
Let ${\cal D}$ be the tree induced by a hierarchical clustering.
Dasgupta's cost function and its variation for dissimilarities considered in \citep{DBLP:journals/jacm/Cohen-AddadKMM19} are, respectively, given by 
\begin{equation}
\label{eq:dasgupta-def}
\dasgupta({\cal D})= \sum_{a,b \in {\cal X}} \similarity(a,b)\cdot |D(a,b)| \,\, \mbox{ and } 
\,\, \dasguptadiss({\cal D})= \sum_{a,b \in {\cal X}} \diss(a,b) \cdot |D(a,b)|,
\end{equation}
where $\similarity(a,b)$ ($\diss(a,b)$) is the similarity (dissimilarity) of points $a$ and $b$; $D(a,b)$ is the subtree of 
${\cal D}$ rooted at the least common ancestor of the leaves
corresponding to $a$ and $b$, and $|D(a,b)|$ is the number of leaves in $D(a,b)$.
In general, the existing results show
that \avglink \, achieves constant approximation for variants of Dasgupta's function
while other linkage methods do not.

However,   there is significant room for further analysis due to the following reasons.
First, Dasgupta's cost function, despite its nice properties, is less interpretable than traditional cost functions that measure compactness and separability.
Second, although the analyses based on $\dasgupta$ and its variants allow to separate \avglink \, from 
other linkage methods as 
\singlelink\, and \complink \, in terms of approximation, they do not
separate \avglink \, from a random hierarchy 
\citep{DBLP:journals/jacm/Cohen-AddadKMM19,DBLP:journals/jmlr/MoseleyW23,DBLP:conf/soda/CharikarCN19}. Moreover, for the  case in which the points lie in a metric space  
every hierarchical clustering has $1/2$ approximation for the 
maximization of \dasguptadiss \, \citep{DBLP:conf/aaai/WangM20}, so this cost function is less appealing in this relevant setting. 
Finally,  to the best of our
knowledge, \dasgupta \,  does not reveal how good are the clusters generated for a specific range of $k$. As an example, small $k$ are important
for exploratory analysis while large $k$ is important for de-duplication tasks
\citep{DBLP:conf/kdd/KobrenMKM17}.


\remove{
In particular, we show that for some natural
criteria it has an exponentially better approximation than  \complink, \singlelink \,
and a random partition for  the low $k$ regime.
\red{Improve}
}


\subsection{Our results}
Motivated by this scenario,
we  present a comprehensive study of the performance
of \avglink \, in metric spaces, with regards to  several natural criteria that
capture separability and cohesion of clustering.
In a nutshell,  these results, as explained below,  show
that average link has  much better global properties than other popular heuristics
when these two important goals are taken into account.


Let  $({\cal X},\dist)$ be a metric space, where ${\cal X}$ is a set of $n$ points.
The diameter $\diamset(S)$ of a set of points $S$ is  given by
$\diamset(S)=\max\{\dist(x,y)|x,y \in S\}$.
For a cluster $A$ and for two clusters $A$ and $B$, let  
$$ \avg(A) = \frac{1}{{|A| \choose 2}} \sum_{x,y \in A }  \dist(x,y) \,\,
\mbox{ and } \,\, \avg(A,B) = \frac{1}{|A|\cdot|B|} \sum_{x \in A } \sum_{y \in B } \dist(x,y)$$

Let $\C=(C_1,\ldots,C_k)$ be a $k$-clustering for $({\cal X},\dist)$.
To study separability we consider the average ($\costavg$) and the minimum ($\costmin$)  $\avg$  
among clusters in $\C$, that is, 
\begin{equation}
\label{eq:separability}
 \costavg(\C):= \frac{1}{{k \choose 2}}\sum_{i \ne j} \avg(C_i,C_j) 
\,\, \mbox{ and } \,\,  \costmin(\C):= \min_{i \ne j} \{ \avg(C_i,C_j) \}, 
\end{equation}
On the other hand, for studying  cohesion, we consider the maximum diameter (\maxdiamset) and the maximum  average pairwise 
distance (\maxavgset) of the clusters in $\C$. In formulae,  

\begin{equation}
\label{eq:maxdiamset-avgset}
\maxdiamset(\C):=\max\{\diamset(C_i)|1 \le i \le k\} \,\, \mbox{ and } \,\, \maxavgset(\C):=\max\{\avg(C_i)|1 \le i \le k\}
\end{equation}

We also study natural optimization goals
that capture both the separability and the cohesion of a clustering.
We define the  $\goodnessav$ and $\goodnessdm$  of a clustering $\C$
as 
\begin{equation}
\label{eq:goodness-def}
 \goodnessav(\C):=
\frac{  \maxavgset(\C)  }{\costmin(\C)} \,\, \mbox{ and } \,\,
 \goodnessdm(\C):=
\frac{ \maxdiamset(\C) }{\costmin(\C)} 
\end{equation}


Let $\A^k$ be a $k$-clustering produced by \avglink.
We first prove through a simple inductive argument that $\goodnessav(\A^k) \le 1$.
This result  does not assume that the points in ${\cal X}$ lie
in a metric space and it is tight in the sense that there are instances in which
$\goodnessav(\C) = 1$ for every $k$-clustering $\C$.
For the  related $\goodnessdm$ criterion, we present a more involved analysis which shows that $\goodnessdm(\A^k)$ as well as the approximation of \avglink \,  regarding  \OPT (the
minimum possible $\goodnessdm$)  
  are  $O ( \log n)$; these bounds are nearly  tight
since there exists an instance for which
$\goodnessdm(\A^k) $ and $\goodnessdm(\A^k) / \mbox{OPT} $  are $ \Omega ( \frac{\log n}{\log \log n})$.
Both $\goodnessav$ and $\goodnessdm$  allow an exponential separation between 
\avglink  \, and other linkage methods,  as 
\singlelink \, and \complink.
Interestingly, in contrast to \dasguptadiss\, (Eq. \ref{eq:dasgupta-def}), our criteria also allow
a very clear separation between  \avglink \, and the clustering induced by a random hierarchy.




Next, we focus on separability criteria.
Let \OPTSEPK \, be the maximum possible $\costavg$ \,
 of a $k$-clustering
for $({\cal X},\dist)$. 
We show that
$\costavg(\A^k) $ is at least  $\frac{ \OPTSEPK}{k+2\ln n }$
and that this result is nearly tight.
Furthermore, we argue that any hierarchical clustering algorithm 
that has bounded approximation regarding   $\maxdiamset$ or  $\maxavgset$ does not have  approximation
better than $1/k$ to $\costavg$.
Regarding  \singlelink \, and \complink, we
present  instances that show that their approximation with respect to $\costavg$ are exponentially worse than 
that of \avglink, for the relevant case that $k$ is small.

\remove{
 $I$ and $I'$ for which the clustering
$\S$ and $\C$, respectively,  build by these methods satisfy 
$\costavg(\S)$ is  $O(\frac{\OPTSEPK}{\sqrt{n}})$ and  
$\costavg(\C) $ is $O(\frac{\OPTSEPK}{\sqrt{n}})$.
}

We also investigate the cohesion of \avglink.
For a $k$-clustering $\C$, let \avgdiamset \, be the average
diameter of the $k$ clusters in $\C$.
Let \OPTDIMK and \OPTAVGK be, respectively, the minimum possible  \maxdiamset \,  and \avgdiamset \, of a $k$-clustering
for $({\cal X},\dist)$. 
We  prove that for all $k$, 
$\maxdiamset(\A^k)   \le \min \{k,1+ 4 \ln n\} k^{\log_2 3} \OPTAVGK $.
This result together with the instance given by Theorem 3.4 of \citep{DasguptaLaber24} allow to
separate \avglink  \, from \singlelink, in terms of
approximation,  when $k$ is $\Omega( \log^{2.41} n)$. 
We also show that
$\maxdiamset(\A^k)$ is $\Omega(k) \OPTDIMK$,
which is, to the best of our knowledge,  the first lower bound on
 the maximum diameter of \avglink.



Finally, to {\bf complement} our study, we present some experiments with 10 real datasets in which we
evaluate, to some extent, if our theoretical results line
up with what is observed in practice. These experiments
conform with our theoretical results since they also suggest that \avglink \, performs better
than other related methods when both cohesion and separability are taken
into account.

\subsection{Related work}
There is a vast literature about hierarchical agglomerative clustering
methods. 
Here, we focus on  works that provide provable guarantees for 
\avglink \, and some other well-known linkage methods.


\noindent {\bf Average-link}.
There are  works that present bounds on the approximation of  \avglink
\, regarding  some criteria \citep{DBLP:journals/jacm/Cohen-AddadKMM19,DBLP:conf/soda/CharikarCN19,DBLP:conf/aistats/CharikarCNY19,DBLP:journals/jmlr/MoseleyW23,DasguptaLaber24}.
All these works but  \citep{DasguptaLaber24} analyse
the approximation of \avglink \, 
regarding variants of 
Dasgupta's cost function.
\citep{DBLP:journals/jmlr/MoseleyW23}
assumes that the proximity between the points
in ${\cal X}$ is given by a similarity matrix.
They  show that \avglink \, is a $1/3$-approximation with
respect to the "dual" of Dasgupta's cost function.
\citep{DBLP:journals/jacm/Cohen-AddadKMM19}, as in  our work, assumes
that the proximity between points in ${\cal X}$ is 
given by a dissimilarity measure and shows
that \avglink \, has $2/3$ approximation for the problem of maximizing
\dasguptadiss\,  (Eq. \ref{eq:dasgupta-def}).
\citep{DBLP:conf/soda/CharikarCN19} show that these
 approximation ratio for
\avglink \, are tight. These papers also show that a random hierarchy obtained by a divisive heuristic that randomly splits the set of points in each cluster matches the $1/3$ and $2/3$ bounds.

\citep{DasguptaLaber24} presents an interesting approach to derive upper  bounds on cohesion criteria for a certain class
of linkage methods that includes \avglink.
They show that  $\avg(A)  \le k^{1.59} \OPTAVGK$ for every cluster $A \in \A^k$. 
Our bound on the maximum diameter of a cluster
in $\A^k$ incurs an extra factor
of $\min \{k, 1+4 \ln n \}$ to this bound and 
its proof combines their approach
with some new ideas/analyses.

 
\noindent {\bf Other Linkage Methods}.
There are also works that give bounds on the 
diameter of the clustering built by  \complink  \, and \singlelink \,
 on metric spaces \citep{DASGUPTA2005555,DBLP:journals/corr/abs-1012-3697,DBLP:conf/esa/GrosswendtR15,
 arutyunova2023upper,DasguptaLaber24}.
 Let $\C$ and ${\cal S}$ be the $k$-clustering 
 built by these methods, respectively.
  \citep{ arutyunova2023upper} shows that $\maxdiamset(\C)$  is $\Omega( k \OPTDIMK)$ 
 while 
\citep{DasguptaLaber24} shows that $\maxdiamset(\C)$  is
$O( \min \{ k^{1.30} \OPTDIMK, k^{1.59} \OPTAVGK\})$.
Regarding \singlelink,
$\maxdiamset({\cal S})$
is $\Theta( k \OPTDIMK)$ \citep{DASGUPTA2005555,arutyunova2023upper} and 
 $\Omega( k^2 \OPTAVGK)$ \citep{DasguptaLaber24}.
\citep{DBLP:journals/corr/abs-1012-3697,DBLP:conf/esa/GrosswendtR15}
give bounds for the case in which $\dist$ is the Euclidean metric.

In terms of separability criteria, it is well known that  \singlelink \,  maximizes
the minimum spacing of a clustering \citep{DBLP:books/daglib/0015106}[Chap 4.7].
Recently, \citep{LM23-Nips} observed that it also maximizes the cost of the minimum spanning tree spacing, a stronger
criterion. These criteria, in contrast to ours, just take
into account the minimum distance between points in different clusters and then they 
can be significantly impacted by noise.

\citep{DBLP:conf/soda/GrosswendtRS19}
shows that Ward's method gives a 2-approximation for $k$-means when the optimal
clusters are well-separated.

\remove{  
 \cite{DASGUPTA2005555, arutyunova2023upper} show
lower bounds on the diameter of the cluster
produced by these methods when the points lie in a metric space.
The second paper also present upper bounds.
\citep{DBLP:journals/corr/abs-1012-3697,DBLP:conf/esa/GrosswendtR15}
present analysis for the case in which $\dist$ is the Euclidean metric.
\cite{DasguptaLaber24} XXX
\cite{DBLP:conf/soda/GrosswendtRS19}
shows that Ward's method gives a 2-approximation for $k$-means when the optimal
clusters are far apart.
}

\remove{

 \cite{DBLP:conf/stoc/Dasgupta16} introduced a cost function that
is defined over the tree
induced by a hierarchical clustering  and proposed algorithms to optimize it.
 \cite{DBLP:journals/jacm/Cohen-AddadKMM19,DBLP:journals/jmlr/MoseleyW23} show that \avglink \, achieves constant approximation  with respect to cost functions related
to the one proposed by \cite{DBLP:conf/stoc/Dasgupta16}.
\cite{DBLP:conf/soda/CharikarCN19} proved that the analysis of   
\cite{DBLP:journals/jmlr/MoseleyW23} is tight.

\remove{\cite{DBLP:journals/jacm/Cohen-AddadKMM19} show that \avglink \, attains a 2-approximation for the cost function proposed by \cite{DBLP:conf/stoc/Dasgupta16} and the
 proximity between points of ${\cal X}$ are given by a dissimilarity measure.
In  \cite{DBLP:journals/jmlr/MoseleyW23}
introduced a cost function that can be seen as a dual of the one  proposed
in \cite{DBLP:conf/stoc/Dasgupta16}.  For this cost function they
 show that \avglink \, has  a constant factor optimization while
\complink\, and \singlelink\, have super-constant worst-case approximations.
In \cite{DBLP:conf/soda/CharikarCN19} it is shown that the bound of 
\cite{DBLP:journals/jmlr/MoseleyW23} for the \avglink \, is tight.
}

Cite papers from google (Jakub Lacki trying to speed up the method (relevance)

\noindent {\bf Complete-link}. Several upper and lower bounds are known on the approximation factor for \complink \, with respect to the maximum diameter. 
When ${\cal X}= \mathbb{R}^d$, $d$ is constant and $dist$ is the Euclidean metric, 
\cite{DBLP:journals/corr/abs-1012-3697} proved that \complink \, is 
an $O(\log k \cdot \OPTDIMK)$ approximation.
This was improved by \cite{DBLP:conf/esa/GrosswendtR15} to $O(\OPTDIMK)$.
The dependence on $d$ is doubly exponential.

For general metric spaces,
\cite{DASGUPTA2005555} showed that there are instances
for which the maximum diameter
of the    $k$-clustering built by \complink \, is  $\Omega(\log k \cdot \OPTDIMK)$. 
In  \cite{arutyunova_et_al:LIPIcs.APPROX/RANDOM.2021.18} this lower bound
was improved to $\Omega(k \cdot \OPTDIMK)$. Moreover, the same paper
showed that the maximum diameter of
\complink's  \, $k$-clustering is  $O(k^{2} \OPTDIMK)$.  This result was recently improved by the same authors to 
$O(k^{1.59} \OPTDIMK)$ \cite{arutyunova2023upper}.  
This result was then improved to
$O(\min\{k^{1.59} \OPTAVGK, k^{1.30} \OPTDIMK\})$ \cite{laber24},
where  $\avgdiamset(\C):=\frac{1}{k} \sum_{i=i}^k \diamset(C_i).$
It is noteworthy by using $\OPTAVG$ rather than $\OPTDIM$,
it is possible to separate  \complink \, from \singlelink \,
since the same work show instances in which the maximum diameter of the latter is
$\Omega(k^2 \OPTAVGK)$. 
When  $\OPTDIM$ is employed, unexpectedly, as pointed out in \cite{arutyunova2023upper}, this separation is not possible
since the maximum diameter of \complink \, is 
  $\Omega(k \OPTDIMK)$ while that of \singlelink \, is $\Theta(k \OPTDIMK)$.

\cite{arutyunova2023upper} also analysed {\tt minimax} \cite{BienTib2011}, a linkage method 
related to \complink, that merges at each iteration the two clusters $A$ and $B$
for which $A \cup B$ has the minimum ratio. They show that the
maximum diameter of the $k$-clustering built by  {\tt minimax} is  $\Theta(k \OPTDIMK)$.
One disadvantage of this method is that while \complink \, admits an $O(n^2)$ implementation
\cite{DBLP:journals/cj/Defays77}, no sub-cubic time implementation for minimax method is known \cite{BienTib2011}.

\noindent {\bf Single-link}. 
Among linkage methods, \singlelink \, is likely the one with the most extensive theoretical analysis \cite{DBLP:books/daglib/0015106,DASGUPTA2005555,arutyunova2023upper,LM23-Nips}.

The works of \cite{DASGUPTA2005555,arutyunova2023upper} are
those that are more related to ours.
The former  shows that  $\Omega( k  \cdot  \OPTDIMK)$ is a lower bound
on   the maximum diameter of {\tt Single-Link} while 
the latter proves that this bound is tight. 
We note that our $\Omega( k^2  \cdot  \OPTAVGK)$ lower bound improves over
that of \cite{DASGUPTA2005555} since $k \OPTAVGK \ge \OPTDIMK$. 
 
\remove{FINAL VERSION? It is well-known that \singlelink \, maximizes the minimum spacing
among different clusters \cite{DBLP:books/daglib/0015106}.
Recently, it was shown in \cite{LM23-Nips} that it also maximizes the minimum
spanning tree spacing, a criterion that is stronger than the maximum spacing.}



\remove{ 
 \cite{arutyunova_et_al:LIPIcs.APPROX/RANDOM.2021.18}
also considers the a variation of \complink \, where instead
of joining the two clusters $A$ and $B$ for which 
$\max_{x \in A} \max_{y \in B} dist(x,y)$ is minimized, it chooses the cluster $A$ and $B$
for which $\min_{x \in A} \max_{y \in B} dist(x,y)$ is minimized.
For this variant, known in the literature \cite{10.1093/bioinformatics/bti201} as minimax, they  proved a tight $\Theta(k \OPTDIM)$ bound.
}

\noindent {\bf Ward}. Another popular linkage method was proposed by \cite{Ward63}.  
\cite{DBLP:conf/soda/GrosswendtRS19}
shows that Ward's method gives a 2-approximation for $k$-means when the optimal
clusters are far apart.

}

\section{Preliminaries}
 Algorithm \ref{alg:hac} shows a pseudo-code for \avglink.
The  function $\dist_{AL}(A,B)$ at line \ref{lin:dist} that measures the distance between clusters $A$ and $B$ is given by
$$\dist_{AL}(A,B):=\frac{1}{|A||B|} \sum_{ a \in A}  \sum_{ b \in B}  \dist(a,b).$$
\singlelink \, and \complink \, are obtained by replacing 
$\dist_{AL}$, in Algorithm \ref{alg:hac}, with 
$\dist_{SL}(A,B):=\min\{\dist(a,b)| (a,b)\in A \times B \}$ and 
$ \dist_{CL}(A,B):=\max\{\dist(a,b)| (a,b)\in A \times B\},$
respectively.


\small
\begin{algorithm}
\small

  \caption{{\sc H\avglink}(${\cal X}$,dist,dist$_{\L}$) }
   \begin{algorithmic}[1]

\STATE 
 ${\cal A}^{n} \gets$ clustering with $n$ unitary clusters, each one containing a point of
${\cal X}$

\STATE 
 {\bf For}  $i=n-1$ down to $1$     
 \STATE  \hspace{0.2cm} $(A,B) \gets$ clusters in ${\cal A}^{i+1}$ for which $\dist_{AL}(A,B)$ is minimum \label{lin:dist}
 \STATE  \hspace{0.2cm} ${\cal A}^{i} \gets {\cal A}^{i+1} - \{A\} - \{B\} \cup \{A \cup B\}$
   \end{algorithmic}
   \caption{Average Link}
   \label{alg:hac}
\end{algorithm}

\normalsize

A version of the triangle
inequality for averages will be employed a number of times
in our analyses. Its proof can be found in Section \ref{sec:triangle-inequality}.

\begin{proposition}[Triangle Inequality for averages]
Let $A$, $B$ and $C$ be three clusters.
Then,
$$ \avg(A,C) \le \avg(A,B)+ \avg(B,C).$$
\label{prop:triangle-ineq}
\end{proposition}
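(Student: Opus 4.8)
The plan is to reduce the statement to the ordinary metric triangle inequality applied pointwise, and then average. First I would fix arbitrary points $x \in A$ and $z \in C$ together with an arbitrary intermediate point $y \in B$. Since $(\mathcal{X},\dist)$ is a metric space, the usual triangle inequality gives $\dist(x,z) \le \dist(x,y) + \dist(y,z)$. The crucial observation is that the left-hand side does not depend on $y$, so this inequality holds simultaneously for all $y \in B$.

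Next I would average over the intermediate point. Averaging both sides of the pointwise bound over $y \in B$ yields
$$\dist(x,z) \;\le\; \frac{1}{|B|}\sum_{y \in B}\dist(x,y) \;+\; \frac{1}{|B|}\sum_{y \in B}\dist(y,z),$$
which is valid for every fixed $x \in A$ and $z \in C$. The right-hand side has now been split into a part depending only on $x$ (through its average distance to $B$) and a part depending only on $z$.

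I would then sum over all $x \in A$ and $z \in C$ and normalize by $|A|\cdot|C|$, so that the left-hand side becomes exactly $\avg(A,C)$. For the first term on the right, the summand is independent of $z$, so summing over $z \in C$ contributes a factor $|C|$ that cancels, leaving $\frac{1}{|A|\,|B|}\sum_{x \in A}\sum_{y \in B}\dist(x,y) = \avg(A,B)$; symmetrically, the second term is independent of $x$, and summing over $x \in A$ leaves $\frac{1}{|B|\,|C|}\sum_{y \in B}\sum_{z \in C}\dist(y,z) = \avg(B,C)$. Combining these gives $\avg(A,C) \le \avg(A,B) + \avg(B,C)$, as desired.

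There is no genuinely hard step here: the only point requiring care is the bookkeeping of the normalization constants, namely checking that the factors $|A|$, $|B|$, $|C|$ combine correctly when passing from the pointwise inequality to the averaged one. I would present the argument in the order above precisely so that each summation index is eliminated against the matching cardinality factor, making the cancellations transparent.
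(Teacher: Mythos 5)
Your proof is correct and follows essentially the same route as the paper's: apply the pointwise triangle inequality through each $b \in B$, sum (or equivalently average) over $B$, and then sum over $A \times C$ with the appropriate normalization. The only difference is cosmetic — you divide by the cardinalities as you go, whereas the paper multiplies through and divides by $|A|\cdot|B|\cdot|C|$ at the end.
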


\remove{\begin{proof}
Let $a \in A$ and $c \in C$.
Then, $\dist(a,c) \le \dist(a,b) + \dist(b,c)$ for every $b \in B$.
Thus,
$$ |B| \dist(a,c) \le \sum_{b \in B } (\dist(a,b) + \dist(b,c))$$
It follows that
\begin{align*} |B|  \sum_{a \in A} \sum_{c \in C} \dist(a,c) \le   
 \sum_{a \in A} \sum_{c \in C} ( \sum_{b \in B } (\dist(a,b) + \dist(b,c))) = \\
  |C| \sum_{a \in A} \sum_{b \in B } \dist(a,b) + 
   |A| \sum_{b \in B} \sum_{c \in C } \dist(b,c) 
 \end{align*}
Dividing both sides by $|A|\cdot |B| \cdot |C|$ we establish the inequality.
\end{proof}
}

For two
disjoint clusters $A$ and $B$,  the following identity holds
$${(|A|+|B|) \choose 2} \avg(A \cup B) = { |A| \choose 2}\avg(A)+
 |A||B| \avg(A,B)+  
{ |B| \choose 2}\avg(B). $$
Dividing both sides by ${(|A|+|B|) \choose 2}$, we conclude   that $\avg(A \cup B) $ is a convex combination
of $\avg(A),\avg(B)$ and $\avg(A,B)$, a fact will be used a couple
of times in our analyses.

The following  notation
will be used throughout the text.
We use 
 $H_p=\sum_{i=1}^p \frac{1}{i}$ to denote the
$p$th harmonic number and $\A^k$ to 
refer to the $k$-clustering obtained by \avglink \, for
the instance under consideration, which will always be
clear from the context. 


\section{Cohesion and separability} 
\label{sec:separability-cohesion}
In this section, we 
 analyze the performance of 
\avglink \, with respect to both $\goodnessav$ and $\goodnessdm$ (Eq. \ref{eq:goodness-def}), criteria that simultaneously take
into account the separability and the cohesion of a clustering. Moreover, we contrast its performance with that achieved by other linkage methods.

\subsection{The $\goodnessav$ \, criterion}
\label{subsec:separability-cohesion}

We first show that $\goodnessav(\A^k) \le 1$.
The proof of this result can be found   in Section \ref{sec:thm:goodnessav},
it uses induction on the number
of iterations of \avglink \, together with a fairly simple case analysis. 

\begin{theorem}
\label{thm:goodnessav}
Let $\A^k$ be a $k$-clustering built by \avglink.
Then, for every $k$, 
 $\goodnessav(\A^k) \le 1$.
\end{theorem}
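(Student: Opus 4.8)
The plan is to unfold the definition and prove, for every clustering $\A^i$ in the sequence produced by \avglink, the equivalent inequality $\maxavgset(\A^i) \le \costmin(\A^i)$; since $\goodnessav(\A^k)=\maxavgset(\A^k)/\costmin(\A^k)$, this yields $\goodnessav(\A^k)\le 1$ at once. I would argue by induction on the iteration index $i$, running from $n$ down to $1$. Observe that only nonnegativity of $\dist$ is used, so the metric assumption is not needed, matching the remark in the paper. For the base case $i=n$, all clusters are singletons, hence $\maxavgset(\A^n)=0$ while $\costmin(\A^n)=\min_{x\ne y}\dist(x,y)\ge 0$, giving the invariant.

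For the inductive step, assume $\maxavgset(\A^{i+1}) \le \costmin(\A^{i+1})$, and let $A,B$ be the pair merged to form $C=A\cup B$ when passing from $\A^{i+1}$ to $\A^i$. The crucial observation, and the only place the linkage rule enters, is that because \avglink merges the pair minimizing $\dist_{AL}(\cdot,\cdot)=\avg(\cdot,\cdot)$, the quantity $m:=\avg(A,B)$ is exactly $\costmin(\A^{i+1})$, the minimum inter-cluster average in $\A^{i+1}$. I would then bound the two sides of the invariant separately.

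For the cohesion (numerator) side, I use the convex-combination identity recalled in the preliminaries: $\avg(C)$ is a convex combination of $\avg(A)$, $\avg(B)$, and $\avg(A,B)=m$. By the inductive hypothesis $\avg(A),\avg(B)\le \maxavgset(\A^{i+1})\le m$, so $\avg(C)\le m$; every other cluster is unchanged from $\A^{i+1}$ and already has intra-average at most $m$. Hence $\maxavgset(\A^i)\le m$. For the separability (denominator) side, any inter-cluster average involving $C$ satisfies $\avg(C,D)=\frac{|A|\,\avg(A,D)+|B|\,\avg(B,D)}{|A|+|B|}$, a convex combination of $\avg(A,D)$ and $\avg(B,D)$, both at least $m=\costmin(\A^{i+1})$; averages between two clusters carried over from $\A^{i+1}$ are likewise $\ge m$. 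Hence $\costmin(\A^i)\ge m$. Combining the two bounds gives $\maxavgset(\A^i)\le m\le \costmin(\A^i)$, closing the induction.

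I expect no serious technical hurdle here; the argument is a clean induction. The one point requiring care is the identification $m=\costmin(\A^{i+1})$, which is precisely where the greedy linkage choice is invoked, and then matching the two convex-combination facts to the correct inequality directions, namely using convexity to \emph{upper}-bound the newly created intra-average and to \emph{lower}-bound the newly created inter-averages.
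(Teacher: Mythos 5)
Your proof is correct and follows essentially the same route as the paper's: induction on the merge sequence, using the greedy choice of \avglink\ together with the two convex-combination identities for $\avg(A\cup B)$ and $\avg(A\cup B,D)$. Your organization is slightly cleaner — by pivoting both bounds through the single quantity $m=\avg(A,B)=\costmin(\A^{i+1})$ you collapse the paper's four-case analysis into two one-sided inequalities — but the mathematical content is identical.
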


We note that the above result  does not assume the
triangle inequality and it is tight 
in the sense that for the instance  $({\cal X},\dist)$, in which the $n$ points
of ${\cal X}$ have pairwise distance 1,
 every clustering has $\goodnessav$ equal to $1$.

In Section \ref{sec:sepcoh-other-linkage},
we present instances which
show
that $\goodnessav$ can be $\Omega(n)$, $\Omega(\sqrt{n})$  and unbounded in terms of $n$
for  \singlelink, \complink\, and a random hierarchy, respectively. 
Interestingly, all the $k$-clustering, with  $2 < k \le n/2$,
induced by the hierarchical clustering obtained by these methods
satisfy these bounds. 
Furthermore, since $\goodnessdm(\C) \ge \goodnessav(\C) $
for every clustering $\C$, these bounds also hold
for the $\goodnessdm$ criterion.


\remove{
 
We briefly discuss the connection of Theorem \ref{thm:goodnessav} with 
the bound on \avglink \, presented in  \cite{DBLP:journals/jacm/Cohen-AddadKMM19}.
Theorem \ref{thm:goodnessav} implies that 
that if we have two clusters $A$ and $B$ that are
merged by \avglink \, then $\avg(A) \le \avg(A,B) $.
By  rewriting $\avg(A)$ and
$\avg(A,B)$ in terms of $w(A)$ and $w(A,B)$, respectively, we get that
\begin{equation}
(|A|-1) w(A,B) \ge 2w(A)|B| 
\label{eq:16april}
\end{equation}
 This inequality is slightly stronger  than the statement of Lemma 4.4 from \cite{DBLP:journals/jacm/Cohen-AddadKMM19}, the key step
employed to prove that  \avglink \, has $2/3$ approximation   for  the maximization of Dasgupta's function in the case
that the proximity between points is given by a dissimilarity measure. 
It was also proved that   \singlelink  \, and \red{\complink} \, have poor performance for this function.
Proposition  \ref{prop:ineq-metricspace} below shows that an inequality slightly weaker than (\ref{eq:16april}) holds for every pair of clusters if the points lie in a metric space.
By using this result we can easily modify the proof of Theorem 4.3 from \cite{DBLP:journals/jacm/Cohen-AddadKMM19} and  conclude that any clustering, in  
 a metric space, has a (1/3)-approximation for the variant of  Dasgupta's function.
\red{The statement is true in equation
(8) split $(V(A)+V(B)*w(A,B)$ into
$2/3 (V(A)+V(B)*w(A,B) + 1/3 (V(A)+V(B)*w(A,B)$.
Then use the inequality $(V(A)+V(B)*w(A,B)>|B|w(A)+|A|w(B)$
}

\begin{proposition}
Let $({\cal X}, \dist)$ a metric space.
Let $A$ and $B$ be clusters that belong to some clustering $\C$.
Then,
$ (|A|-1)w(A,B)\ge |B|w(B).$
\label{prop:ineq-metricspace}
\end{proposition}
\begin{proof}
Fix $a,a' \in A$ and $b \in B$.
By the triangle inequality,
$\dist(a,a') \le \dist(a,b)+\dist(b,a').$
By adding the above inequality for  all $a,a'$ we
get that 
$$w(A) \le (|A|-1) \sum_{a \in A} \dist(a,b).$$  
Then, by adding for all $b$ we get
$|B|w(A) \le (|A|-1) w(A,B)$.
By dividing both sides by $|B||A|(|A|-1)$ we
get the desired result.
\end{proof}

\remove{
\red{
We note, however, that from the above observations
one can conclude that  in a metric space any clustering has a (1/4) approximation
for Dasgupta's function.}

\red{Indeed, if the triangle inequality is assumed,
then any clustering $\C$  satisfies $\goodnessav({\cal C}) \le 2$. 
}}

}

A natural question that arises is whether
\avglink \, has a "good" approximation with respect to
$\goodnessav$.
Unfortunately,  the answer is no. In fact, 
 in Section \ref{sec:approx-goodnessav} we show an instance 
 where the approximation is unbounded in terms of $n$.
However, as we show in the next section, 
\avglink \, has a logarithmic  approximation with respect to
$\goodnessdm$.

\subsection{The $\goodnessdm$  criterion} 
 
We analyze the  $\goodnessdm$  of \avglink.
The results of this section  will have an important role in the analysis of both
the separability and cohesion of 
\avglink \, presented further.

First, we show that for every cluster $X$ in $\A^k$, the average distance
of a point $x \in X$ to the other points in $X-x$ is at most
a logarithmic factor of the average distance between any two clusters $Y$ and $Z$.
The proof can be found in Section \ref{sec:proof-outlier-0}.
Let $T_{i-1}$ be the cluster that contains $x$ before the $i$th merge involving $x$
and let $S_i$ be the cluster that is merged with $T_{i-1}$.
We prove by induction that $\avg(x,T_i -x) \le \ln H_{|T_i|-1} \avg(Y,Z)$, which implies
on the desired result  because $T_t=X$ for some $t$.
To establish the induction, we use the triangle inequality to write $\avg(x,T_i -x)$ as 
a function of both $\avg(x,T_{i-1} -x)$ and $\avg(T_{i-1},S_i)$,  and also argue that  $\avg(T_{i-1},S_i) \le \avg(X,Y)$.


\remove{ It is noteworthy that 
this theorem will have a key role in the analysis of the separability of
\avglink \, in the next section.
}
\remove{For \complink, consider the example presented a the end of this
section but with $k=3$ and additional 2 points $p$ and $q$ at distance $1.5\sqrt{n}$.
\complink \, builds the clustering $(\{p,q\},X\cup A,Y \cup B)$.
We have that $\diamset(\{p,q\})= 1.5 \sqrt{n}$ whilse 
$\avg(X\cup A,Y \cup B) $ is $O(1)$.
}




\begin{lemma} Let $X$, $Y$  and $Z$, with $|X| \ge 2$ and $Y \ne Z$,  be clusters of
$\A^k$.
Then, for every $x \in X$, 
we have that
$\avg(x,X) \le \avg(x,X-x) \le H_{|X|-1} \avg(Y,Z).$ 
\label{lem:outlier-0}
\end{lemma}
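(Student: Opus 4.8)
I would first dispatch the easy inequality: since $\dist(x,x)=0$,
\begin{equation*}
\avg(x,X)=\frac{1}{|X|}\sum_{y\in X-x}\dist(x,y)=\frac{|X|-1}{|X|}\,\avg(x,X-x)\le \avg(x,X-x),
\end{equation*}
so everything reduces to the second inequality. For that I would follow the route sketched in the text: enumerate the merges involving $x$. Set $T_0=\{x\}$, and for $i\ge1$ let $T_i=T_{i-1}\cup S_i$ be the cluster containing $x$ right after its $i$th merge, where $S_i$ is the cluster absorbed at that step; after $t$ merges $T_t=X$. The plan is to prove by induction on $i$ that $\avg(x,T_i-x)\le H_{|T_i|-1}\,\avg(Y,Z)$, so that $i=t$ yields the claim.

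The heart of the argument, and the step I expect to be the main obstacle, is the greedy estimate $\avg(T_{i-1},S_i)\le\avg(Y,Z)$ holding at the iteration that merges $T_{i-1}$ and $S_i$. Since $X\supseteq T_i$ is fully assembled only by the time $\A^k$ is reached, the clustering immediately before this merge still has at least $k+1$ clusters; as all subsequent operations are unions culminating in $\A^k$, each current cluster is contained in a single cluster of $\A^k$. In particular $Y$ and $Z$ split into current clusters $Y=\bigcup_a Y_a$ and $Z=\bigcup_b Z_b$ with $Y_a\subseteq Y$ and $Z_b\subseteq Z$. Because $X$ is disjoint from both $Y$ and $Z$, every pair $(Y_a,Z_b)$ is a pair of distinct current clusters, so the average-link selection rule gives $\avg(T_{i-1},S_i)\le\avg(Y_a,Z_b)$ for all $a,b$. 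Expanding over the blocks, $\avg(Y,Z)=\sum_{a,b}\frac{|Y_a|\,|Z_b|}{|Y|\,|Z|}\avg(Y_a,Z_b)$ is a convex combination of quantities each at least $\avg(T_{i-1},S_i)$, which yields the estimate.

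Granting this, the inductive step is routine. Splitting the sum over the disjoint union $T_i-x=(T_{i-1}-x)\cup S_i$ and writing $a=|T_{i-1}|-1$, $s=|S_i|$,
\begin{equation*}
\avg(x,T_i-x)=\frac{a\,\avg(x,T_{i-1}-x)+s\,\avg(x,S_i)}{a+s}.
\end{equation*}
I would bound $\avg(x,S_i)$ via the triangle inequality for averages (Proposition \ref{prop:triangle-ineq}) with middle cluster $T_{i-1}$, obtaining $\avg(x,S_i)\le\avg(x,T_{i-1})+\avg(T_{i-1},S_i)\le\avg(x,T_{i-1}-x)+\avg(Y,Z)$, where I used $\avg(x,T_{i-1})\le\avg(x,T_{i-1}-x)$ and the greedy estimate. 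Substituting gives $\avg(x,T_i-x)\le\avg(x,T_{i-1}-x)+\frac{s}{a+s}\avg(Y,Z)$, and the induction hypothesis $\avg(x,T_{i-1}-x)\le H_a\,\avg(Y,Z)$ reduces matters to the numerical bound $H_a+\frac{s}{a+s}\le H_{a+s}$. This holds since $\frac{s}{a+s}=\sum_{j=a+1}^{a+s}\frac{1}{a+s}\le\sum_{j=a+1}^{a+s}\frac{1}{j}=H_{a+s}-H_a$. The base case $i=1$ is direct, as $T_1-x=S_1$ and $\avg(x,S_1)=\avg(T_0,S_1)\le\avg(Y,Z)\le H_{|S_1|}\,\avg(Y,Z)$ by the greedy estimate. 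Taking $i=t$ completes the proof.
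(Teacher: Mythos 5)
Your proposal is correct and follows essentially the same route as the paper: the same telescoping over the merges $T_i = T_{i-1}\cup S_i$, the same recursion $\avg(x,T_i-x)\le \avg(x,T_{i-1}-x)+\frac{|S_i|}{|T_i|-1}\avg(Y,Z)$ obtained from the triangle inequality for averages, the same harmonic-sum bound, and the same greedy estimate $\avg(T_{i-1},S_i)\le\avg(Y,Z)$ justified by decomposing $Y$ and $Z$ into the clusters present at that iteration (you bound all block pairs and average, the paper picks one block pair below the average — the same argument in a different order).
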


The next result is a simple consequence of the previous one.
 
\begin{theorem} 
\label{thorem:bounded-diam}
Let $k \ge 2$ and let $X$, $Y$  and $Z$, with $Y \ne Z$, be   clusters of a $k$-clustering
built by \avglink.
Then,
$\diamset(X) \le 2 H_{|X|-1}  \avg(Y,Z).$ 
\end{theorem}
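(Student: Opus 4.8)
The plan is to derive the diameter bound directly from Lemma~\ref{lem:outlier-0} by combining it with the ordinary triangle inequality, so this really is a one-step reduction. First I would dispose of the degenerate case $|X|=1$: here $\diamset(X)=0$ and $H_{|X|-1}=H_0=0$, so the claimed inequality reads $0\le 0$ and holds trivially. Hence I may assume $|X|\ge 2$, which is precisely the hypothesis needed to invoke the lemma for the cluster $X$ of $\A^k$ together with $Y\ne Z$.

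Next, let $u,v\in X$ be a pair realizing the diameter, i.e.\ $\dist(u,v)=\diamset(X)$. For every $w\in X$ the triangle inequality gives $\dist(u,v)\le \dist(u,w)+\dist(w,v)$. Summing over all $w\in X$ (the terms $w=u$ and $w=v$ only help, since the added distances are nonnegative) yields
\[
|X|\,\diamset(X)\;\le\;\sum_{w\in X}\dist(u,w)+\sum_{w\in X}\dist(w,v).
\]
Because $\dist(u,u)=0$, the first sum equals $\sum_{w\in X-u}\dist(u,w)=(|X|-1)\,\avg(u,X-u)$, and symmetrically the second equals $(|X|-1)\,\avg(v,X-v)$.

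Finally I would apply Lemma~\ref{lem:outlier-0} to both $u$ and $v$: since $X,Y,Z$ are clusters of $\A^k$ with $|X|\ge 2$ and $Y\ne Z$, we have $\avg(u,X-u)\le H_{|X|-1}\,\avg(Y,Z)$ and $\avg(v,X-v)\le H_{|X|-1}\,\avg(Y,Z)$. Substituting into the displayed inequality and dividing by $|X|$ gives
\[
\diamset(X)\;\le\;\frac{2(|X|-1)}{|X|}\,H_{|X|-1}\,\avg(Y,Z)\;\le\;2\,H_{|X|-1}\,\avg(Y,Z),
\]
which is the claim. I do not anticipate a genuine obstacle: the content is entirely carried by Lemma~\ref{lem:outlier-0}, and the only bookkeeping worth a moment's care is noting $(|X|-1)/|X|\le 1$ to absorb the factor and confirming that the $|X|=1$ boundary is handled by the convention $H_0=0$.
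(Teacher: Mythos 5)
Your proof is correct and follows essentially the same route as the paper: take a diameter-realizing pair, pass through the average over $X$ via the triangle inequality, and invoke Lemma \ref{lem:outlier-0} twice. The only cosmetic difference is that you unpack the paper's one-line ``triangle inequality for averages'' ($\dist(x,x')\le\avg(x,X)+\avg(X,x')$, i.e.\ Proposition \ref{prop:triangle-ineq} with singleton endpoints) into an explicit sum over $w\in X$, arriving at the same bound with the same (harmless) slack factor $(|X|-1)/|X|$.
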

\begin{proof}
If $|X|=1$ the result  holds  because
$\diamset(X)=0$.
Thus, we assume that $|X|>1$.
Let $x$ and $x'$ be such that 
$\dist(x,x')=\diamset(X)$.
We have that 
$$\dist(x,x') \le \avg(x,X)+\avg(X,x') \le
2 H_{|X|-1}  \avg(Y,Z) $$ 
where the first inequality follows from the triangle inequality
 and the second one due to 
Lemma \ref{lem:outlier-0}.
\end{proof} 

The next theorem shows that $\goodnessdm(\A^k) \le 2 H_n$
and that \avglink \, has a logarithmic approximation for the 
$\goodnessdm$ criterion.  
The first upper bound is a simple consequence of Theorem \ref{thorem:bounded-diam}.
Let  $\OPT$ be the minimum possible 
$\goodnessdm$.
To prove the bound on the approximation we consider two cases.
If $\OPT  \ge 1/3$  the result holds because $\goodnessdm(\A^k) \le 2\ln n \le 6 \OPT \ln n  $. If $\OPT<1/3$,
we argue that the clusters in the optimal clustering are "well separated" and,
hence, \avglink \, builds
the optimal clustering.

\begin{theorem} 
\label{thorem:cs-ratio-diam}
For all $k$, the $k$-clustering  $\A^k$ built by
\avglink \, satisfies $ \goodnessdm(\A^k) \le 2 H_n$.
Furthermore,  for all $k$, $ \goodnessdm(\A^k)$ is
$O( \log n) \cdot \OPT$
where $ \OPT$ is 
$\goodnessdm$ of the $k$-clustering with minimum possible 
$\goodnessdm$. 
\end{theorem}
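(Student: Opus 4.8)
My plan is to derive both claims from Theorem~\ref{thorem:bounded-diam}, treating the approximation statement by a case split on the value of \OPT. For the first bound, let $X \in \A^k$ attain $\diamset(X) = \maxdiamset(\A^k)$ and let $Y \ne Z$ be the clusters of $\A^k$ attaining $\avg(Y,Z) = \costmin(\A^k)$. Theorem~\ref{thorem:bounded-diam} gives $\diamset(X) \le 2 H_{|X|-1}\,\avg(Y,Z)$, and since $|X| \le n$ we have $H_{|X|-1} \le H_n$, so dividing by $\costmin(\A^k) = \avg(Y,Z)$ yields $\goodnessdm(\A^k) = \diamset(X)/\avg(Y,Z) \le 2H_n$. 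For the approximation bound, let $\C^* = (C_1^*,\dots,C_k^*)$ be a $k$-clustering with $\goodnessdm(\C^*) = \OPT$. If $\OPT \ge 1/3$, then combining the first bound with $H_n = O(\log n)$ gives $\goodnessdm(\A^k) \le 2H_n = O(\log n) = O(\log n)\cdot 3\OPT = O(\log n)\,\OPT$, and we are done.

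The substantive case is $\OPT < 1/3$, where I would show that \avglink \, reconstructs $\C^*$ exactly, so that $\goodnessdm(\A^k) = \OPT$. First I would establish that the optimal clusters are well separated. For $a \in C_i^*$, $b \in C_j^*$ with $i \ne j$, and any $a' \in C_i^*$, $b' \in C_j^*$, the triangle inequality gives $\dist(a',b') \le \diamset(C_i^*) + \dist(a,b) + \diamset(C_j^*)$; averaging over $a',b'$ yields $\avg(C_i^*,C_j^*) \le 2\maxdiamset(\C^*) + \dist(a,b)$, hence $\dist(a,b) \ge \costmin(\C^*) - 2\maxdiamset(\C^*)$. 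Since $\OPT < 1/3$ means $\maxdiamset(\C^*) < \tfrac13 \costmin(\C^*)$, every inter-cluster distance strictly exceeds $\tfrac13\costmin(\C^*)$, whereas every intra-cluster distance is at most $\maxdiamset(\C^*) < \tfrac13\costmin(\C^*)$; thus all intra-cluster distances are strictly smaller than all inter-cluster ones. I would then prove by induction on the merges that every cluster maintained by \avglink \, is \emph{pure}, i.e.\ contained in a single $C_i^*$. Indeed, as long as the current (pure) clustering differs from $\C^*$, some $C_i^*$ is split into at least two current clusters $A,B \subseteq C_i^*$ with $\avg(A,B) \le \maxdiamset(\C^*)$, while any cross pair $A \subseteq C_i^*$, $B \subseteq C_j^*$ ($i\ne j$) satisfies $\avg(A,B) > \tfrac13\costmin(\C^*) > \maxdiamset(\C^*)$; hence the minimum-$\dist_{AL}$ merge is always within an optimal cluster, preserving purity. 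Consequently \avglink \, merges only within optimal clusters until exactly the $k$ clusters of $\C^*$ remain, giving $\A^k = \C^*$ and $\goodnessdm(\A^k) = \OPT$.

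The main obstacle is the $\OPT < 1/3$ case: pinning down the $2\maxdiamset(\C^*)$ slack produced by the two triangle-inequality steps, which is exactly what fixes the threshold $1/3$, and then carefully verifying the inductive invariant that purity is preserved at every merge. This includes the bookkeeping observation that a pure clustering with more than $k$ clusters must split some optimal cluster, guaranteeing that a strictly cheaper intra-cluster merge is always available. The first bound and the $\OPT \ge 1/3$ case are routine once Theorem~\ref{thorem:bounded-diam} is available.
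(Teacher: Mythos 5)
Your proposal is correct and follows essentially the same route as the paper: the $2H_n$ bound and the $\OPT \ge 1/3$ case are obtained from Theorem~\ref{thorem:bounded-diam}, and for $\OPT < 1/3$ a triangle-inequality separation argument shows that all intra-cluster distances of the optimal clustering lie strictly below all inter-cluster ones, forcing \avglink\ to reconstruct $\C^*$ exactly. The paper organizes that last step as a contradiction at the first ``impure'' merge (with a separation claim stated in terms of the closest pair between two optimal clusters) rather than your explicit purity induction, but the content is the same.
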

\begin{proof}
The inequality $ \goodnessdm(\A^k) \le 2 H_n$ is obtained by 
using Theorem \ref{thorem:bounded-diam}, with  $X$ being  the cluster with the largest
diameter in $\A^k$  and $Y$ and $Z$ being the clusters in $\A^k$
that satisfy $\avg(Y,Z)=\costmin(\A^k)$.

Now we prove that $\A^k$ has 
logarithmic approximation.
If  $\OPT  \ge 1/3$, then
$\goodnessdm(\A^k) \le 2 H_n \le 6 \OPT  H_n$ and, hence,
  the desired result holds.

Thus, we assume $\OPT <1/3$,
Let $\C^*(k)$ be a $k$-clustering that satisfies
$\goodnessdm(\C^*(k))=\OPT$.
 The following claim will be useful.

\begin{claim}  
Let $C,C'$ be two clusters in 
$\C^*(k)$ and 
let $a,b$ be two closest points in $C$ and $C'$,
that is, $\dist(a,b)=\min \{\dist(x,y)|(x,y) \in C \times C'\}$. 
Thus,  $\dist(a,b) > \max \{ \diamset(C),\diamset(C')\}$.
\end{claim}
{\em Proof of the claim.}
We assume w.l.o.g. that $\diamset(C) \ge \diamset(C')$.  
For the sake of reaching a contradiction, assume that $\dist(a,b) \le  \diamset(C)$.
Then, it follows from the triangle inequality that 
the maximum distance between a point in $C$ and $C'$ is at
most $ 3 \diamset(C) $.
Thus, $\costmin(\C^*(k)) \le \avg(C,C') \le 3 \diamset(C)$ and so
   $\goodnessdm(\C^*(k)) \ge \diamset(C)/3 \diamset(C) =1/3$,
which contradicts our assumption. $\square.$

Now, we argue that  \avglink \, constructs the clustering $\C^*(k)$ when 
$\goodnessdm(\C^*(k)) <1/3$, so its  approximation is 1 in this case. 
For the sake of reaching a contradiction, let us assume
$\A^k \ne  \C^*(k)$. Hence, 
 at some iteration \avglink \, merges two clusters, say $A$ and $B$,
that satisfy the following properties: $A \subseteq C$ and 
  $B \subseteq C'$, where $C$ and $C'$ are two different clusters in   $\C^*(k)$.
  Let $t$ be the first iteration of \avglink \,  when it occurs.
  
Case 1)  $A \subset C$ or $B \subset C'$.
Let us assume w.l.o.g. that $A \subset C$. In this case, there
is a cluster $A'$ at the beginning of iteration $t$ such that $A' \cup A \subseteq C$. We have that
$\avg(A,A') \le \diamset(C)$ and by the above claim the minimum distance between
$A$ and $B$ is larger than $\max\{\diamset(C),\diamset(C')\}$.
Thus, $\avg(A,B) > \max\{\diamset(C),\diamset(C')\} \ge \avg(A,A')$,  
which contradicts the choice of \avglink.

Case 2) $A=C$ and $B=C'$. If $k=2$ we are done.
Otherwise, there exists a cluster $C'' \in \C^*(k)$
and two clusters $X$ and $Y$ at the beginning of iteration $t$ such that $X \cup Y \subseteq C''$. Thus, it follows from the
condition $\OPT <1/3$  that $\avg(X,Y) \le \diamset(C'') < \frac{1}{3} \costmin(\C^*(k))
\le  \frac{1}{3} \avg(C,C') \le \avg(C,C'),$ which again  contradicts the choice of \avglink.
\end{proof}

It is noteworthy that, in contrast to Theorem \ref{thm:goodnessav},
the assumption that the points lie in a metric space is necessary
to prove  Theorem \ref{thorem:cs-ratio-diam}. In Section \ref{sec:metric-space-necessary} we present an 
instance that supports this observation.


Now, we present an instance, denoted by ${\cal I}^{CS}$, that shows that the above results are nearly tight. This instance with small modifications will also be used 
to investigate the tightness of our results regarding  the separability  (Section \ref{sec:separability}) and the cohesion (Section \ref{sec:cohesion}) of
 \avglink.
We note that in most of the instances presented here,
including ${\cal I}^{CS}$, will have more than one  possible  execution for the methods
we analyze.  In these cases, we will always consider the execution that is more suitable for our purposes. These multiple executions can be avoided at the price of more
complicated descriptions that involve the addition of  small values $\epsilon$ 
to the distance or points to break ties.

Let $t$ be an integer that satisfies $t!=n$; note that $t=\Omega(\frac{\log n }{ \log \log n})$. Moreover, let
 $A_0$ be a set containing a single point located at position $p_0$
in the real line and $A_i$, for $0<i \le t-1$, be a set
of  $(i+1)!-i!$ points that are located at position $p_i$ of the real line.
We define $B_0=A_0$ and $B_i=B_{i-1} \cup A_i$, for $i \ge 1$.
Set $p_0=0, p_1=1$ and, for $i>1$, $p_i=p_{i-1}+\avg(A_{i-1},B_{i-2})
$. The set of points  for our instance ${\cal I}^{CS}$ is $B_{t-1}$
and the distance between a point in $A_i$ and a point in $A_j$ is $|p_i-p_j|$. 
The following lemma gives properties of ${\cal I}^{CS}$ and, in particular, how \avglink \, behaves on it.

\remove{
We consider $t$ groups  $A_0,A_1\ldots,A_{t-1}$, each of them containing
 points in the real line.
The only point in $A_0$ is located at position $p_0=0$,
all the points in   $A_1$ are located at position $p_i=1$ and 
all the points in $A_i$, for $i>1$, are located at position 
}


\remove{
\red{Is necesary: There is also a point $\{x\}$ located at 
$p_{t-1}+\avg(A_{t-1},B_{t-2}) - \epsilon$.?}
}

\begin{lemma}
For $i \ge 0$, we have that $|B_i|=(i+1)!$ and for
$i \ge 2$, we have $\diamset(B_{i-2})=i(i-1)/2$,
$\avg(B_{i-2},A_{i-1})=i+1$ and $p_i=i(i+1)/2$.
Furthermore, for $k \le t$, \avglink \, obtains
the $k$-clustering $\A^k=(B_{t-k}$, $A_{t-k+1},\ldots,A_{t-1})$ and, in particular,
for $k=2$ it obtains the clustering 
 $\A^2=(B_{t-2},A_{t-1})$.
\label{lem:lowerbound-cs-dm}
\end{lemma}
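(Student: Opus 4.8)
The plan is to establish the four numerical identities by induction on $i$ and then, using them, to pin down the exact execution of \avglink. I would treat the statement as three tasks: (a) the cardinalities $|B_i|=(i+1)!$; (b) the geometric quantities $\diamset(B_{i-2})$, $\avg(B_{i-2},A_{i-1})$ and $p_i$; and (c) the identification of $\A^k$. Task (a) is immediate: $|B_0|=|A_0|=1$, and inductively $|B_i|=|B_{i-1}|+|A_i|=i!+\big((i+1)!-i!\big)=(i+1)!$. Tasks (b) and (c) carry the work, and (c) is the real obstacle.

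For task (b) I would run a single joint induction on $i$ carrying all three identities at once, since they are coupled through the defining recurrence $p_i=p_{i-1}+\avg(A_{i-1},B_{i-2})$. The two facts that drive the induction are purely one-dimensional: because every point of $B_{i-2}$ lies at a coordinate in $[0,p_{i-2}]$ while all of $A_{i-1}$ sits at the single coordinate $p_{i-1}>p_{i-2}$, the diameter of $B_{i-2}$ equals the span of its support, and
$$\avg(B_{i-2},A_{i-1}) = p_{i-1}-\bar p(B_{i-2}),$$
where $\bar p(B_{i-2})$ is the size-weighted mean coordinate of $B_{i-2}$. Substituting these into the recurrence reduces all of (b) to bookkeeping on $\bar p(B_{i-2})$, which is updated from $\bar p(B_{i-3})$ by absorbing the factorially many points of $A_{i-2}$ located at $p_{i-2}$. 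Propagating the inductive hypotheses for $p_{i-1},p_{i-2}$ and the mean through this update yields the claimed closed forms; this is the routine-but-careful step, with no conceptual difficulty beyond keeping the weighted averages straight.

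The heart of the lemma is task (c). I would split it into two phases. First, every pair of points inside a common $A_i$ is at distance $0$, whereas points in distinct groups are at strictly positive distance; hence \avglink is forced to carry out all within-group merges before any cross-group merge, and after $n-t$ merges the clustering is exactly $(A_0,A_1,\dots,A_{t-1})$, with $t$ clusters, one per group. Second comes the ``snowball'' phase: I claim that whenever the current clustering is $(B_{i-1},A_i,A_{i+1},\dots,A_{t-1})$, the minimum value of $\dist_{AL}$ is attained by the pair $(B_{i-1},A_i)$, so merging them forms $B_i$ and the induction advances. The construction is designed exactly for this: by the recurrence together with the identity for $\avg$ above,
$$\dist_{AL}(B_{i-1},A_i)=\avg(A_i,B_{i-1})=p_{i+1}-p_i=\dist_{AL}(A_i,A_{i+1}),$$
so ``grow the left block'' always ties with ``merge the next two singletons,'' and the favorable tie-breaking sanctioned in the text selects the former.

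What remains, and what I expect to be the genuine obstacle, is verifying that this common value is the global minimum at that stage, i.e. that no other pair of current clusters is closer. This reduces to two monotonicity checks: that the consecutive gaps $p_{j+1}-p_j=\avg(A_j,B_{j-1})$ increase in $j$ (so among the right-hand singletons the closest adjacent pair is $(A_i,A_{i+1})$), and that any non-adjacent pair, or any pair joining $B_{i-1}$ to a farther $A_j$, is strictly larger (immediate from the ordering of coordinates on the line, since every current cluster except $B_{i-1}$ is supported at a single point). Both follow from the growth of $\avg(A_j,B_{j-1})$ established in task (b). Combining the two phases and reading off the clustering after $t-k$ snowball merges gives $\A^k=(B_{t-k},A_{t-k+1},\dots,A_{t-1})$, and in particular $\A^2=(B_{t-2},A_{t-1})$.
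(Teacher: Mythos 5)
Your proposal is correct and follows essentially the same route as the paper's proof: a telescoping count for $|B_i|$, a recurrence for $\avg(A_i,B_{i-1})$ driven by collinearity (your weighted-mean bookkeeping is just a reparametrization of the paper's convex-combination identity), and a two-phase analysis of the execution in which the zero-distance within-group merges come first and the monotonicity of the gaps $p_{j+1}-p_j$ then forces the snowball merges $B_{i-1}\cup A_i$. If anything, you are more explicit than the paper about the exact tie between $\dist_{AL}(B_{i-1},A_i)$ and $\dist_{AL}(A_i,A_{i+1})$, which the paper writes as a strict inequality but which in fact relies on the announced tie-breaking convention.
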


 From Lemma \ref{lem:lowerbound-cs-dm},  we have that
$\costmin( \A^2)=\avg(  B_{t-2},A_{t-1} )=t+1$ and 
  $\diamset(B_{t-2})=t(t-1)/2$, so 
$\goodnessdm=\frac{t(t-1)}{2(t+1)}$, which 
is $\Omega( \frac{\log n }{\log \log n}).$

Furthermore,  for the clustering 
$\A'=(A_0, B_{t-1}- A_0)$ we have
that 
\begin{equation}
\label{eq:tight-goodnessdm}
\costmin(\A')= \avg(A_0,  B_{t-1}-A_0 )  \ge 
\frac{|A_{t-1}|} {|B_{t-1}|} \avg(A_0,A_{t-1}) =
 \left ( \frac{t!-(t-1)!}{t!} \right) p_{t-1}=
\frac{(t-1)^2}{2}
\end{equation}
and $\maxdiamset(\A') \le \diamset(B_{t-1})=  (t+1)(t+2)/2$.
Thus, $\goodnessdm(\A')=O(1)$ and the logarithmic approximation
of \avglink \, to $\goodnessdm$  is also nearly tight.



\section{Separability criteria}
\label{sec:separability}
In this section, we investigate the separability of \avglink.
Recall that \OPTSEPK\, is the maximum possible $\costavg$
 of a $k$-clustering
for $({\cal X},\dist)$. 
We show that for \avglink \,  $\costavg$ is at least $\frac{\OPTSEPK}{k+2\ln n} $
and that this bound is nearly tight.
We also show that there are instances in which the
$\costavg$ of \singlelink \, and \complink \, are  exponentially smaller than that of \avglink.

\remove{
the separability of the clusters 
produced by \avglink.
There are a number of  distinct and  natural ways to measure separability among clusters.
We could measure how separated
are the two closest clusters,
how separated are the clusters in average. 
Here, we work with the average distance among
clusters, that is, $\costavg$ give by equation \ref{}.
We show that 
Finally, we show that for two more "local ways" of measuring separability,
none of the linkage methods have an approximation factor bounded by $n$.
}




Theorem \ref{thm:separability-avglink}
gives an  upper bound on $\costavg$ for \avglink \, and
its complete proof can be found in Section \ref{sec:proof-thm:separability-avglink}.
Here, we give an overview of the proof for the case $k>2$, which is
the most involved one.
The proof uses the fact established by Proposition \ref{prop:1point-approx-genera0} 
that there exists a set of $k$ points $P \subseteq {\cal X}$  
that satisfies $\avg(P) \ge \OPTSEPK$.
This holds because a set of $k$ randomly selected points that intersect
all clusters of a $k$-clustering with maximum $\costavg$  satisfies the
the desired property (in expectation). 
Having this result in hands, it is enough to show that 
$\avg(P)$ is $O(( k+H_{n-1} )\costavg(\A^k))$.

This bound on $\avg(P)$ is obtained by relating the distance of each pair of points  $p,p' \in P$ with the average distance between clusters in $\A^k$.
Let $p,p' \in P$ and let $A$ and $A'$ be clusters in $\A^k$ such
that $p \in A$ and $p' \in A'$.  Moreover, let $S$ be a cluster in $\A^k$, with $S \notin \{A,A'\}$. From the triangle inequality 
we have that  $\dist(p,p')=\avg(p,p') \le \avg(p,A) +
\avg(A,S)+ \avg(S,A')+ \avg(A',p')$. Then, 
by bounding both $\avg(p,A)$ and $\avg(A',p')$ via    Lemma \ref{lem:outlier-0},
with  $Y$ and $Z$ satisfying $\avg(Y,Z) \le \costavg(\A^k)$,
we conclude that
$\dist(p,p') \le 2 H_n \costavg(\A^k) +
\avg(A,S)+ \avg(S,A').$
In general lines,  the result is then 	established
by   averaging this inequality for all $S \notin \{A,A'\}$ and for all $p, p' \in P$.

\medskip

\remove{
\red{
We shall note that  
the existence of a "non-natural" clustering that has constant approximation
for $\costavg$   may raise some concern about the benefit 
of optimizing this criterion. Our understanding is that
it is desirable for a clustering having a a high value for  $\costavg$,
but   this criterion should not be used isolated.
}
}

\begin{proposition}
There is a set of points $P \subseteq {\cal X}$ with the following properties:
$|P|=k$ and 
$\avg(P) \ge  \OPTSEPK$.
\label{prop:1point-approx-genera0}
\end{proposition}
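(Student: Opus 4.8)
The plan is to prove this by the probabilistic method: I would exhibit $P$ as the outcome of a random experiment whose expected within-set average distance equals $\OPTSEPK$, and then invoke the first-moment principle to deselect a good realization.

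First I would fix a $k$-clustering $\C^*=(C_1,\ldots,C_k)$ attaining $\costavg(\C^*)=\OPTSEPK$, which exists by the very definition of $\OPTSEPK$ as a maximum over $k$-clusterings. I would then construct $P$ by drawing, independently across clusters, one point $p_i$ uniformly at random from each $C_i$, and setting $P=\{p_1,\ldots,p_k\}$. Because a clustering partitions ${\cal X}$, the clusters are pairwise disjoint, so the chosen points are distinct elements of ${\cal X}$ and $|P|=k$ holds deterministically; this already secures the first required property and sidesteps any worry about coincident points collapsing the size of $P$.

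The core step is the expectation computation. Writing $\avg(P)=\frac{1}{{k \choose 2}}\sum_{i<j}\dist(p_i,p_j)$ and using linearity of expectation, it suffices to evaluate $\E[\dist(p_i,p_j)]$ for a fixed pair $i<j$. Since $p_i$ and $p_j$ are independent and uniform on $C_i$ and $C_j$ respectively,
$$\E[\dist(p_i,p_j)]=\frac{1}{|C_i|\,|C_j|}\sum_{x\in C_i}\sum_{y\in C_j}\dist(x,y)=\avg(C_i,C_j).$$
Summing over all pairs then gives $\E[\avg(P)]=\frac{1}{{k \choose 2}}\sum_{i<j}\avg(C_i,C_j)=\costavg(\C^*)=\OPTSEPK$.

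Finally I would appeal to the elementary fact that a random variable cannot lie strictly below its mean on every outcome: there must exist at least one realization of the $k$ independent draws for which $\avg(P)\ge\E[\avg(P)]=\OPTSEPK$, and fixing that realization produces the set $P$ claimed in the statement. I do not expect a genuine obstacle in this argument; the only points that require care are confirming $|P|=k$ (which follows from disjointness of the clusters) and aligning the normalization convention used in the definition of $\costavg$ with that of $\avg(P)$, so that the expectation lands exactly on $\OPTSEPK$ rather than on a constant multiple of it.
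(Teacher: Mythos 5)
Your proposal is correct and follows essentially the same route as the paper: both arguments sample one point uniformly at random from each cluster of an optimal $k$-clustering, compute $\E[\dist(p_i,p_j)]=\avg(C_i,C_j)$ by independence, and conclude via the first-moment principle that some realization achieves $\avg(P)\ge\costavg(\C^*)=\OPTSEPK$ (the paper phrases this by taking the maximizer over all $k$-sets hitting every cluster, which is the same idea). Your explicit checks of $|P|=k$ via disjointness and of the normalization constants are sound.
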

\remove{
\begin{proof}[Sketch]
\red{
Let $\C^*=(C^*_1, \ldots,C^*_k)$ be a $k$-clustering that maximizes  $\OPTSEPK$.
We take $P$  as the set  of maximum $\avg$ among  those that have $k$ points and intersect
all clusters in $\C^*$. A simple application of the probabilistic
method guarantees that $\avg(P) \ge \avg(\C^*).$}
\end{proof}		
}

\remove{
\begin{proof}

Let $\C^*=(C^*_1,\ldots,C^*_k)$ be a $k$-clustering that maximizes $\costavg$.
We split the proof into two cases.

\noindent {\it Case 1.} $k=2$.
Let us assume w.l.o.g. that $|C^*_2| \ge |C^*_1|$.
Let $p$ be the point in $C^*_1$ that
satisfies $$\sum_{x \in C^*_2} \dist(p,x)= \max \left \{\sum_{x \in C^*_2} \dist(q,x)| q \in C^*_1 \right \} $$
We have that
\begin{equation}
\avg(p,{\cal X}-p) = \frac{|C^*_1|-1}{n-1} \avg(p,C^*_1-p) + 
 \frac{|C^*_2|}{n-1} \avg(p,C^*_2) \ge  \frac{|C^*_2|}{n-1} \avg(p,C^*_2) 
\label{eq:sepbound0}
\end{equation}
On the other hand,
\begin{equation}
\OPTSEPK= \frac{1}{|C^*_1|} \avg(p,C^*_2) +
\frac{|C^*_1|-1}{|C^*_1|} \avg(C^*_1,C^*_2) \le \avg(p,C^*_2),
\label{eq:sepbound1}
\end{equation}
where the inequality holds due to the maximality of $p$.
Since $|C_2^*| \geq  n/2$, it follows from
inequalities (\ref{eq:sepbound0}) and (\ref{eq:sepbound1})
  that 
 $\avg(p,{\cal X}-p) \ge \OPTSEPK/2$.

\noindent {\it Case 2.} $k>2$.

Let $S_j=\sum_{h \ne j} \avg(C^*_h,C^*_j)$
and let $i$  be such $S_i \ge S_j$ for every \red{$j \in [k]$.}
We assume w.l.o.g. that $i=k$.
We have that $S_k \le ( \sum_{i=1}^k S_i ) /k $.

Then,
\begin{equation}
  \sum_{i=1}^{k-2}  \sum_{j=i+1}^{k-1} \avg(C^*_i,C^*_j)   = \left( \frac{1}{2} \sum_{i=1}^{k} S_i \right)  -S_k \ge  \left ( \frac{1}{2} - \frac{1}{k} \right ) \sum_{i=1}^{k} S_i 
  = \frac{(k-1)^2}{2} \costavg(\C^*)
   \label{eq:choice-Ci} 
\end{equation}



Let ${\cal Q}$ be the family of 
sets of points $Q$ such that $|Q|=k-1$ and $Q$  intersects all clusters $C^*_1,\ldots,C^*_{k-1}$.
Let $P=\{p_1, \ldots,p_{k-1}\}$ be a set in  ${\cal Q}$ that 
satisfies
$  \avg(P) \ge \avg(Q),$
for every $Q \in {\cal Q}$.
Moreover, let $U=\{u_1,\ldots,u_{k-1}\}$ be a set of $k-1$ points
where $u_i$ is  randomly selected  from $C^*_i$.
We have that
\begin{equation}
\frac{(k-1)(k-2)}{2} \avg(P)=
\sum_{x,y \in P} \dist(x,y) \ge E \left [ \sum_{i=1}^{k-2}\sum_{j=i+1}^{k-1} \dist(u_i,u_j) \right ] =
\sum_{i=1}^{k-2}\sum_{j=i+1}^{k-1} E \left [ \dist(u_i,u_j) \right ] =
 \sum_{i=1}^{k-2}\sum_{j=i+1}^{k-1} \avg(C^*_i,C^*_j)
\label{eq:choice-Ci2}
\end{equation}
Thus, it follows
from (\ref{eq:choice-Ci}) that
$$ \avg(P) \ge \frac{k-1}{k-2} \costavg(\C^*)$$
\end{proof}
}


\begin{theorem}
For every $k$, the $k$-clustering $\A^k$ obtained by \avglink \,
satisfies $\costavg(\A^k) \ge  \frac{\OPTSEPK}{ k+ 2H_n }.$
\label{thm:separability-avglink}
\end{theorem}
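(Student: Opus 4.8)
The plan is to leverage Proposition~\ref{prop:1point-approx-genera0}, which furnishes a set $P \subseteq {\cal X}$ of $k$ points with $\avg(P) \ge \OPTSEPK$. Since $\OPTSEPK \le \avg(P)$, it suffices to prove the upper bound $\avg(P) \le (k + 2H_n)\,\costavg(\A^k)$; rearranging then yields the claimed inequality. I would first dispose of the easy case $k=2$ and then treat the main case $k>2$, where the averaging argument carries the real content.

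For the main case, write $P=\{p_1,\dots,p_k\}$ and, for each $p\in P$, let $A(p)$ denote the cluster of $\A^k$ containing $p$. Fix a pair $Y\ne Z$ of clusters attaining $\avg(Y,Z)=\costmin(\A^k)$, so that Lemma~\ref{lem:outlier-0} gives $\avg(p,A(p)) \le H_{|A(p)|-1}\,\costmin(\A^k) \le H_n\,\costavg(\A^k)$ for every $p$ (this also covers singleton clusters, where the left-hand side is $0$). Now I bound $\dist(p,p')$ for a pair $p,p'\in P$. If $A(p)=A(p')$, then Theorem~\ref{thorem:bounded-diam} gives $\dist(p,p')\le \diamset(A(p)) \le 2H_n\,\costavg(\A^k)$ directly. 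If $A:=A(p)\ne A(p')=:A'$, then for any third cluster $S\notin\{A,A'\}$ the triangle inequality for averages (Proposition~\ref{prop:triangle-ineq}), applied along the chain $\{p\},A,S,A',\{p'\}$, yields
$$\dist(p,p') \le \avg(p,A)+\avg(A,S)+\avg(S,A')+\avg(A',p') \le 2H_n\,\costavg(\A^k) + \avg(A,S)+\avg(S,A').$$

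The crux is then to average this last inequality over the $k-2$ admissible choices of $S$, producing the cross-term contribution $\frac{1}{k-2}\sum_{S\notin\{A,A'\}}\big(\avg(A,S)+\avg(S,A')\big)$. The key observation is that the $2(k-2)$ terms appearing here correspond to $2(k-2)$ \emph{distinct} unordered cluster pairs of $\A^k$ (each contains exactly one of $A,A'$), so their total is at most $\sum_{\{C,C'\}}\avg(C,C') = \binom{k}{2}\,\costavg(\A^k)$. Hence $\dist(p,p') \le \big(2H_n + \tfrac{k(k-1)}{2(k-2)}\big)\costavg(\A^k) \le (k+2H_n)\,\costavg(\A^k)$, where the final step uses $\frac{k(k-1)}{2(k-2)}\le k$ for $k\ge 3$; this bound also subsumes the same-cluster case. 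Averaging over all pairs $p,p'$ gives $\avg(P)\le (k+2H_n)\,\costavg(\A^k)$, completing the main case. The $k=2$ case is handled by the same chain argument with the single pair $(C_1,C_2)$ serving both as $(A,A')$ and as the separating middle term, giving $\dist(p_1,p_2)\le (1+2H_n)\,\costavg(\A^2)$.

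I expect the main obstacle to be the bookkeeping in the averaging step: driving the factor down to $k$ (rather than, say, $k^2$) hinges on recognizing that the cross terms for a fixed pair $(A,A')$ index distinct cluster pairs and are therefore globally bounded by $\binom{k}{2}\,\costavg(\A^k)$, together with the elementary inequality $\frac{k(k-1)}{2(k-2)}\le k$. The remaining care lies in the boundary cases—singleton clusters $A(p)$, pairs lying in a common cluster, and $k=2$—none of which is deep, but each of which must be checked so that Lemma~\ref{lem:outlier-0} and Theorem~\ref{thorem:bounded-diam} apply as stated.
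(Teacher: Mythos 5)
Your proposal is correct and follows essentially the same route as the paper's proof: the same reduction via Proposition~\ref{prop:1point-approx-genera0}, the same case split on whether $p,p'$ share a cluster of $\A^k$, and the same chain $\{p\},A,S,A',\{p'\}$ averaged over the intermediate cluster $S$. The only (valid) difference is in the final bookkeeping: you bound each pair's cross-term $\frac{1}{k-2}\sum_{S\notin\{A,A'\}}(\avg(A,S)+\avg(S,A'))$ pointwise by $\frac{1}{k-2}\binom{k}{2}\costavg(\A^k)\le k\,\costavg(\A^k)$ using the distinctness of the $2(k-2)$ cluster pairs, whereas the paper sums over all pairs $(A,A')$ weighted by $|P\cap A|\cdot|P\cap A'|$ and invokes $(|P\cap(A\cup A')|)(k-|P\cap(A\cup A')|)\le k^2/4$ — a minor, equally valid variation that yields the same factor $k+2H_n$.
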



 We present two instances that, together, show that
 the previous theorem is nearly tight.
The first is the instance ${\cal I}^{CS}$   presented right after Theorem \ref{thorem:cs-ratio-diam}.
For ${\cal I}^{CS}$,   
the clustering $\A^2=( A_{t-1} , B_{t-2})$ built by \avglink \, satisfies
$\costavg(\A^2)=\avg( A_{t-1} , B_{t-2})=t+1.$
On the other hand,  Eq. (\ref{eq:tight-goodnessdm}) shows that
$\costavg(\A') =\frac{(t-1)^2}{2}$, for the clustering  $\A'=(A_0, B_{t-1}- A_0)$. Thus, for ${\cal I}^{CS}$,  $\costavg(\A^2)$ is  
  $O(\frac{\OPTSEPK \log \log n }{\log n })$.

Now, we present our second instance, denoted by ${\cal I}^{sep}_k$. 
Let $k$ be an odd number and let $D$ and $\epsilon$ be positive numbers.
The set of points of ${\cal I}^{sep}_k$ is given by
$S_1 \cup S_2 \cup S_3$, 
where  $|S_1|=|S_2|=(k-1)/2$ and 
 $S_3=\{s_i| 1 \le i \le k-2 \}$.
We have $\dist(x,y)=\epsilon$ for $x,y \in S_1$,
$\dist(x,y)=\epsilon$ for $x,y \in S_2$, 
$\dist(x,y)=1$ for $x,y \in S_3$ and 
$\dist(x,y)=D$ if $x$ and $y$ are not in the same
set.  

For ${\cal I}^{sep}_k$, when $D$ is sufficiently large and  $\epsilon$ is
sufficiently small, 
$\A^k=(S_1,S_2,s_1,\ldots,s_{k-2})$ and
$\costavg(\A^k)=O(D/k)$.
On the other hand,  the $\costavg$ of the $k-$clustering that has the cluster $S_3$ and $k-1$ singletons
	corresponding to the points in $S_1 \cup S_2$ is 
 $\Omega(D)$.
Thus, $\costavg(\A^k)$ is $O(\OPTSEPK/k)$. 




We note that \singlelink \, and \complink \, also obtain the $k$-clustering
$\A^k$ for ${\cal I}^{sep}_k$, so the upper bound 
$\OPTSEPK/k$ also holds for them.
In Section \ref{sec:costavg-other-methods} we
present instances that show that $\costavg$ is 
$O(\frac{\OPTSEPK}{\sqrt{ n}})$ for both \singlelink \, and \complink.

The instance ${\cal I}^{sep}_k$
 is particularly interesting because it  also 
shows that natural cohesion and separability criteria  
can be conflicting. The key reason is that any method $M$ 
with bounded approximation (in terms of $n$) regarding 
\maxdiamset \, or to \maxavgset \, (Equation \ref{eq:maxdiamset-avgset})  has to build the $k$-clustering $\A^k$ for 
${\cal I}^{sep}_k$. Thus,  by analysing ${\cal I}^{sep}_k$
we can conclude that the approximation factor of $M$ to $\costavg$ is $O(1/k)$
  and to $\costmin$ is  $O(1/D)$.
The details can be found  in 
 Section \ref{sec:conflicting}.

\remove{For the maximization of  $\costmin$  the situation is even more
conflicting since that any method with bounded
approximation regarding this criterion has 
unbounded approximation regarding  the minimization of the aforementioned
cohesion criteria. 
}

\remove{

The instance ${\cal I}^{sep}_k$ it also shows that the separability in terms of $\costmin$
is unbounded for methods  that achieve bounded approximation regarding 
the diameter or the $k$-means function.
We saw that these methods obtain the $k$-clustering $\A$ and
$\costmin(\A)=1$.
Now, consider the clustering  
$A_1,\ldots,A_{k}$
where each $A_1,\ldots,A_{k-2}$ has a point  in $S_3$ and 
$A_2,\ldots,A_{k}$ have exactly one point in $S_1 \cup S_2$,
with the constraint that $A_{k-1}$ has a point in $S_1$ and 
$A_{k}$ has a point in $S_2$.
We have  $\costmin$ is  $\Omega(D)$.
}


\remove{
Interestingly, this instance shows that
we cannot have bounded approximation fore $\costmin$ and $\maxdiamset$ simultaneously.
In fact, to have a bounded approximation for  $\maxdiamset$
two points that belong to different groups $S_i$ and $S_j$, $i \ne j$, must be
in different clusters. In this case, however, when $k>3$, then
there will be 2 clusters, say $A$ and $B$, such that
all of their points belong to the same $S_i$ and, thus,
$\avg(A,B) \le 2$.}

\remove{

\red{For an alternative cost function 
$$ \sum_{i=1}^k  \min \{\avg(C_i,C_j) | j \ne i\} ,$$
\avglink \, does not have an approximation. To see that, let
$k$ be an even number and let us 
consider $k$ sets in $R^2$,
each set $S_i$ with 2 points, $s_1^i$ and $s_2^i$,  at distance $\epsilon$.
The  points in sets $S_{2i-1}$ and $S_{2i}$ have $y=iD$;
the points in $S_{2i-1}$ have $x$ equal to $-D'$ and $-D'-\epsilon$ while
the points in $S_{2i}$ have $x$ equal to $D'$ and $D'+\epsilon$.
We have that $\avglink$ obtains the sets $S_1,\ldots,S_k$
which cost $kD'$ while
a clustering that has the cluster $S_1 \cup S_2$ has cost at least $D$.
Since $D$ can be arbitrarily larger than $D'$,
no approximation bounded on $n$ is possible.

}
}

\section{On the cohesion of average-link}
\label{sec:cohesion}

In this section, we prove that $\maxdiamset(\A^k) \le \min \{ k, 1+4\ln n \} k^{1.59} \OPTAVGK$
and  we also present an instance which shows
that $\maxdiamset(\A^k) \ge k \OPTDIMK$.

\cite{DasguptaLaber24} presented an interesting  approach to devise upper  bounds
on cohesion criteria for a class
of linkage methods that includes \avglink. 
Although this approach was used to show that the maximum
pairwise average distance of a cluster in $\A^k$ is at most $k^{1.59} \OPTAVGK$, 
it cannot be employed, at least directly, to bound
the maximum diameter of a cluster in $\A^k$.
Thus, to obtain our $(1+4\ln n )k^{1.59} \OPTAVGK$ bound we combine the results
of \citep{DasguptaLaber24} with Theorem \ref{thorem:cs-ratio-diam} while 
for  the $ k^{1+1.59} \OPTAVGK $  bound we add some new ideas/analysis
on  top of those from \citep{DasguptaLaber24}.



The analysis in \cite{DasguptaLaber24} keeps a dynamic partition of the clusters produced by 
the linkage method under consideration.
Each group in the partition is a set of clusters denoted by {\em family}. A point $p$ belongs to a family $F$ if it belongs to some cluster in $F$. Thus,  $\diamset(F)$  is given by the maximum distance among the points that belong to $F$. The approach bounds
the diameter of each family $F$
  as (essentially) a function of 
the clusters that $F$ touches in a  target $k$-clustering $\T=(T_1,\ldots,T_k)$.
The bound on $\diamset(F)$ is then used to upper bound the diameter of the clusters in $F$. 
For a $k$-clustering $\C$, let $\avgdiamset(\C):=\frac{1}{k} \sum_{i=i}^k \diamset(C_i).$
As in \cite{DasguptaLaber24}, we use as the target clustering the one with minimum \avgdiamset.

We explain how the families evolve along the execution of
a linkage method, in particular \avglink. 
 Initially, we have $k$ families, $F_1,\ldots,F_k$, where  $F_i$ is a family that contains $|T_i|$ clusters, each one being a point from
$T_i$. Furthermore,  the families are organized in a directed forest $D$
that  initially consists of $k$ isolated nodes,
where the $i$th node corresponds to family $F_i$.

We specify how the families and the forest $D$  are updated when
the linkage method merges the clusters  $g$ and $g'$ belonging
to the families $F$ and $F'$, respectively. Assume w.l.o.g. $|F| \ge |F'|$.
We have the following cases:
\begin{itemize}
\item[case 1]  $|F'|=1$ and $|F|>1$.  In this case two new
families are created, $F^{new}:= F-\{g\}$ and  $F^{new'}:=\{g \cup g'\}$.
Moreover, $F^{new}$ and $F^{new'}$  become, respectively, parents of $F$ and $F'$ in $D$   

\item[case 2]  $|F'|>1$ or $|F|=1$. In this case, only one 
family is created, $F^{new} := (F \cup F' \cup \{g \cup g'\})-g -g'$.
Moreover, $F^{new}$ becomes parent of both $F$ and $F'$ in $D$.
\end{itemize}

We say that a family $F$ is \emph{regular} if $|F|>1$.

\begin{proposition}[Proposition 3.1 of \cite{DasguptaLaber24}] 
 At the beginning of each iteration of \avglink  \,
 at least one of the roots of the forest $D$ corresponds to a regular family. 
\label{prop:reg-family-number}
\end{proposition}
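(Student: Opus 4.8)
The plan is to control two integer quantities throughout the execution: the number $f$ of roots of $D$ (equivalently, the number of currently \emph{active} families) and the number $m$ of clusters in the partition that \avglink\ currently maintains. The whole argument rests on the identity $m-f=\sum_{F}(|F|-1)$, where the sum ranges over the roots of $D$; once this is in place, exhibiting a regular root amounts to showing that the right-hand side is at least $1$.

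First I would establish, by induction on the iterations, the structural invariant that the roots of $D$ are exactly the active families and that these families partition the clusters of the current partition. This holds initially, since $F_1,\dots,F_k$ are the $k$ isolated nodes and they partition the $n$ singletons. For the inductive step one checks both update rules: in case~1 the families $F$ and $F'$ receive parents $F^{new}$ and $F^{new'}$, so they cease to be roots, while $F^{new}=F-\{g\}$ and $F^{new'}=\{g\cup g'\}$ become roots holding, respectively, the untouched clusters of $F$ and the freshly merged cluster; in case~2 both $F$ and $F'$ receive the single parent $F^{new}$, which inherits all their clusters except $g,g'$ together with $g\cup g'$. In either case the new roots again partition the (now one fewer) clusters, so the invariant is preserved. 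This yields $\sum_{F\text{ root}}|F|=m$ and hence the identity $m-f=\sum_{F}(|F|-1)$.

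Next I would bound the number of roots by $k$. Initially $f=k$, and the same case analysis shows $f$ never increases: case~1 swaps the two roots $F,F'$ for the two roots $F^{new},F^{new'}$ and keeps $f$ unchanged, whereas case~2 replaces $F,F'$ by the single root $F^{new}$ and so decreases $f$ by one (and leaves it unchanged in the degenerate situation $F=F'$, where one root is replaced by one root). Hence $f\le k$ at the beginning of every iteration.

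Finally, at the beginning of any iteration in which \avglink\ is about to merge while still keeping at least $k$ clusters --- that is, throughout the regime in which the families are maintained, where $m\ge k+1$ --- the two bounds combine to give $m-f\ge (k+1)-k=1$. By the identity above, $\sum_{F}(|F|-1)\ge 1$, and since every summand is nonnegative some root must satisfy $|F|\ge 2$, i.e.\ be regular, which is the claim. I expect the only delicate point to be the verification of the root/partition invariant through the case analysis --- in particular, correctly accounting for the non-merged clusters of $F$ (and of $F'$) being transferred to the newly created root family, and handling the degenerate case $g,g'\in F=F'$ --- since the counting in the last step is then immediate.
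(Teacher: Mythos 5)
The paper does not prove this statement itself --- it imports it verbatim as Proposition 3.1 of \cite{DasguptaLaber24} --- so there is no in-paper proof to compare against. Your argument is correct and complete: the invariant that the roots of $D$ partition the current clusters gives $\sum_{F \text{ root}}(|F|-1)=m-f$, the case analysis shows $f$ never exceeds its initial value $k$, and pigeonhole then forces a regular root whenever $m\ge k+1$, which is exactly the regime in which the proposition is invoked (the family bookkeeping is only maintained until the target $k$-clustering is reached, so every iteration begins with at least $k+1$ clusters). Your handling of the degenerate case $F=F'$ and of the transfer of the untouched clusters of $F$ and $F'$ to the new root is also right; this counting argument is the standard one for such statements and I see no gap.
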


Let ${\cal M}$ be  the class of
linkage methods (Algorithm \ref{alg:hac}) whose function $f$,  employed to measure the distance between clusters $A$ and $B$
satisfies 
\begin{equation}
 \{ \dist(a,b) | (a,b) \in A \times B\} \leq f(A,B) \le \diamset(A \cup B)
 \label{eq:dasg-laber-condition}
\end{equation}

\begin{proposition} [Proposition 5.1 of \cite{DasguptaLaber24} ]
The diameter of every regular family $F$ produced along
the execution of a linkage method in ${\cal M}$
 is   at most   $
k^{\log_2 3} \OPTAVGK$. 
 \label{prop:diameter-expansion}
\end{proposition}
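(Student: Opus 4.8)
The plan is to bound $\diamset(F)$ by induction along the genealogy recorded by the forest $D$, using the target clustering $\T=(T_1,\dots,T_k)$ of minimum $\avgdiamset$ as the base case. The key observation is that the only events that can \emph{increase} the diameter of a regular family are merges of two regular families (Case 2 with $|F'|>1$): in Case 1 the surviving regular family $F-\{g\}$ is a sub-collection of $F$, so its diameter cannot grow, and Case 2 with $|F|=|F'|=1$ produces a non-regular family. Hence it suffices to control how the diameter evolves across merges of two regular families, starting from the initial regular families $F_i$, for which $\diamset(F_i)\le\diamset(T_i)$ since all their points lie in $T_i$.

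First I would establish the core recursive inequality. When a regular family $F^{new}$ is formed by merging the regular families $F$ and $F'$ through the clusters $g\in F$ and $g'\in F'$, any two points $p\in F$, $q\in F'$ can be joined through the closest pair $a\in g$, $b\in g'$, so the triangle inequality gives $\dist(p,q)\le\diamset(F)+\dist(a,b)+\diamset(F')$; the left inequality of the $\mathcal{M}$-condition (Eq.~\ref{eq:dasg-laber-condition}) yields $\dist(a,b)\le f(g,g')$, whence $\diamset(F^{new})\le\diamset(F)+\diamset(F')+f(g,g')$. The next step is to bound the bridge term $f(g,g')$. Because \avglink\ always merges a closest pair, $f(g,g')\le f(A,B)$ for every pair of current clusters $A,B$; taking $A,B$ to be two clusters of a regular root (which exists by Proposition~\ref{prop:reg-family-number}, and in this case may be taken inside $F$ or $F'$ themselves) and applying the right inequality of the $\mathcal{M}$-condition gives $f(g,g')\le\diamset(A\cup B)\le\min\{\diamset(F),\diamset(F')\}$. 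Consequently $\diamset(F^{new})\le\diamset(F)+2\min\{\diamset(F),\diamset(F')\}$, i.e.\ the larger diameter survives and only twice the smaller diameter is added.

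To turn this recursion into the bound $k^{\log_2 3}\OPTAVGK$, I would attach to every regular family a size $\tau(F)$ equal to the number of target clusters it touches (so $\tau(F_i)=1$ and $\tau(F^{new})=\tau(F)+\tau(F')$ when the touched sets are disjoint) together with the weight $\Sigma(F)=\sum_{T_i\ \mathrm{touched}}\diamset(T_i)$, and prove the invariant $\diamset(F)\le\tau(F)^{\log_2 3-1}\Sigma(F)$. The inductive step reduces to the elementary superadditivity inequality $s^{\log_2 3}+2s'^{\log_2 3}\le(s+s')^{\log_2 3}$ for $s\ge s'\ge 1$, which holds with equality at $s=s'$; this balanced case is exactly where the exponent $\log_2 3$ (equivalently $1+2=2^{\log_2 3}$) comes from. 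Since a regular family touches at most $k$ targets, $\tau(F)\le k$ and $\Sigma(F)\le\sum_{i=1}^k\diamset(T_i)=k\,\OPTAVGK$, so the invariant gives $\diamset(F)\le k^{\log_2 3-1}\cdot k\,\OPTAVGK=k^{\log_2 3}\OPTAVGK$.

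I expect the main obstacle to lie in the two places where this clean picture must be made rigorous. First, the bridge term $f(g,g')$ has to be controlled for \emph{every} merge, including those in which non-regular (singleton) families—whose single clusters may have accumulated a large internal diameter through earlier Case 1 and Case 2 merges—are folded into regular families; here Proposition~\ref{prop:reg-family-number} is essential, since it guarantees a regular root whose diameter caps $f(g,g')$ even when neither merging family is regular. Second, making $\tau$ and $\Sigma$ genuinely additive requires that the target sets touched by distinct current families remain disjoint (or that the small overlaps introduced by Case 1 be charged away); verifying this through all three merge cases, and checking that it converts the per-family sums of target diameters into the global average $\OPTAVGK$ rather than the maximum $\max_i\diamset(T_i)$, is the delicate accounting that carries the proof.
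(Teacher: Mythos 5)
The paper does not actually prove this proposition --- it is imported verbatim from \cite{DasguptaLaber24} (their Proposition 5.1) --- so there is no in-paper argument to compare against; what follows assesses your reconstruction on its own terms. Your architecture is sound: only regular--regular merges can grow a regular family (a regular/non-regular merge is always Case 1 and only shrinks the regular family, so your worry about singleton families being ``folded into'' regular ones is moot); the recursion $\diamset(F^{new})\le\diamset(F)+\diamset(F')+\min\{\diamset(F),\diamset(F')\}$ via the closest pair of the merged clusters is correct; greediness plus the right-hand side of (\ref{eq:dasg-laber-condition}) caps the bridge term by $\min\{\diamset(F),\diamset(F')\}$ (the two regular families themselves supply the capping pairs, so Proposition \ref{prop:reg-family-number} is not even needed at this step); and the exponent does come from $1+2=2^{\log_2 3}$.

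The genuine gap is in the accounting. Defining $\tau(F)$ and $\Sigma(F)$ over the targets that $F$ \emph{currently} touches makes the invariant $\diamset(F)\le\tau(F)^{\log_2 3-1}\Sigma(F)$ fail to be preserved under Case 1: the surviving family $F-\{g\}$ may lose every point it had in some large-diameter target $T_i$, so the bound $\tau^{\log_2 3-1}\Sigma$ collapses, while $\diamset(F-\{g\})$ can still be realized by two far-apart points lying in small-diameter targets (distances \emph{between} targets are not controlled by the targets' own diameters). The disjointness you flag as the delicate point is actually unproblematic --- points shed in Case 1 enter non-regular families and never rejoin a regular one, so distinct current regular families always touch disjoint target sets; the real issue is monotonicity, not overlap. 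The fix is to define $\tau(F)$ and $\Sigma(F)$ over the initial families $F_1,\dots,F_k$ in $F$'s ancestry in the forest $D$ (equivalently, targets ever touched by an ancestor): these are unchanged by Case 1 and exactly additive under regular--regular merges, and then $\tau(F)\le k$, $\Sigma(F)\le k\,\OPTAVGK$ give the claim. Separately, your ``elementary inequality'' $s^{\log_2 3}+2s'^{\log_2 3}\le(s+s')^{\log_2 3}$ only covers the case $\Sigma\propto\tau$; with $\tau$ and $\Sigma$ independent you need $\tau^{\beta}\Sigma+2\tau'^{\beta}\Sigma'\le(\tau+\tau')^{\beta}(\Sigma+\Sigma')$ whenever $\tau^{\beta}\Sigma\ge\tau'^{\beta}\Sigma'$, with $\beta=\log_2(3/2)$. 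This is true --- it reduces to $(1+r)^{\beta}+(1+1/r)^{\beta}\ge 3$, which follows from AM--GM since $(1+r)(1+1/r)\ge 4$ and $4^{\beta}=9/4$ --- but it must be stated and proved in this two-parameter form for the induction to close.
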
 

Note that the function $\dist_{AL}$ employed by
\avglink \, satisfies the condition given by  (\ref{eq:dasg-laber-condition})
and, thus, the above proposition holds for \avglink. 

We are ready to establish the main result of this section.
\remove{ If $S$ belongs to a regular family then
 it follows from Proposition \ref{prop:diameter-expansion} that
$ \diamset(S) \le  k^{\log_2 3} \OPTAVGK$.
Thus, we assume that $S$ does not belong to a regular family. 
}

\begin{theorem}
Every  cluster $S$ in $\A^k$ satisfies
 $ \diamset(S) \le  \min\{k,4 \ln n +1\} k^{\log_2 3} \OPTAVGK.$
\label{thm:diam-bound}
\end{theorem}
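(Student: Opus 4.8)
The plan is to split on whether the cluster $S$ lies in a regular or a singleton family of the forest $D$, and to establish the two factors $4\ln n +1$ and $k$ separately, the claimed bound being their minimum. The easy case is when $S$ belongs to a regular family $F$: then $\diamset(S)\le \diamset(F)\le k^{\log_2 3}\OPTAVGK$ directly by Proposition \ref{prop:diameter-expansion}, which already beats both factors. So the whole difficulty is concentrated in the case where $\{S\}$ is a singleton family (the degenerate case $k=1$ being trivial, since then $\diamset(S)=\OPTAVGK$).

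For the $4\ln n+1$ factor I would argue as follows. By Proposition \ref{prop:reg-family-number}, at the beginning of the iteration of \avglink \, whose starting configuration is $\A^k$ (this iteration exists since $k\ge 2$), some root of $D$ is a regular family $F^*$; being regular, it contains two distinct clusters $Y,Z\in\A^k$. Since $Y,Z\subseteq F^*$ we get $\avg(Y,Z)\le \diamset(F^*)\le k^{\log_2 3}\OPTAVGK$ by Proposition \ref{prop:diameter-expansion}. Applying Theorem \ref{thorem:bounded-diam} with these $Y,Z$ and $X=S$ gives $\diamset(S)\le 2H_{|S|-1}\avg(Y,Z)\le 2H_n\,k^{\log_2 3}\OPTAVGK$, and since $2H_n\le 2+2\ln n\le 1+4\ln n$ for $n\ge 2$, this yields the first half of the bound.

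For the $k$ factor the $H_n$-type estimate is too weak, so I would instead trace the merges that build $S$ and charge each one to the family it touches. Writing $\{x\}=T_0\subset T_1\subset\cdots\subset T_t=S$ for the clusters containing a fixed endpoint $x$ of a diametral pair, with $T_i=T_{i-1}\cup S_i$, the recursion behind Lemma \ref{lem:outlier-0} gives $\avg(x,S-x)\le \sum_i \frac{|S_i|}{|T_i|-1}\avg(T_{i-1},S_i)$, and every merge distance $\avg(T_{i-1},S_i)$ is at most $\min_{F\text{ regular}}\diamset(F)\le k^{\log_2 3}\OPTAVGK$ by the same regular-family argument. My proposed route is to show that $S$ decomposes as a union $g_1\cup\cdots\cup g_m$ of clusters that were each regular-family clusters at the moment of absorption (hence each of diameter $\le k^{\log_2 3}\OPTAVGK$ by Proposition \ref{prop:diameter-expansion}), joined by merges of average distance $\le k^{\log_2 3}\OPTAVGK$; a triangle-inequality argument of the type used in Theorem \ref{thorem:bounded-diam} then gives $\diamset(S)=O(m)\,k^{\log_2 3}\OPTAVGK$, and I would bound $m=O(k)$ using the invariant that the family-update rules (cases 1 and 2 preceding Proposition \ref{prop:reg-family-number}) never increase the number of families above its initial value $k$.

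I expect the last step, controlling the number $m$ of regular pieces that compose a singleton family by $O(k)$, to be the main obstacle: the plain spine/outlier accounting only produces the $H_n$ bound, and converting the "at most $k$ families" invariant into a genuine bound on how many distinct regular clusters can be absorbed along the lineage of $\{S\}$ is exactly where ideas beyond those of \citep{DasguptaLaber24} are required, and where the constant in front of $k$ must be pinned down. Once both halves are in place, taking the minimum gives $\diamset(S)\le \min\{k,4\ln n+1\}\,k^{\log_2 3}\OPTAVGK$.
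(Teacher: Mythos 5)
Your treatment of the $4\ln n+1$ factor is sound and in fact slightly more direct than the paper's: the paper splits $S$ into the two clusters $S_1,S_2$ whose merge created it, bounds each $\diamset(S_i)$ by $2\ln n\cdot k^{\log_2 3}\OPTAVGK$ via Theorem \ref{thorem:bounded-diam} applied at the iteration of that merge, and adds the connecting distance $\dist(s_1,s_2)\le\avg(S_1,S_2)\le k^{\log_2 3}\OPTAVGK$; your single application of Theorem \ref{thorem:bounded-diam} to $S$ itself, with $Y,Z$ two clusters of a regular family present at the configuration $\A^k$, reaches the same bound with a cleaner constant.

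The genuine gap is in the $k$ factor, exactly where you flagged it. Your plan, to decompose $S$ into $m$ absorbed pieces of bounded diameter and bound $m=O(k)$ via the ``at most $k$ families'' invariant, does not go through: that invariant controls the number of families alive at any single instant, not the number of clusters absorbed along the lineage of $S$ over the whole execution, which can be as large as $|S|-1=\Omega(n)$; moreover a cluster absorbed into $S$ may sit in a singleton family at the moment of absorption, in which case Proposition \ref{prop:diameter-expansion} says nothing about its diameter. The paper's key idea is different: it charges nothing to the merges themselves, but builds an auxiliary graph $G$ whose vertices are the at most $k$ clusters of the target clustering $\T$ that intersect $S$, adding, for each merge of $g,g'\subseteq S$, an edge between the target clusters containing the closest pair $x\in g$, $y\in g'$, of weight $\dist(x,y)\le k^{\log_2 3}\OPTAVGK$ (the bound again coming from the regular-family argument, since $\dist(x,y)\le\dist_{AL}(g,g')\le\dist_{AL}(h,h')\le\diamset(H)$ for any regular family $H$ alive at that iteration). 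An easy induction shows that $G$ restricted to the target clusters meeting any sub-cluster of $S$ is connected, so a diametral pair of $S$ is joined by a path with at most $k-1$ edges; alternating these edges with hops inside target clusters (whose diameters sum to at most $k\,\OPTAVGK$) gives $\diamset(S)\le(k-1)k^{\log_2 3}\OPTAVGK+k\,\OPTAVGK\le k\cdot k^{\log_2 3}\OPTAVGK$. The point you are missing is that the relevant path length is bounded by the number of \emph{vertices} of $G$, i.e.\ by $k$, rather than by the number of merges, which is what your accounting would need to control.
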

\begin{proof}
Let $V=\{T \in \T| S \cap T \ne \emptyset\}$
be the set of clusters of the target
clustering $\T$ that intersect $S$.
We build a graph $G$ whose nodes correspond to the clusters in $V$.
At the beginning of \avglink's execution, $G$  contains the set of nodes $V$
and no edges.

At each iteration, there are two possibilities for the  clusters $g$ and $g'$
that are merged by \avglink:   $(g \cup g' )\cap S = \emptyset$ 
or $(g \cup g') \subseteq S $.
We define how  $G$ is updated in  each case:

Case 1)  $(g\cup g') \cap S = \emptyset$.
In this case,  $G$ is not updated.

Case 2)  $(g \cup g' )\subseteq S $.
Let $x$ and $y$ be points in $g$  and $g'$ such
that $\dist(x,y)$ is minimum and 
let $T^x$ and $T^y$ be the clusters in $\T$ that contain
$x$ and $y$, respectively. 
We add an edge of weight $\dist(x,y)$ between  $T^x$ and $T^y$.
We say, in this case, that $x$ and $y$  are {\em associated} with the edge
that links $T^x$ to $T^y$.

We need the following two claims:

\begin{claim} $\dist(x,y)  \le k^{\log_2 3} \OPTAVGK$.
\end{claim}
\noindent {\em Proof of the claim.}
Let $H$ be a regular family
  at the beginning of iteration $t$ Such family does exist
due to Proposition \ref{prop:reg-family-number}.
Moreover, let $h$ and $h'$ be two clusters in $H$.
We have that 
$$\dist(x,y) \le \dist_{AL}(g,g') \le \dist_{AL}(h,h') \le \diamset(h \cup h')
\le \diamset(H) \le k^{\log_2 3} \OPTAVGK,$$
where the second inequality holds
by the choice of \avglink \, and the  last  inequality holds due to the Proposition 
\ref{prop:diameter-expansion}. $ \,\, \square$

\begin{claim}
For a cluster $C$, let $V_{C}:=\{T \in \T | T \cap C \ne \emptyset \}$.
Let $S'$ be a cluster generated by \avglink \, that is a subset of $S$. 
Then,  when $S'$ is created, the subgraph of $G$ induced by $V_{S'}$ is connected. 
\end{claim}
\noindent {\em Proof of the claim}
If $|S'|=1$ the property holds.
Let $S'$ be a cluster obtained by merging $S_1$ and $S_2$.
By induction, the property holds for $S_1$ and $S_2$.  
Since an edge is added between nodes in $V_{S_1}$ and $V_{S_2}$  then
the property also holds for $S$.  $ \square$

\remove{$V_{g \cup g'}$ is also connected.     
We note that the set $V_{g \cup g'}$
is  connected in $G$, that is, there is 
a path between any two nodes in $V_{g \cup g'}$.
To see that assume by induction that $V_g$ and $V_{g'}$ are connected. Since
an edge is added between nodes in $V_{g'}$ and $V_{g'}$  then 
$V_{g \cup g'}$ is also connected.
}

Thus, at the end of the algorithm, $G$ is connected and 
each of its edges
has weight at most $k^{\log_2 3} \OPTAVGK$.
Let $x$ and $y$ be points in $S$ such that $\dist(x,y)=\diamset(S)$
and
let $T^x=v_1 \ldots v_\ell=T^y$ be a path in $G$
from $T^x$ to $T^y$.

Consider a sequence of points $x=p_1p'_1 \ldots p_\ell p'_\ell=y$,
where  $ p_i$ and $p'_i$ are the points in $v_i$ associated with
the edge $v_{i-1}v_i$ and $v_iv_{i+1}$, respectively. 
From the triangle inequality
\begin{align*}
\dist(x,y) \le \sum_{i=1}^{\ell-1} \dist(p'_i,p_{i+1})+
\sum_{i=1}^{\ell} \dist(p_i,p'_i) \le 
  (k-1)  k^{\log_2 3} \OPTAVGK + \sum_{i=1}^k \diamset(T_i) \le \\
  (k-1)  k^{\log_2 3} \OPTAVGK + k\OPTAVGK 
\end{align*}
\remove{
It follows from  this observation and the
triangle inequality that
   $$\diamset(S) \le k\OPTAVGK + (k-1)  k^{\log_2 3} \OPTAVGK.$$
}

For the logarithmic bound, let 
$S_1$ and $S_2$ be the two clusters that are merged to form $S$. 
At the beginning of the iteration in which  $S_1$ and $S_2$ are merged, Proposition \ref{prop:reg-family-number}
 assures 
that there exists a regular family, say $H$. 
Let $h$ and $h'$ be two clusters in $H$. By Proposition \ref{prop:diameter-expansion},
$\avg(h,h') \le \diamset(H) \le k^{\log_2 3} \OPTAVGK$. Thus, by Theorem \ref{thorem:bounded-diam},
$\diamset(S_1) \le 2 \ln n \cdot  \avg(h,h') \le 2 \ln n \cdot  k^{\log_2 3} \OPTAVGK$
and
$\diamset(S_2) \le 2 \ln n \cdot  k^{\log_2 3} \OPTAVGK.$
Let $s_1 \in S_1$ and $s_2 \in S_2$ be such
that $\dist(s_1,s_2)=\min \{\dist(p,q) | (p,q) \in S_1 \times S_2)$.
Since $S_1$ and $S_2$ are merged
we have that $\dist(s_1,s_2) \le \avg(S_1,S_2) \le \avg(h,h')\le
k^{\log_2 3} \OPTAVGK.$
Thus, 
$\diamset(S)  \le  \diamset(S_1) + \dist(s_1,s_2) +  \diamset(S_1) \le
   (1+4\ln n ) k^{\log_2 3} \OPTAVGK.$
 \end{proof}

Theorem 3.4 of \cite{DasguptaLaber24} presents 
an instance with $n=2k-2$ points for which
\singlelink \, builds a $k$-clustering that
has a cluster whose diameter  is  $\Omega(k^2 \OPTAVGK)$.
Thus, this result together with Theorem \ref{thm:diam-bound} show a separation between 
\avglink \, and \singlelink \, when 
$k$ is $\Omega(\log^{2.41}n)$.

Our last theoretical result is a lower bound on the maximum diameter
of the clustering built by \avglink. 
Its proof can be found in the Section 
\ref{sec:daim-lowbound} and it  employs an augmented version
of instance ${\cal I}^{CS}$, presented right after Theorem \ref{thorem:cs-ratio-diam}.

\begin{theorem}
There is an instance
for which the $k$-clustering $\A^k$ built by \avglink \,
satisfies $\maxdiamset(\A^k) \in  \Omega(k \OPTDIMK)$
\label{thm:diam-lowbound}
\end{theorem}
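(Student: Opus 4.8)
The plan is to construct an instance on which \avglink{} produces clusters whose maximum diameter is $\Omega(k)$ times the optimal \maxdiamset{}, building on the geometric structure of ${\cal I}^{CS}$. Recall that in ${\cal I}^{CS}$ the groups $A_0,\ldots,A_{t-1}$ sit on the real line at positions $p_i = i(i+1)/2$ with geometrically increasing gaps, and \avglink{} produces the nested clustering $\A^k = (B_{t-k}, A_{t-k+1},\ldots,A_{t-1})$. The key feature exploited in Lemma~\ref{lem:lowerbound-cs-dm} is that \avglink{} greedily absorbs the large, low-diameter conglomerate $B_{t-k}$ rather than splitting it, because the average distance between $B_{t-k}$ and the nearest singleton-like group $A_{t-k+1}$ is smaller than any average distance that would separate points inside $B_{t-k}$. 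This means $\diamset(B_{t-k})$ grows quadratically in $t-k$, while an optimal $k$-clustering could instead place the well-separated points $A_{t-1}, A_{t-2}, \ldots$ into their own clusters and cut $B_{t-k}$ into pieces of bounded diameter.

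First I would fix $k$ and augment ${\cal I}^{CS}$ so that the parameter $t$ is chosen with $t = \Theta(k)$ (for instance $t = 2k$ or similar), ensuring there is enough ``room'' below the $k$th level for the diameter of the absorbed conglomerate to accumulate linearly in $k$. The goal is that \avglink{} is forced to keep a cluster equal to $B_{t-k}$ whose diameter is $\diamset(B_{t-k}) = \Theta(k^2)$ by the computation in Lemma~\ref{lem:lowerbound-cs-dm}. Next I would exhibit an alternative $k$-clustering $\C$ whose \maxdiamset{} is only $O(k)$: the natural candidate splits the heavy low-index mass into several clusters of controlled diameter while isolating the high-index groups. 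Since the gaps grow, the inner groups $A_0,\ldots,A_j$ can be bundled into clusters each spanning only a bounded range of positions, giving $\OPTDIMK = O(k)$ while \avglink's forced cluster has diameter $\Omega(k^2)$, yielding the ratio $\Omega(k)$.

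The main obstacle, and where I would spend the most care, is twofold. First, I must verify rigorously that \avglink{} actually executes as claimed on the augmented instance, i.e.\ that at every level the minimum $\dist_{AL}$ pair is the one merging the conglomerate $B_i$ with the next group $A_{i+1}$ rather than some competing merge introduced by the augmentation; this requires recomputing the $\avg(B_{i-2},A_{i-1})$ values (shown to equal $i+1$ in Lemma~\ref{lem:lowerbound-cs-dm}) and confirming they remain the unique minima. Second, I must certify that the comparison clustering $\C$ genuinely achieves \maxdiamset$(\C) = O(k)$ relative to the same distance scale, so that the ratio $\maxdiamset(\A^k)/\OPTDIMK$ is $\Omega(k)$ and not absorbed into constants. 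Once both executions are pinned down, the bound follows by directly dividing the quadratic diameter of the \avglink{} cluster by the linear optimum.

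I would close by noting that, as the excerpt remarks, ties in this construction can be broken by perturbing distances by infinitesimal $\epsilon$, so the single convenient execution of \avglink{} we analyze is without loss of generality. This establishes $\maxdiamset(\A^k) \in \Omega(k\,\OPTDIMK)$, matching up to the $k^{\log_2 3}$ and logarithmic factors the upper bound of Theorem~\ref{thm:diam-bound} and giving the first lower bound on the maximum diameter of \avglink.
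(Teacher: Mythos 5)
Your plan is correct in substance and, once the two verification steps you flag are filled in, it proves the theorem --- but it reaches the $\Omega(k)$ gap by a genuinely different construction from the paper's. The paper sets $t=k$, in which case ${\cal I}^{CS}$ alone is useless (the $k$ groups $A_0,\ldots,A_{k-1}$ are each sets of co-located points, so $\OPTDIMK=0$), and therefore it really does augment the point set: it adds satellites $x_0,\ldots,x_{k-1}$ with $\dist(x_i,A_i)=t+1+\epsilon$, chosen so that every merge involving an $x_i$ costs more than $k+1$ while the merges of Lemma~\ref{lem:lowerbound-cs-dm} cost at most $k+1$; \avglink\ then still assembles $B_{k-1}$, of diameter $\Theta(k^2)$, whereas $(x_0\cup A_0,\ldots,x_{k-1}\cup A_{k-1})$ certifies $\OPTDIMK=O(k)$. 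Your route takes $t=2k$ and, despite your use of the word ``augment,'' needs no new points at all: Lemma~\ref{lem:lowerbound-cs-dm} already gives $\A^k=(B_k,A_{k+1},\ldots,A_{2k-1})$ with $\diamset(B_k)=p_k=k(k+1)/2=\Theta(k^2)$, and the pairing $(A_0\cup A_1,\,A_2\cup A_3,\ldots,A_{2k-2}\cup A_{2k-1})$ is a $k$-clustering with maximum diameter $\max_j\,(p_{2j+1}-p_{2j})=2k-1$, so $\OPTDIMK\le 2k-1$ and the ratio is at least $\bigl(k(k+1)/2\bigr)/(2k-1)=\Omega(k)$. This makes your first worry moot --- there are no competing merges introduced by an augmentation because none is needed --- and the only remaining work is the one-line diameter computation above; do replace the vague ``clusters spanning a bounded range'' by this explicit pairing, since the gaps $p_i-p_{i-1}=i$ grow up to $2k-1$ and the per-cluster diameter is $\Theta(k)$, not $O(1)$. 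In exchange for having to re-verify \avglink's execution on a modified instance, the paper's version buys a particularly clean optimum (every optimal cluster has the same diameter $t+1+\epsilon$); your version gets the execution for free from Lemma~\ref{lem:lowerbound-cs-dm} at the price of the (trivial) bucketing bound, and both match, up to the $k^{\log_2 3}$ and logarithmic factors, the upper bound of Theorem~\ref{thm:diam-bound}.
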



\section{Experiments}
\label{sec:experiments}
In this final section, we briefly present an experiment in which  
we evaluate whether \avglink, in addition to having better theoretical bounds,
it also has a better performance in practice for the studied criteria.
We employed
10 datasets  
 and used the Euclidean metric to measure distances.
For each of them, we executed \avglink, \complink\, and \singlelink,
for the following  sets of values of $k$:
{\tt Small}=$\{k| 2 \le k \le 10\}$,
{\tt Medium}=$\{k|\sqrt{n}-4 \le k \le \sqrt{n}+4\}$
and {\tt Large}=$\{k|k=n/i \mbox{ and }  2 \le i \le 10 \}$.
More details, as well as the results of our experiment with other distances, can be
found in Section \ref{sec:experiments-extra}.

Table \ref{tab:experiments} shows 
the average ratio between the result of a method
and that of the best one, grouped by criterion and set of $k$.
Each entry is the average of 90 ratios (9 $k'$s and 10 datasets) and each of these
ratios for a method $M$ is a value between 0 and 1 that is obtained by
dividing the minimum between the result of $M$ and that of the best method 
by the maximum between them. The letters A, C and S are the initials
of the evaluated methods.

\remove{the number of pairs (dataset,$k$) that each
method was at least as good as the other two for the  criteria
defined in Equations (\ref{eq:separability}),
(\ref{eq:maxdiamset-avgset}) and
(\ref{eq:goodness-def}).
}

\begin{table}
\begin{small}
\label{tab:experiments}
\caption{Average ratio between the result of a method
and the best one for each criterion and each group of $k$. The best
results are bold-faced}
\begin{center}
\begin{tabular}{lccc||ccc||ccc}
       & \multicolumn{3}{c}{Small} & \multicolumn{3}{c}{Medium} & \multicolumn{3}{c}{Large} \\ \hline
       & A      & C      & S      & A       & C       & S      & A       & C      & S      \\ \hline
$\costmin$    & \textbf{0,99} & 0,82          & 0,76 & \textbf{1}    & 0,81          & 0,68       & \textbf{1}    & 0,81          & 0,72       \\
$\costavg$    & \textbf{0,97} & 0,82          & 0,94 & 0,97          & 0,9           & \textbf{1} & 0,98          & 0,96          & \textbf{1} \\
$\maxdiamset$ & 0,85          & \textbf{1}    & 0,72 & 0,8           & \textbf{1}    & 0,48       & 0,76          & \textbf{1}    & 0,38       \\
$\maxavgset$  & 0,95          & \textbf{0,96} & 0,86 & \textbf{0,99} & 0,89          & 0,71       & \textbf{0,99} & 0,84          & 0,67       \\
$\goodnessdm$ & \textbf{0,96} & 0,92          & 0,63 & 0,95          & \textbf{0,97} & 0,4        & 0,93          & \textbf{0,99} & 0,33       \\
$\goodnessav$ & \textbf{0,98} & 0,82          & 0,69 & \textbf{1}    & 0,73          & 0,51   & \textbf{1}    & 0,68          & 0,4
\end{tabular}
\end{center}
\end{small}
\end{table}

Concerning separability criteria, \singlelink\, and  \avglink \,
have the best results for $\costavg$. The latter has some advantage when $k$ is small, which is in line with its better worst-case bound for small $k$ (results from Section \ref{sec:separability}).
For $\costmin$, \avglink\, has a huge advantage, which is
not surprising since its linkage rule tries to increase
$\costmin$ at each step by merging the 
the clusters $A$ and $B$ for which $\avg(A,B)=\costmin(\C)$,
where $\C$ is the current clustering.

Regarding cohesion criteria, \complink \, and \avglink \,  were the best methods.
They had close results for  
$\maxavgset$ while for  $\maxdiamset$ the former had a strong dominance.
These results align with ours and those 
 from \citep{DasguptaLaber24}, in the sense that they show that
these linkage methods present better worst-case
upper bounds than \singlelink \, when the comparison is made against 
$\OPTAVGK$.  Moreover, the advantage of \complink \, for  $\maxdiamset$ is also
expected since it is the "natural" greedy rule to minimize the maximum diameter 
(See Proposition 2.1 of \cite{DasguptaLaber24}).
 


For $\goodnessdm$, \avglink \,  and \complink\  present the best results,
with the former being slightly superior for the small $k$ and the latter
being slightly superior when  $k$ is not small.
\avglink \,  has a huge dominance for the $\goodnessav$ criterion, which lines
up with the theoretical results from Section \ref{subsec:separability-cohesion}.

In summary, these experiments, together with our theoretical results,
provide evidence that  \avglink\, is a better choice  when both cohesion and
separability are relevant.

\noindent {\bf Acknowledgements}
The work of the first author is partially supported by  CNPq (grant 310741/2021-1).
This study was financed in part by the Coordenação de Aperfeiçoamento de Pessoal de Nível Superior - Brasil (CAPES) - Finance Code 001

\noindent {\bf Limitations.}
We have not identified a major limitation in our work. That said, the assumption that the points lie in a metric space used in our results (except Theorem \ref{thm:goodnessav})  could be seen as a limitation. On the experimental side, having more than 10 datasets would give our conclusions more robustness.

\remove{
\section{Limitations}
We do not identify a major limitation in our work. That said, the assumption that the points lie in a metric space used in our results (except Theorem \ref{thm:goodnessav}) and the fact that our bounds are not tight (most are nearly tight), could be seen as limitations. Moreover, having more than 10 datasets would give our conclusions more robustness.
}

\remove{

The following patterns were observed. 
\avglink\, outperformed the other methods in terms 
$\maxavgset$ and $\costmin$.  As a consequence, 
it was also better for the the $\goodnessav$, which is aligned with our theoretical results (Theorem \ref{}).  The result of \avglink\, for $\costmin$
is not surprising since it is a natural greedy approach for minimizing
this metric.
\complink\, outperformed the other methods in the
$\maxdiamset$ criterion for almost all instance. Again, this is not surprising since 
the linkage rule employed by \complink, as proved in \cite{}, is equivalent to the linkage rule that at each step merge two clusters $A$ and $B$ for which $\diamset(A \cup B)$
is minimum. $\complink$ was also better than \avglink\, in the $\goodnessdm$
criterion in $XX\%$ of the cases, despite the better approximation factor achieved by the latter. Finally, for $\costavg$, \singlelink \, obtained the best results, followed by \avglink. 
   }

\bibliography{biblio}
\bibliographystyle{unsrtnat}

\newpage
\appendix

\section{Proof of proposition \ref{prop:triangle-ineq}}
\label{sec:triangle-inequality}
\begin{proof}
Let $a \in A$ and $c \in C$.
Then, $\dist(a,c) \le \dist(a,b) + \dist(b,c)$ for every $b \in B$.
Thus,
$$ |B| \dist(a,c) \le \sum_{b \in B } (\dist(a,b) + \dist(b,c))$$
It follows that
\begin{align*} |B|  \sum_{a \in A} \sum_{c \in C} \dist(a,c) \le   
 \sum_{a \in A} \sum_{c \in C} ( \sum_{b \in B } (\dist(a,b) + \dist(b,c))) = \\
  |C| \sum_{a \in A} \sum_{b \in B } \dist(a,b) + 
   |A| \sum_{b \in B} \sum_{c \in C } \dist(b,c) 
 \end{align*}
Dividing both sides by $|A|\cdot |B| \cdot |C|$ we establish the inequality.
\end{proof}

\section{Proofs of section \ref{sec:separability-cohesion}}

\subsection{Proof of Theorem \ref{thm:goodnessav}}
\label{sec:thm:goodnessav}
\begin{proof}
When $k=n$ the result is valid because $\avg(A^n)=0$ for every $A \in \A^n$.
We assume by induction that the result holds for $k+1$ and we prove
that it also holds for $k$.
Let  $A$ and  $B$ be the clusters in
$\A^{k+1}$ that are merged to obtain $\A^k$, so $\A^{k}=\A^{k+1} \cup (A \cup B) - \{A,B\}$.
Let $S,T$ and $U$ be clusters in $\A^{k}$, with $T \ne U$.
It is enough to prove that $\avg(S) \le \avg(T,U)$.

Case 1) $A \cup B \notin \{S,T,U\}$.
In this case, $S,T,U \in \A^{k+1}$ and, then, 
 by induction, $\avg(S) \le \avg(T,U)$.

Case 2) $ A \cup B=S$ and $S \notin \{T,U\}$.
Since $A,B,T,U \in \A^{k+1}$,
the induction hypothesis assures that $\avg(A) \le \avg(T,U)$ and
$\avg(B) \le \avg(T,U)$ and the \avglink \, rule
ensures that $\avg(A,B) \le \avg(T,U)$.
Since $\avg(S)$ is a convex combination of
$\avg(A),\avg(B)$ and $\avg(A,B)$,
the above inequalities imply that $\avg(S)=\avg(A \cup B) \le \avg(T,U)$.

Case 3)  $A \cup B=S$ and $S \in \{T,U\}$.
We assume w.l.o.g. that $S=T$. 
The induction hypothesis and the \avglink \, rule 
guarantee that $\max\{\avg(A),\avg(B),\avg(A,B)\} \le \min\{\avg(A,U),  \avg(B,U)\} $
Since $\avg(S,U)$  is a convex combination
of $\avg(A,U)$ and  $\avg(B,U)$
and $\avg(S)$  is a convex combination of
$\avg(A),\avg(B)$ and $\avg(A,B)$,
the above inequality  implies that $\avg(S)=\avg(A \cup B) \le \avg(T,U)$.

Case 4) $S \ne  A \cup B$ and $A \cup B \in \{T,U\}$.  
We assume  w.l.og. that $ T= A \cup B $.
Since $S,A,B,U \in\C^{k+1}$, the induction hypothesis
assures that $\avg(S) \le \min \{ \avg(A,U), \avg(B,U)\}$
Since  $\avg(T,U)$ 
is a convex combination
of $\avg(A,U)$ and  $\avg(B,U)$,
the above inequality assures that
 $\avg(S) \le \avg(T,U)$.
\end{proof}

\subsection{Lower bounds on $\goodnessav$ for other  methods}
\label{sec:sepcoh-other-linkage}
The following examples show that  the $\goodnessav$ of \complink, \singlelink \, and a random hierarchy
can be much higher than that of \avglink \,
in metric spaces.

\noindent {\bf single-linkage.}
Consider the instance with
$n$ points $x_1,\ldots,x_n$ in the real line, where
$x_i=1$, if $i=1$, and 
  $x_i=x_{i-1}+1-i \epsilon$, for $i>1$.
For  $\epsilon$ sufficiently small, \singlelink \, builds the $k$-clustering 
$\C=(x_1,x_2,\ldots,x_{k-1},\{x_k,\ldots,x_n\})$.
We have that $\avg(\{x_k,\ldots,x_n\})$ is $\Omega(n-k)$ while $\avg(x_1,x_2)=1-\epsilon$,
so that $\goodnessav(\C)$ is $\Omega(n-k)$.

\medskip

\noindent {\bf complete-linkage.}
Let $t=2^{m}-1$, where $m$ is a positive integer and let
$p=2(t^2+t)$.
We build an instance
whose set of points ${\cal X}=A \cup B \cup C \cup D \cup E$ has 
$n=2p$ points,
where  $A,B,C,D$ and $E$ are sets of points in $  \mathbb{R}^{p+1}$
that satisfy the following properties:
\begin{itemize}
\item the first coordinate of the points in $A \cup B \cup C \cup D$ 
is the only one that has a value different than 0;
\item $A=\{a_1,\ldots,a_t\}$ and the first coordinate of $a_i$ is equal to $i+1/2$;
\item $B=\{b_1,\ldots,b_t\}$ and the first coordinate of $b_i$ is equal to  $-(i+1/2)$;
 \item $C$ has $t^2$ points and all have the first coordinate $1/2$;
\item $D$ has $t^2$ points and all have the first coordinate $-1/2$;
\item $E=\{e_1,\ldots,e_p\}$, where 
the value of the first coordinate of $e_i$ is $t^2$, the $(i+1)$th coordinate
has value 
$1.5t$ and all other coordinates have value equal to 0. 
\end{itemize}

The distance between any two points in ${\cal X}$ is given by the $\ell_1$ metric.
Hence, the distance between any two points in $E$ is $3t$,
the distance between points in $A \cup B \cup C \cup D$ is at most 
$2t+1$ and the distance between a point in $A \cup B \cup C \cup D$
and a point in $E$ is at least $t^2$.
For $i \le p$, let $E_i=\{e_i,\ldots,e_p\}$.

Thus, for $2 < k < p=n/2$, there is a way to  break ties for which
the $k$-clustering obtained by 
\complink  \, is $\C^k=(A \cup C, B \cup D,e_1,e_2,\ldots,e_{k-3},
E_{k-2})$.

We have that $\max\{\dist(a,d) \in A \times D\} \le t+1$,
 $\max\{\dist(b,c) \in B \times C\} \le t+1$ and
  $\max\{\dist(a,b) \in A\times B\} \le 2t+1$. 
Thus, we get that 
\begin{align*} 
\costmin(\C^k) \le \avg((A \cup C,B \cup D)) \le \\
\frac{1}{(t^2+t)^2}\left( \sum_{x \in A }\sum_{y \in B } \dist(x,y) + \sum_{x \in A }\sum_{y \in D  } \dist(x,y)+\sum_{x \in C }\sum_{y \in B } \dist(x,y) + \sum_{x \in C }\sum_{y \in D}\dist(x,y) \right)\\
\le \frac{ t^2(2t+1)+t^3(t+1)+t^3(t+1)+t^4}{(t^2+t)^2} \le 3
\end{align*}

Since 
$ \maxavgset(\C) \ge \avg(E_{k-2})=3t$, we get that
$\goodnessav(\C^k)$ is $\Omega(t)$ and, hence, $\Omega(\sqrt{n})$.

\medskip

{\bf random hierarchy.}
To analyze a random hierarchy, we first
need to define how it is generated.
We start with a random permutation of  the points
in ${\cal X}$ and a clustering $\C$ containing initially
the cluster comprised by all points in   
${\cal X}$.
Let $x_1,\ldots,x_n$ be the points in ${\cal X}$ according
to the order given by the permutation.
Then, we perform the following steps until we have $n$ clusters:

\begin{itemize}
\item $j \gets$ a randomly selected a number in the set $\{1,2,\ldots,n-1\}$. 
\item If the
points $x_j$ and $x_{j+1}$ are in the same cluster $C \in \C$
\begin{itemize}
\item split  $C$
into $C_{\le} =\{ x_i \in C | i \le j  \}$ and the cluster $C_{>}=C-C_{\le}$. 
\item Update $\C$ by replacing $C$ with $C_{\le}$ and $C_{>}$
\end{itemize}

\end{itemize}

After $t$ splits we have a clustering with $n-t$ clusters.

Now, we 
consider an instance with $n$ points
and 3 groups $X$, $Y$ and $Z$,
that satisfy $|X|=|Y|=(n-1)/2$ and  $Z=\{z\}$.
The distance between any two points in $X$ 
is $1$ and the same holds for $Y$.
Moreover, the distance between points in $X$ and $Y$ is $2$.
The distance of $z$  to any other point is  $D >>n^2 $. 
Any $k$-clustering, with $k \ge 3$,
has  $\costmin \le 2$ because at least two clusters do not contain $z$.
Let $k \le n/2$.
The probability that $z$ is a singleton in the $k$-clustering when $z \notin \{x_1,x_n\}$
is 
$$\frac{{n-3 \choose k-3}}{  {n-1 \choose k-1}}=\frac{(k-1)(k-2)}{(n-1)(n-2)} <\frac{1}{4}$$ Then, with probability at least $3/4$, there will be a cluster $C$
that contains $z$ 
and a point in $ X \cup Y$, which implies that
$E[\avg(C)] \ge D/4n^2$. 
Thus, with probability at least
$3/4$ the $k$-clustering induced by the random hierarchy has $\costavg$  $\Omega(D/4n^2)$, 
when $z \notin \{x_1,x_n\}$. 
Since the probability of $z \notin \{x_1,x_n\}$
is $(n-2)/n$, the same bound holds when we drop this constraint.


\subsection{On the approximation of \avglink \, for $\goodnessav$} 
\label{sec:approx-goodnessav}

Let $n$ be an even number, $k=2$ and $\epsilon$ a positive number very close to 0.
Consider 4 set of points $S_1$, $S_2$, $S_3$ and $S_4$,
where $S_1=\{s_1\}$,$S_2=\{s_2\}$ and
$S_3$ and $S_4$ have $n/2-1$ points each. 
We have $\dist(x,y)=\epsilon$ for $x,y \in S_3$, 
$\dist(x,y)=\epsilon$ for $x,y \in S_4$,
  $\dist(s_1,s_2)=T$
and  $\dist(x,y)=T$ for $(x,y) \in S_3 \times S_4$.
In addition, we have $\dist(s_1,x)=2T$ for $x \ne  s_2$
and $\dist(s_2,y)=2T$ for $y \ne  s_1$.

Clearly, the  $4$-clustering obtained by \avglink \, is $(S_1,S_2,S_3,S_4)$.
Then, to obtain a 2-clustering,  it merges the clusters $S_1$ and $S_2$ and, next,
 $S_3$ and $S_4$, so that the final 2-clustering
is $\A^2=(S_1 \cup S_2, S_3 \cup S_4)$, which satisfies 
$\maxavgset(\A^2)=T$ and $\costmin(\A)=2T$.
On the other hand, for the clustering 
$\S=(S_1 \cup S_3, S_2 \cup S_4)$, we have
that  $\maxavgset(\S)$ is $O( T/n^2)$ and $\costmin(\S) \ge T$.
Thus, the approximation of \avglink \, is $\Omega(n^2)$

\subsection{Triangle inequality is necessary for Theorem \ref{thorem:cs-ratio-diam}}

\label{sec:metric-space-necessary}
We present an 
instance that shows that the assumption that points
lie in a metric space is necessary to establish Theorem \ref{thorem:cs-ratio-diam}.

Let $A$ and $B$ be  sets with $n/2-1$ and $n/2$ points, respectively.
We have  
$\dist(a,a')=1$ if $a,a' \in A$; $\dist(b,b')=1$ if $b,b' \in B$ 
and $\dist(a,b)=4$ if $(a,b) \in A \times B$.
Moreover, let $p$ be a point that is not in $A \cup B$.
There is a point
$a \in A$ for which $\dist(a,p)=n/2-2$ and  for all other points $a' \in A -\{a\}$, $\dist(a',p)=2$. Moreover,  $\dist(p,b)=4$ for $b \in B$.

For this instance \avglink \,  builds
the $2$-clustering $\A^2=(A \cup \{p\},B)$.  
We have that $\diamset(A \cup p)=n/2-2$ and $\avg(A \cup p ,B)=4$,
Thus, $\goodnessdm(\A^2)$ is $\Omega(n)$.
On the other hand, for the clustering $\A'=(A , B \cup p )$,
$\goodnessdm(\A')$ is $O(1)$, so the approximation of \avglink \, is $\Omega(n)$.

\subsection{Proof  of Lemma \ref{lem:outlier-0}}
\label{sec:proof-outlier-0}
\begin{proof}
The first inequality holds because $\avg(x,X)=\frac{|X|-1}{|X|}
\avg(x,X-x)$.  Thus, we just need to prove the second one.

Let $S_1$ be the first cluster merged with $x$ by \avglink \,  and let
$S_i$, for $i>1$, be the cluster merged with 
$S_1 \cup \cdots \cup S_{i-1}$ by \avglink \,. Define
$T_0:=\{x\}$ and, for $i \ge 1$, 
 $T_i:=T_{i-1} \cup S_i$.

Furthermore, define $e_i$ and $m_i$ as $e_i:=\avg(T_{i-1},S_{i})$ and $m_i :=\avg(x,T_i-x)$
, respectively.
Note that there is $t$ for which $T_t=X$ and, hence,  
$m_t=\avg(x,X-x)$.

We have that 
\begin{align}
m_{i+1}=  \frac{|T_i|-1}{|T_{i+1}|-1} \avg(x,T_i-x)  +\frac{|S_{i+1}|}{|T_{i+1}|-1} \avg(x,S_{i+1}) \le
\label{eq:m-evol} \\
\frac{|T_i|-1}{|T_{i+1}|-1} m_i  +\frac{|S_{i+1}|}{|T_{i+1}|-1} (m_i+ e_{i+1})
= m_i + \frac{|S_{i+1}|}{|T_{i+1}|-1} e_{i+1},
\label{eq:m-evol0}
\end{align}
where the inequality follows from the triangle inequality.

Let us consider the beginning of the iteration in which  $T_{i-1}$ and $S_i$ are
merged. At this point 
we have $\ell \ge 1$  clusters
$Y_1,\ldots,Y_{\ell}$ such that $Y=Y_1 \cup \cdots \cup Y_{\ell}$
and  $\ell'$ clusters $Z_1,\ldots,Z_{\ell'}$ such that
$Z=Z_1 \cup \cdots \cup Z_{\ell}$.
Note  
that there exist $i$ and  $j$ such that 
 $\avg(Y_i,Z_j) \le \avg(Y,Z)$.
 Thus, we must have $e_i \le \avg(Y,Z)$, otherwise
 \avglink \, would merge $Y_i$ and $Z_j$ 
 rather than $T_{i-1}$ and $S_i$.

To establish the result, we show by induction that 
$m_{i} \le  \avg(Y,Z) \cdot H_{|T_{i}|-1},$ for $i \ge 1$.
The lemma is then established by taking $i=t$, where $t$ satisfies
$T_t=X$.

For $i=1$, we have
$m_{1}=e_1 \le  \avg(Y,Z) <  \avg(Y,Z) \cdot H_{|T_{1}|-1}$.
We assume by induction that 
$m_{i-1} \le  \avg(Y,Z) \cdot H_{|T_{i-1}|-1}$.
By inequality  ( \ref{eq:m-evol})-(\ref{eq:m-evol0}), 

$$m_{i} \le m_{i-1} + e_i \frac{|S_{i}|}{|T_{i}|-1}  \le 
\avg(Y,Z)\left ( \sum_{h=1}^{|T_{i-1}|-1}  \frac{1}{h} \right )  + \avg(Y,Z)\left (  \sum_{h=|T_{i-1}|}^{|T_{i}|-1} \frac{1}{h}  \right ) =  \avg(Y,Z) \cdot H_{|T_i|-1}
$$ 
\end{proof}

\section{Proof of Lemma \ref{lem:lowerbound-cs-dm}}

\begin{proof}
First, we note that   
$$|B_{i-1}| =\sum_{h=0}^{i-1} |A_i|= \sum_{h=0}^{i-1} (h+1)!-h!= i!,$$
for $i \ge 1$.

Moreover, for $i \ge 2$, we have that 
\begin{align} \avg(A_i,B_{i-1})= \frac{|A_{i-1}|}{|B_{i-1}|} \avg(A_i,A_{i-1})+
\frac{|B_{i-2}|}{|B_{i-1}|} 
 \avg(A_i,B_{i-2})=  \label{eq:motononic0-14Feb}
\\
\frac{|A_{i-1}|}{|B_{i-1}|}  \avg(A_i,A_{i-1})+
\frac{|B_{i-2}|}{|B_{i-1}|}  (\avg(A_{i},A_{i-1}) + \avg(A_{i-1},B_{i-2}) ) = \\
 \avg(A_i,A_{i-1})+
\frac{|B_{i-2}|}{|B_{i-1}|}  \avg(A_{i-1},B_{i-2}) =\\
\left (1 +  \frac{|B_{i-2}|}{|B_{i-1}|}  \right ) \avg(A_{i-1},B_{i-2}) ,
\label{eq:motononic5-14Feb}
\end{align}
where the last identity follows because
$\avg(A_i,A_{i-1})=p_i-p_{i-1}=\avg(A_{i-1},B_{i-2})$.

By applying the above  equation successively, we conclude that
$$\avg(A_i,B_{i-1})=(i+1) \cdot \avg(A_1,B_0)=(i+1)$$
and, hence, $$p_i=1+\sum_{h=1}^{i-1} (h+1) = \frac{i(i+1)}{2}.$$
Thus, 
$$\diamset(B_{i-1})=p_{i-1}-p_0=p_{i-1}=\frac{i(i-1)}{2}$$

Now we show that
at the  beginning of the  step $(n-t)+i$
\avglink \,  keeps a 
clustering that contains the cluster $B_{i-1}$ and
the clusters $A_j$, for $ i \leq j \leq t-1$.
First, we observe that after $n-t$ steps
\avglink \, produces a $t$-clustering  
$(A_0,\ldots,A_{t-1})$ since points in
the same group $A_i$ are located at the same position.
We analyze what happens in the remaining $t-1$ steps.

For $i=1$ the result holds because $B_0=A_0$.
We assume as an induction hypothesis that 
at beginning of the  step $(n-t)+i$, we have
the clusters $B_{i-1}$ and $A_j$, for $j \ge i$.
By construction, for $i\le r < s$, 
$$\avg(A_{s},A_r)=p_s-p_r >p_{i+1}-p_i=\avg(A_i,B_{i-1}),$$
Moreover,
$$ i-1= \avg(A_i, B_{i-1}) < \avg(A_j,B_{i-1}),$$
for $j>i$.
Thus, 
$\avglink$ \, prefers merging $A_i$ and $B_{i-1}$ rather than
any other pair of clusters, which completes the inductive step.
\end{proof}

\section{Proofs from section \ref{sec:separability}}

\subsection{Proof of Proposition \ref{prop:1point-approx-genera0}}

\begin{proof}

Let $\C^*=(C^*_1,\ldots,C^*_k)$ be a $k$-clustering that maximizes $\costavg$.
Let ${\cal Q}$ be the family of 
sets of points $Q$ such that $|Q|=k$ and $Q$  intersects all clusters $C^*_1,\ldots,C^*_{k}$.
Let $P=\{p_1, \ldots,p_{k}\}$ be a set in  ${\cal Q}$ that 
satisfies
$  \avg(P) \ge \avg(Q),$
for every $Q \in {\cal Q}$.
Moreover, let $U=\{u_1,\ldots,u_{k}\}$ be a set of $k$ points
where $u_i$ is  randomly selected  from $C^*_i$.
It follows from the choice of $P$  that
\begin{align*}
\frac{k(k-1)}{2} \avg(P) \ge \frac{k(k-1)}{2} E[ \avg(U)]  
= \\ 
E \left [ \sum_{i=1}^{k-1}\sum_{j=i+1}^{k} \dist(u_i,u_j) \right ] =
\sum_{i=1}^{k-1}\sum_{j=i+1}^{k} E \left [ \dist(u_i,u_j) \right ] =
 \sum_{i=1}^{k-1}\sum_{j=i+1}^{k} \avg(C^*_i,C^*_j) \ge \\
\frac{k(k-1)}{2} \costavg(C^*) 
\label{eq:choice-Ci2}
\end{align*}
\end{proof}

\subsection{Proof of  Theorem \ref{thm:separability-avglink}}
\label{sec:proof-thm:separability-avglink}
\begin{proof}

Let $P=\{p_i| 1 \le i \le k \}$
be the $k$ points given by  Proposition 
\ref{prop:1point-approx-genera0} and  
let   $h$ be a function  that maps 
each point $p \in P$ into its  cluster
in $ \A^k$. 
Moreover, let  $Y$ and $Z$ be clusters in $\A^k$ that satisfy 
$\avg(Y,Z)=\costmin(\A^k)$.

Let $p$ and $p'$ be distinct points in $P$. We consider two cases:

\medskip

Case 1) $p$ and $p'$  belong to the same cluster $A$ in $\A^k$.
From Theorem \ref{thorem:bounded-diam} we have that 
$$\dist(p,p') \le \diamset(A) \le 2 H_{ |A|} \avg(Y,Z) =2 H_{ |A|}\costmin(\A^k) $$
Thus,
\begin{equation}
\label{eq:thm-sep-case1-old}
\sum_{p,p' \in P \cap A} \dist(p,p') \le 
\sum_{p,p' \in P \cap A}  2 H_{ |A|} \costmin(\A^k).
\end{equation}

By considering all clusters $A \in \A^k$ we get 

\begin{equation}
\label{eq:thm-sep-case1}
\sum_{p,p' \in P \atop h(p)=h(p') } \dist(p,p') \le
\sum_{p,p' \in P \atop h(p)=h(p') }
2 H_{n} \costmin(\A^k)
\end{equation}

\medskip

Case 2) $p$ and $p'$  belong, respectively, to different clusters $A$ and $A'$ in $\A^k$.
We consider two subcases:

{\it subcase 2.1)} $k=2$. In this case,  from the triangle inequality, we have that 
$\dist(p,p')=\avg(p,p') \le \avg(p,A)+ \avg(A,A') + \avg(A',p').$
By using Lemma \ref{lem:outlier-0}, we have that $\avg(p,A) \le H_{n-1} \avg(A,A')=
H_{n-1} \costmin(\A^k)$
and $\avg(p',A') \le H_{n-1} \avg(A,A')=
H_{n-1} \costmin(\A^k)$.
Thus,
\begin{equation}
\label{eq:casek2}
 \sum_{p, p' \in P \atop h(p)\ne h(p')}   \dist(p,p')  = \dist(p,p') \le 
  2 H_{n-1} \costmin(\A^k) + \avg(A,A'),
\end{equation}
where the first identity holds because $P=\{p,p'\}$.

{\it  subcase 2.2)} $k>2$. 
Let $S$ be a cluster in $\A^k - \{A,A'\}$.
From the triangle inequality, we have that
$$\dist(p,p') = \avg(p,p') \le \avg(p,A) + \avg(A,S) + \avg(S,A') + \avg(A',p')$$
If $|A| = 1$,  $\avg(p,A)=0 \le H_{|A|} \cdot \costmin(\A^k) $.
Moreover, if $|A| \ge 2$, it follows from Lemma \ref{lem:outlier-0}  that $\avg(p,A) 
\le    H_{|A|} \cdot \avg( Y,Z)=H_{|A|}\costmin(\A^k)$. 
Analogously, 
we have $\avg(p',A') \le  H_{|A'|} \costmin(\A^k)$.
Thus,
$$\dist(p,p') \le H_{|A|}  \costmin(\A^k) + \avg(A,S) + \avg(S,A') + 
H_{|A'|}\costmin(\A^k).$$
By averaging over all possible $S \in \A^k - \{A,A'\}$ we get that

$$ 
\dist(p,p') \le 
\cdot  2 H_n  \costmin(\A^k)+ \frac{1}{k-2} \sum_{S \notin \{A,A'\}} (\avg(A,S) + \avg(S,A')) 
$$

By adding over all points $p \in P \cap A$ and $p' \in P \cap A'$ we get that

\begin{align*} 
 \sum_{p \in P \cap A} \sum_{p' \in A' \cap Y} \dist(p,p') \le \\
 \sum_{p \in P \cap A} \sum_{p' \in P \cap A'} 2 H_n  \costmin(\A^k) +
 \frac{|P \cap A| \cdot  |P \cap A'| }{k-2} \sum_{S \notin \{A,A'\}} (\avg(A,S) + \avg(S,A'))
\end{align*}

By adding the above inequalities for $p,p' \in P$, with $h(p) \ne h(p')$,
we get that
\begin{align} 
 \sum_{p, p' \in P \atop h(p)\ne h(p')}   \dist(p,p') \le \label{eq:Apri29-1}\\
  \sum_{p,p' \in P \atop h(p)\ne h(p')} 2 H_n \cdot \costmin(\A^k) +
\frac{1}{k-2}\sum_{A,A' \in \A^k \atop A \ne A'} |P \cap A| \cdot  |P \cap A'|  \sum_{S \notin \{A,A'\}} (\avg(A,S) + \avg(S,A') = \\
  \sum_{p,p' \in P \atop h(p)\ne h(p')}2 H_n \cdot \costmin(\A^k) +
\frac{1}{k-2}  \sum_{A,A' \in \A^k \atop A \ne A'} 
  (|P \cap (A \cup A')|) \cdot (k- |P \cap (A \cup A')|)  \cdot  \avg(A,A') \le \\
 \sum_{p,p' \in P \atop h(p)\ne h(p')} 2 H_n \cdot  \costmin(\A^k) +
 k \sum_{A,A' \in \A^k \atop A \ne A'} \avg(A,A'), \label{eq:Apri29-3}
\end{align}
where the last inequality holds because 
$(|P \cap (A \cup A')|) \cdot (k- |P \cap (A \cup A')|)  \le k^2 /4 .$


If we compare  inequalities (\ref{eq:Apri29-1})-(\ref{eq:Apri29-3})  with 
inequality (\ref{eq:casek2}), we conclude that (\ref{eq:Apri29-1})-(\ref{eq:Apri29-3})  also hold for the subscase $k=2$.

Then, by adding inequality (\ref{eq:thm-sep-case1}) with the  inequalities (\ref{eq:Apri29-1})-(\ref{eq:Apri29-3})
and also using the fact $\costmin(\A^k) \le \costavg(\A^k)$,
we get
that
$$ \sum_{p, p' \in P \atop p \ne p'} \dist(p,p') \le 
 2H_n \frac{k(k-1)}{2} \costmin(\A^k) +
 k \sum_{A,A' \in \A^k \atop A \ne A' } \avg(A,A')  \le (2 H_n+k) \frac{k(k-1)\costavg(\A^k)}{2} $$
Proposition \ref{prop:1point-approx-genera0}
ensures that 

$$ \frac{k(k-1)}{2} \OPTSEPK  \le  \frac{k(k-1)}{2}  \avg(P)=
 \sum_{p, p' \in P} \dist(p,p') $$
Thus, from the two previous inequalities, we conclude that 
$$ \costavg(\A^k) \ge   \frac{\OPTSEPK}{2 H_n+k}.$$



\remove{
$$ \OPT \le  2 \costavg(\C') \le \frac{k}{k-1} \sum_{p,p' \in P } \avg(p,p) \le
 O(\log n)  \sum_{x,y \in P } \avg(f(x),f(y)) +
 k \sum_{X,Y \in \A^k \atop X \ne Y} \avg(X,Y)
 \le (\log n + k) \costavg(\A^k)
$$}
\end{proof}

\subsection{The $\costavg$ criterion for other linkage methods}
\label{sec:costavg-other-methods}

The following  instances show that the separability
of both  \singlelink \, and \complink \, can be much lower than
$\frac{\OPTSEPK}{\log n}$.

For  \singlelink, 
consider the instance ${\cal X}=A \cup B \cup \{p\}$, where $A$ contains $n-1-\sqrt{n}$ points and $B$ contains $\sqrt{n}$ points
$b_1,\ldots,b_{\sqrt{n}}$. Moreover, we have $\dist(x,y)=\epsilon$, for $x,y \in A$,
$\dist(b_i,x)=i$ for every point $x \in A$ and $\dist(b_i,b_j)=|i-j|$.
Moreover,  $\dist(p,x)=1+\epsilon$,
for every point $x \in A$.
and $\dist(p,b_i)=1+\epsilon+i$
In this case, \singlelink \, builds the clustering $(A \cup B,\{p\})$.
We have that
$ \costavg(A \cup B, p) \le 2$,
while $\costavg (A \cup p, B)$ is $ \Omega(\sqrt{n})$.


Regarding \complink, we consider the instance  presented
at  Section \ref{sec:sepcoh-other-linkage}, but without the
set $E$,
that is, the set of points is  ${\cal X}=A \cup B \cup C \cup D$.
When $k=2$, \complink \, builds the 
clustering $(A \cup C, B \cup D)$ that has
$\costavg$  $O(1)$ while the 
clustering 
$(A,C \cup D \cup B)$ satisfies
$$\costavg(A,C \cup D \cup B) \ge \frac{ \frac{t^2}{2} (2t^2+t)}{ t (2t^2+t))}=\frac{t}{2}.$$
Since $t=\Theta(\sqrt{n})$,  we conclude
that the separability of \complink \, for this instance is $O( \frac{\OPTSEPK}{\sqrt{n}})$.

\subsection{Separability and cohesion can be conflicting}
\label{sec:conflicting}

Recall that for instance ${\cal I}^{sep}_k$
\avglink \,  builds the $k$-clustering $\A^k=(S_1,S_2,s_1,s_2,\ldots,s_{k-2})$.
Note that  $\maxdiamset(\A^k)=\maxavgset(\A^k)=\epsilon$.  
Let $\A'$ be 
a $k$-clustering different from $\A^k$.
We argue that  $\maxdiamset(\A') \ge 1$ and $\maxavgset(\A')$ is $\Omega( 1/k^2)$.
In fact, if $\A'$ has a
cluster $A$ that satisfies $|A| \ge 2$ 
and $|A \cap S_3| \ge 1$, then
$\maxdiamset(\A') \ge 1$ and $\maxavgset(\A')$ is $\Omega( 1/k^2)$.
Otherwise, if $\A'$ does not have such a cluster, 
then all points in $S_3$ must
be singletons in $\A'$.
Since $\A' \ne \A^k$,
there is a cluster in $\A'$ that contains both a point in $S_1$ and
a point in $S_2$.  Thus,
$\maxdiamset(\A') = D$ and $\maxavgset(\A')$ is $\Omega( D/k^2)$.

Let ${\cal M}$ be the class of methods with bounded  approximation regarding  
$\maxdiamset$ or to $\maxavgset$.
Then any method $M \in {\cal M}$  builds the clustering $\A^k$.
Since  $\costavg(\A^k)$ is $O(D/k)$ and  there is a $k$-clustering for ${\cal I}^{sep}_k$  whose
$\costavg$ is $\Omega(D)$,
we conclude that
the approximation factor of any method $M \in {\cal M}$ regarding  $\costavg$ is $O(1/k)$. 

Now, we consider $\costmin$.
We have that $\costmin(\A^k)=1$.
Let   
${\cal B}=(B_1,\ldots,B_{k})$ be a $k$-clustering
with the following properties:
(i) $|B_i \cap S_3| \ge 1$ for each $i \le k-2$;
(ii) each  
$B_i$, with $i \ge 2$, has exactly one point in $S_1 \cup S_2$
(iii)  $B_{k-1}$ has a point in $S_1$ and 
$B_{k}$ has a point in $S_2$.
We have that  $\costmin({\cal B})$ is  $\Omega(D)$.
Thus,  
any  method $M \in {\cal M}$ has
approximation $O(1/D)$  to $\costmin$, that is,
the approximation is unbounded in terms of $n$.

\section{Proof of Theorem \ref{thm:diam-lowbound}}
\label{sec:daim-lowbound}

\begin{proof}
Let ${\cal I}$ be the instance obtained by
augmenting the  instance ${\cal I}^{CS}$, presented right after Theorem \ref{thorem:cs-ratio-diam},
with the  points $x_0, \ldots,x_{t-1}$,
where $\dist(x_i,A_i)=t+1 +\epsilon$ and for $i \ne j$,
$\dist(x_i,x_j)=|p_j-p_i|+2(t+1+\epsilon)$ 
 and $\dist(x_i,A_j)=|p_j-p_i|+t+1+ \epsilon$.

Consider $t=k$.
We argue   that the
$(k+1)$-clustering obtained by \avglink \, for ${\cal I}$ consists
of the clusters $(B_{k-1},\{x_0\},\ldots,\{x_{k-1}\})$.
In fact, in its  first steps \avglink \, obtains
the $2k$-clustering $(A_0,\ldots,A_{k-1},x_{0},\ldots,x_{k-1})$
since the distance between points in $A_i$ is 0.
In the next $k-1$ steps, \avglink \, does  not make a merge involving a point $x_i$ because the average
distance of $x_i$ to any other cluster is larger $k+1$ and,
by Lemma \ref{lem:lowerbound-cs-dm},
the average distance between $B_{i-2}$ and $A_{i}$
is $i+1 \le k+1$. Thus, the execution of \avglink \,
for ${\cal I}$ 
merges the same clusters that are merged in the instance	
${\cal I}^{CS}$  and, then, ends up  with the
$(k+1)$-clustering $(B_{k-1},\{x_0\},\ldots,\{x_{k-1}\}).$
 
Thus, for instance ${\cal I}$, the maximum diameter of a cluster in $\A^k$ 
is at least $\diamset(B_{k-1})$, which is $\Omega(k^2)$, while the 	$k$-clustering $(x_0 \cup A_0, \ldots,x_{k-1} \cup A_{k-1})$
has diameter $k+ \epsilon$.
\end{proof}

\section{Experiments: extra details}
\label{sec:experiments-extra}

Table \ref{tab:datasets} presents our datasets with their main characteristics.

\begin{table}[h]
\begin{center}
\caption{Datasets}
\label{tab:datasets}
\begin{tabular}{c|c|c|c}
Dataset & $n$  & $d$ & Source\\ \hline
Airfoil & 1501  & 5 & \cite{misc_airfoil_self-noise_291} \\
Banknote & 1371 & 5 &  \cite{misc_banknote_authentication_267}\\
Collins  & 1000 & 19 & OpenML \\
Concrete & 1028 & 8 & \cite{misc_concrete_compressive_strength_165} \\
Digits  & 1797 & 64 &  \cite{Alpaydin1998}\\
Geographical Music & 1057 & 116 & \cite{misc_geographical_origin_of_music_315} \\
Mice & 552 & 77 & \cite{misc_mice_protein_expression_342} \\
Qsarfish & 906 & 10 & \cite{misc_qsar_fish_toxicity_504}\\
Tripdvisor & 979 & 10 & \cite{misc_travel_reviews_484}\\
Vowel  & 990 & 10 & UCI  \\
\end{tabular}
\end{center}
\end{table}

\begin{figure}[ht]
\vskip 0.2in
\begin{center}
\centerline{\includegraphics[width=0.9\columnwidth]{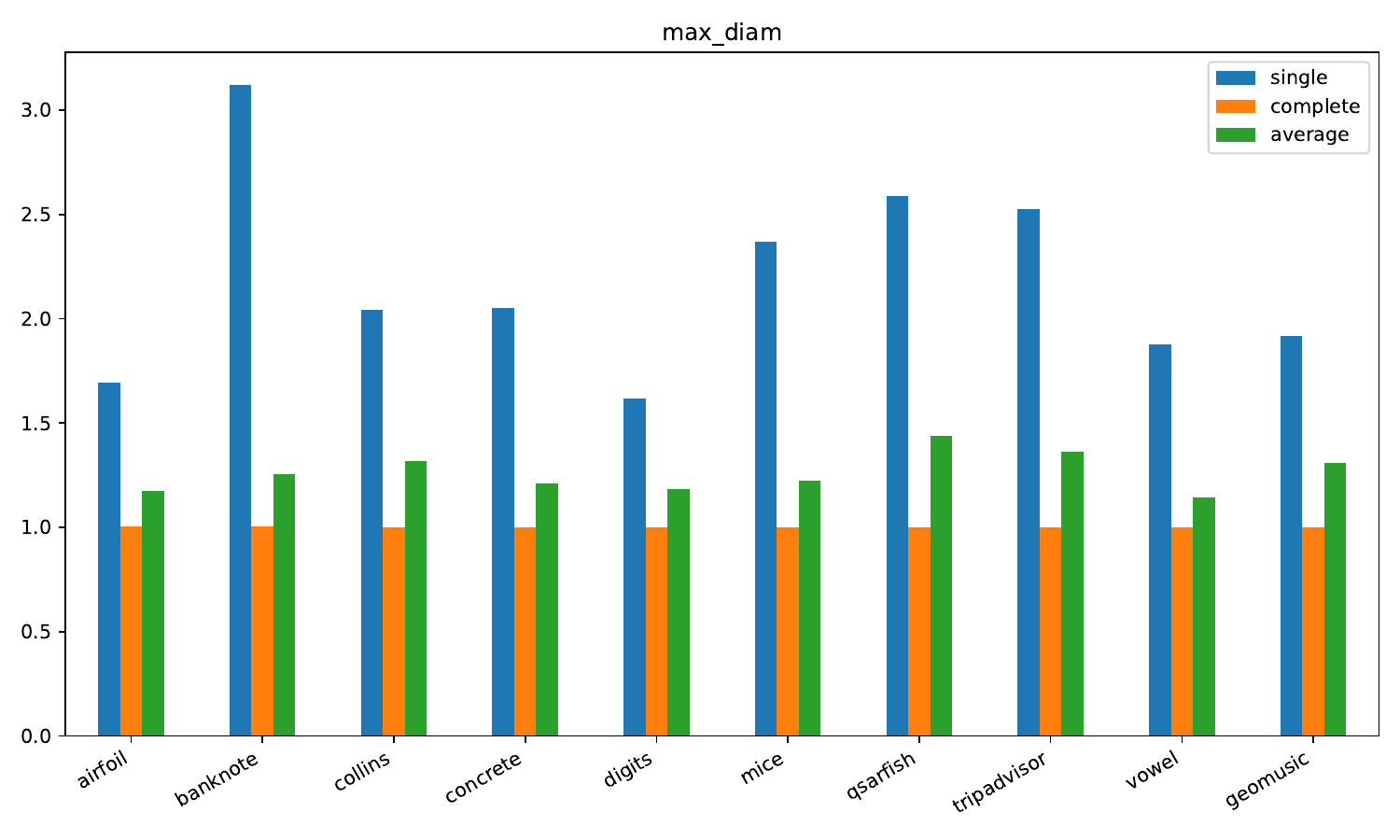}}
\caption{Results for the \maxdiamset \, for the different datasets. For interpreting the bars, the lower  the better}
\label{fig:max_diam}
\end{center}
\vskip -0.2in
\end{figure}

\begin{figure}[ht]
\vskip 0.2in
\begin{center}
\centerline{\includegraphics[width=0.9\columnwidth]{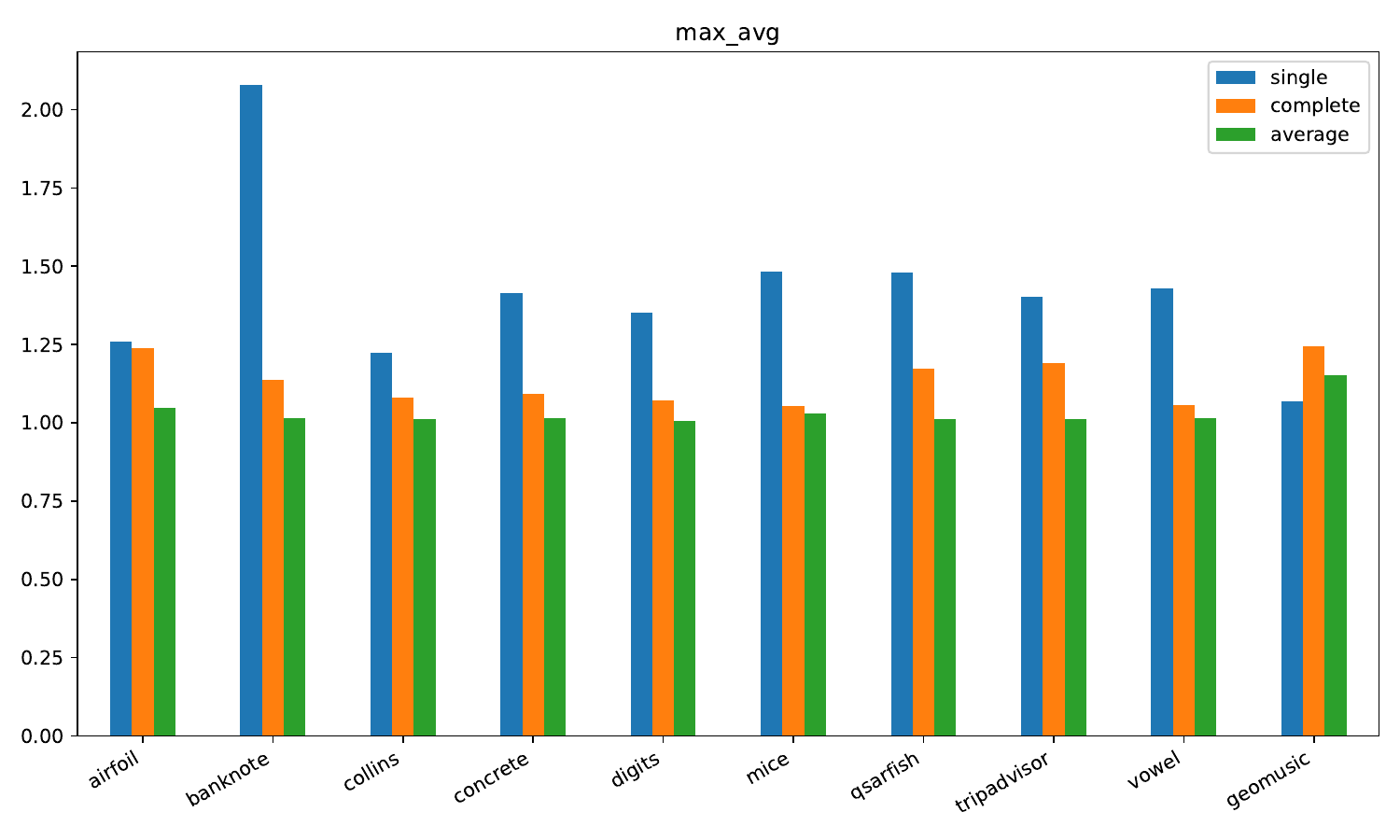}}
\caption{Results for the \maxavgset \, for the different datasets. For interpreting the bars, the lower  the better}
\label{fig:max_avg}
\end{center}
\vskip -0.2in
\end{figure}

\begin{figure}[ht]
\vskip 0.2in
\begin{center}
\centerline{\includegraphics[width=0.9\columnwidth]{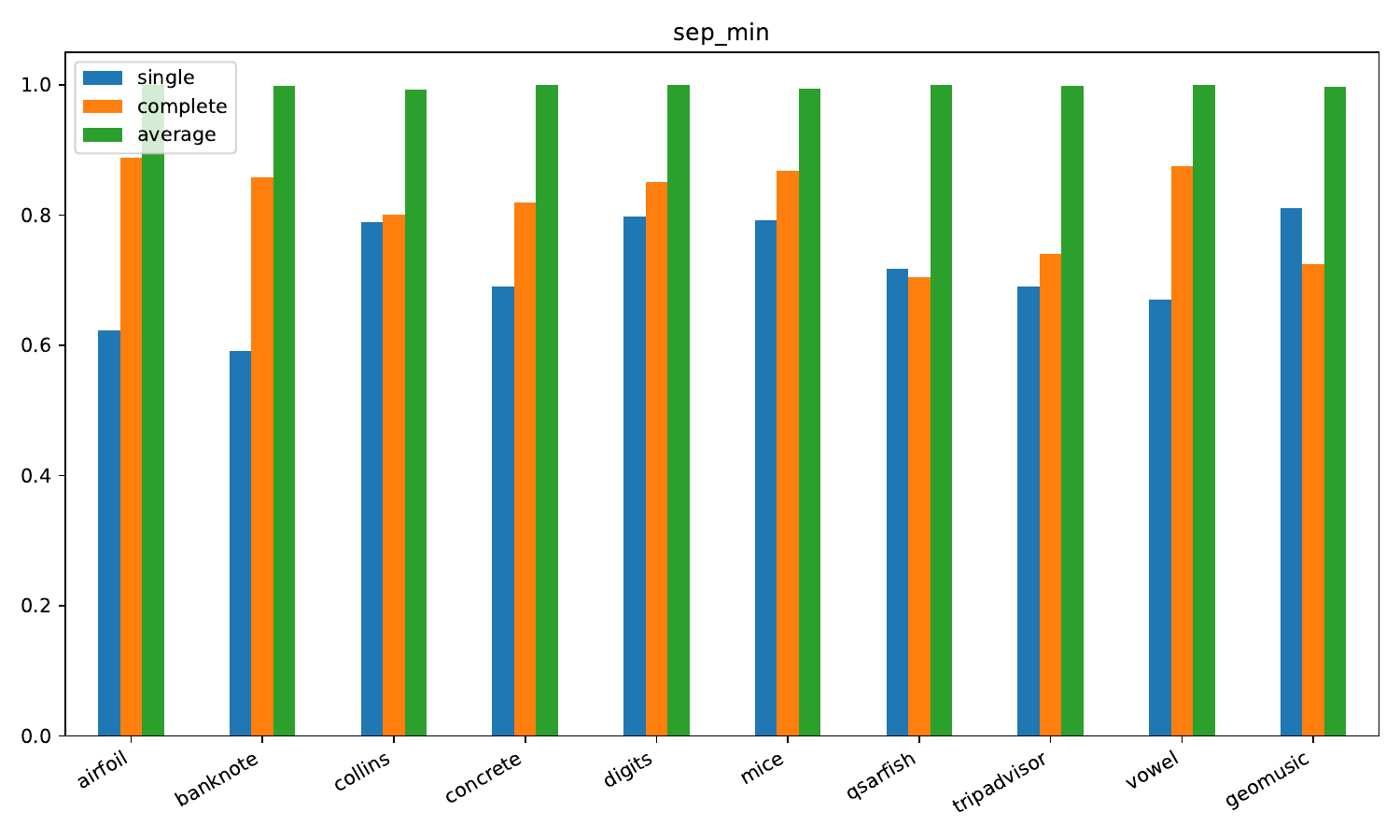}}
\caption{Results for the $\costmin$ \, for the different datasets. For interpreting the bars, the higher  the better}
\label{fig:sep_min}
\end{center}
\vskip -0.2in
\end{figure}

\begin{figure}[ht]
\vskip 0.2in
\begin{center}
\centerline{\includegraphics[width=0.9\columnwidth]{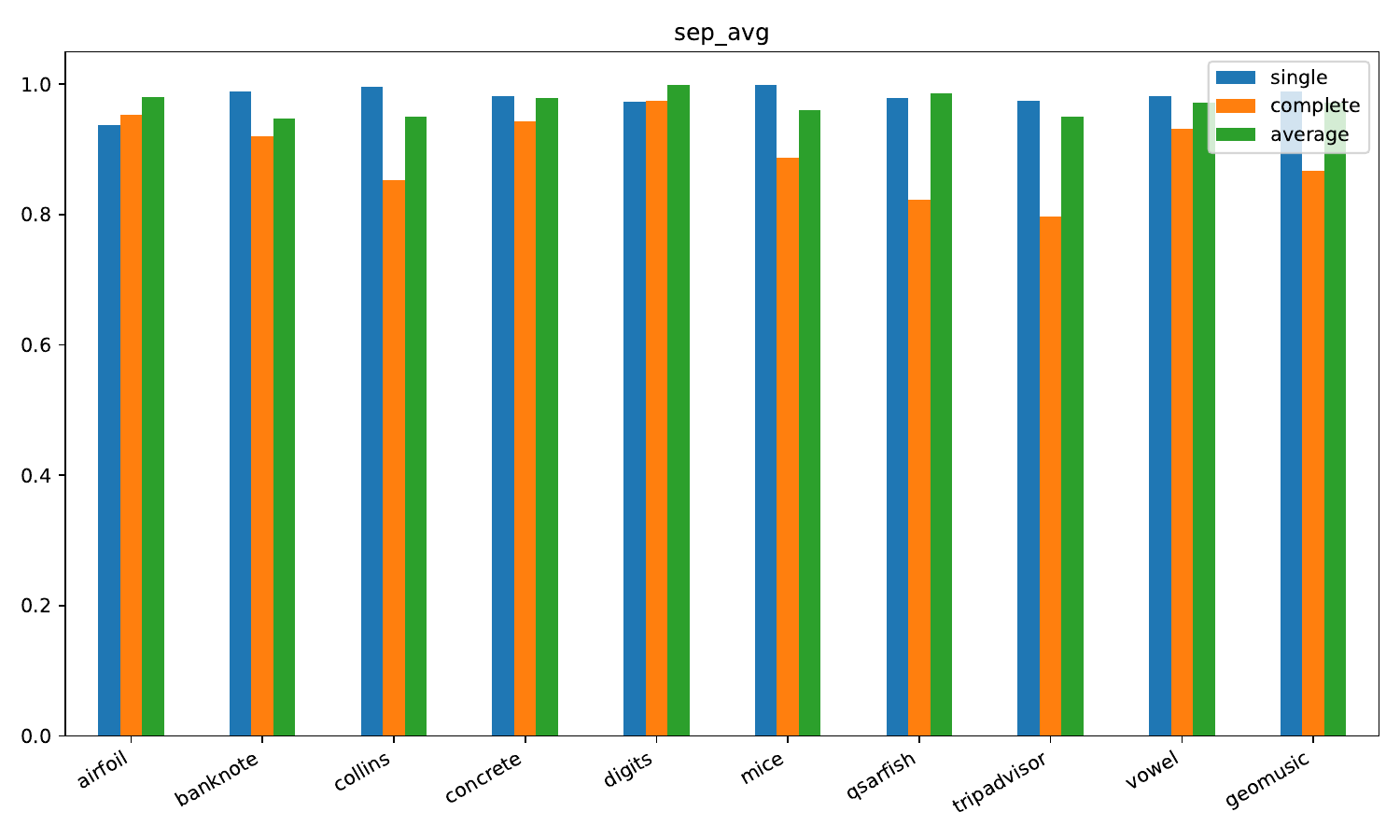}}
\caption{Results for the $\costavg$ \, for the different datasets. For interpreting the bars, the higher  the better}
\label{fig:sep_AV}
\end{center}
\vskip -0.2in
\end{figure}

\begin{figure}[ht]
\vskip 0.2in
\begin{center}
\centerline{\includegraphics[width=0.9\columnwidth]{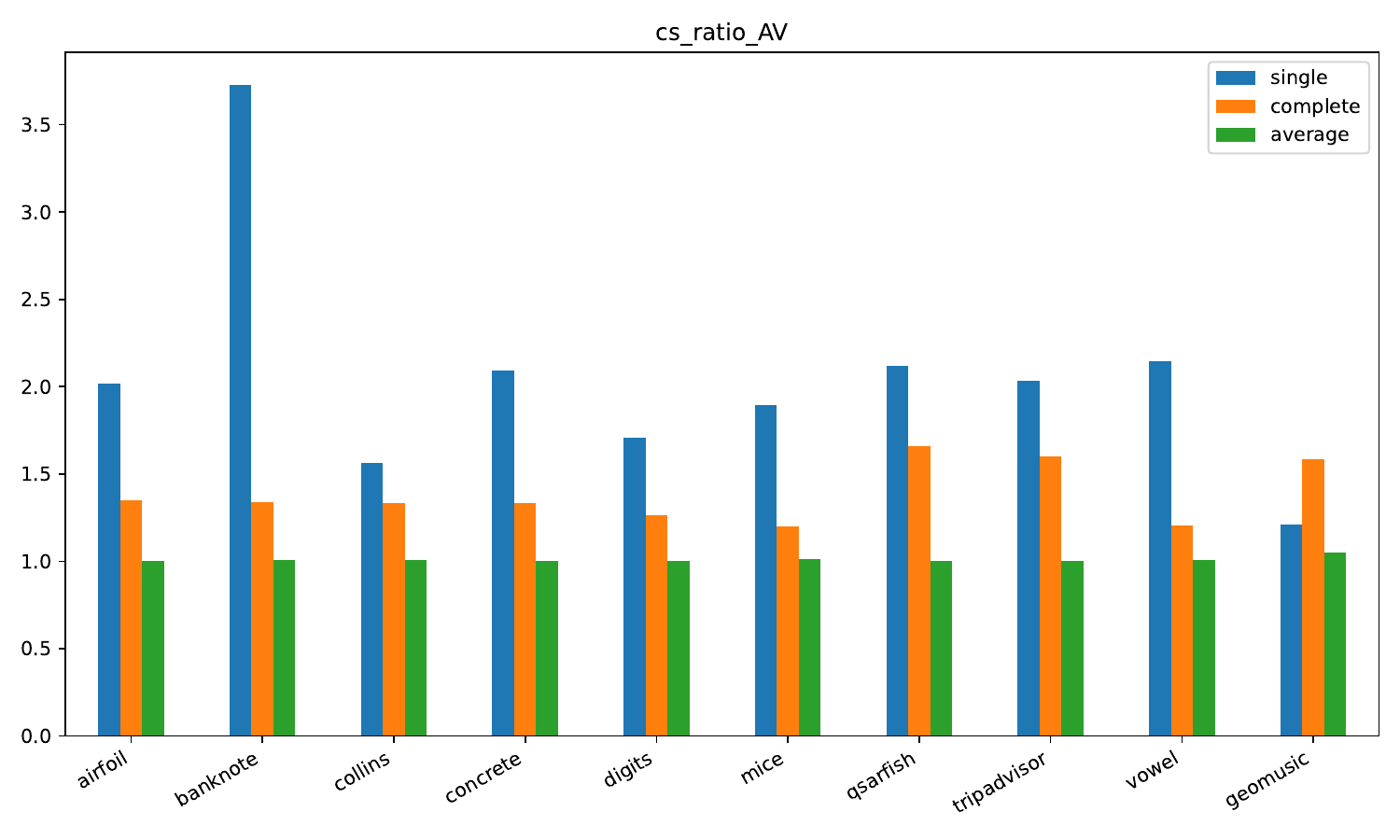}}
\caption{Results for the $\goodnessav$ \, for the different datasets and methods. For interpreting the bars, the lower  the better }
\label{fig:cs_ratio_AV}
\end{center}
\vskip -0.2in
\end{figure}

\begin{figure}[ht]
\vskip 0.2in
\begin{center}
\centerline{\includegraphics[width=0.9\columnwidth]{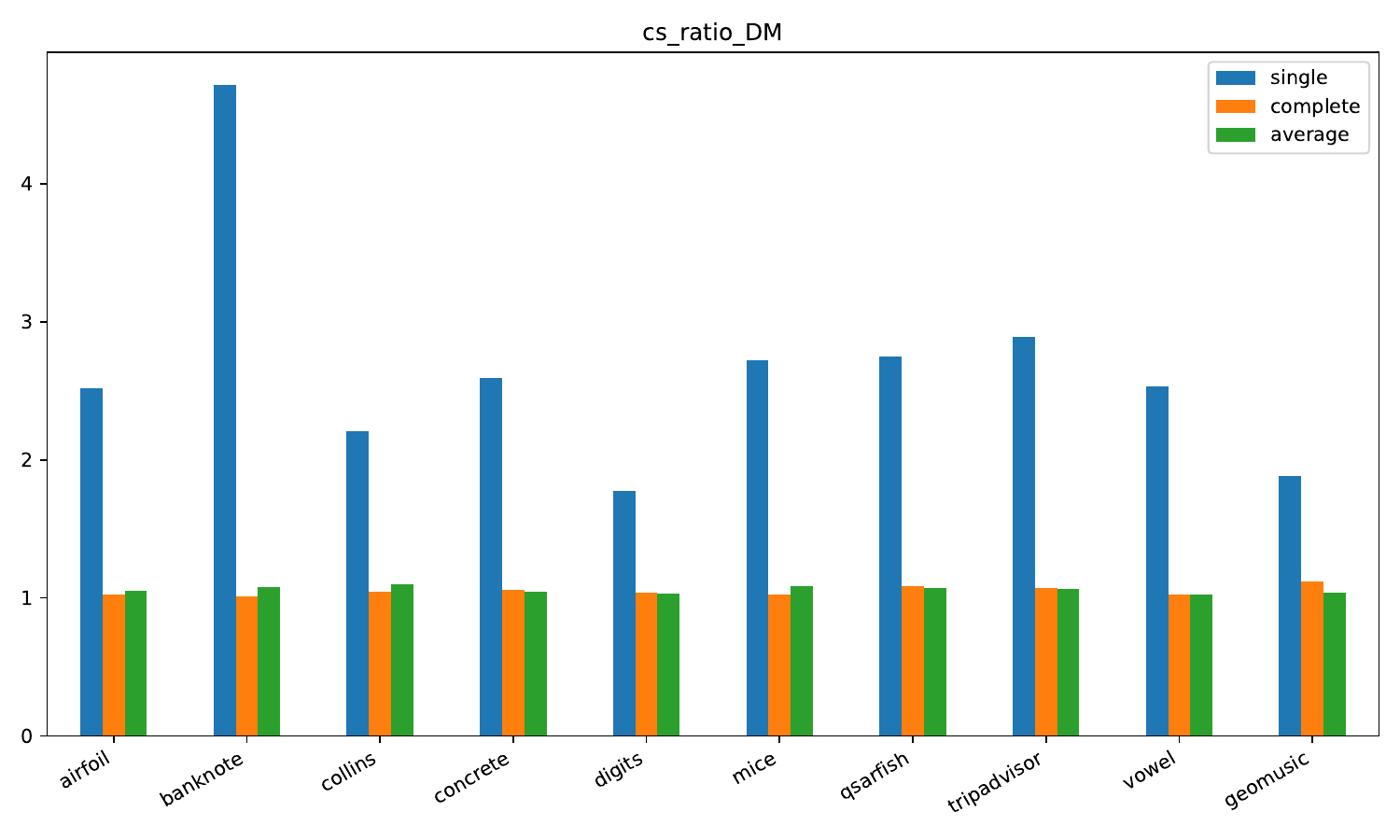}}
\caption{Results for the $\goodnessdm$ \, for the different datasets and methods. For interpreting the bars, the lower  the better }
\label{fig:cs_ratio_DM}
\end{center}
\vskip -0.2in
\end{figure}

Figures (\ref{fig:max_diam})-(\ref{fig:cs_ratio_DM}) show the results
obtained by \singlelink, \complink \, and \avglink, for all datasets and the different criteria considered in the paper. For a given dataset $D$, method $M$ and criterion $\alpha$, the height of the bar is given 
by  the average of $m_k$ 
 for every $k$  considered in our experiments, where $m_k$
 is the ratio between the value of criterion $\alpha$  achieved by method  $M$
 on dataset $D$ divided by the best value for
criterion $\alpha$,
 among  those achieved by \singlelink, \avglink\, and \complink  on dataset $D$. 

Regarding the cohesion criteria 
\complink\, presents the best results for $\maxdiamset$,  followed by \avglink.
For $\maxavgset$, again
\complink\, and \avglink\, are the best, with the latter having a slight advantage.

In terms of the separability criteria,  \avglink is much better than the other methods for $\costmin$, while for  
$\costavg$ there is a balance between \avglink\, and \singlelink.

For the criteria that combine cohesion and separability, \avglink is superior for $\goodnessav$, while there is a balance between 
\avglink\, and \complink\, for
$\goodnessdm$.

Table \ref{tab:l1} and \ref{tab:linfty} show the results
for the experiment described in Section \ref{sec:experiments}, when the
Euclidean distance is replaced with the $\ell_1$
 and $\ell_{\infty}$ norm, respectively.
The observations made in Section \ref{sec:experiments} also hold
when these metrics are used.

Finally, we note that the variance of the results for \avglink \, is small.
Indeed, an entry (average) close to 1 (e.g. 0.96) cannot have an underlying large variance 
because $1$ is the maximum possible value for an entry.
Since most entries for  \avglink \, are close to 1, 
one can conclude that the variance of its results is usually small.  In the
 supplemental material, we have   
.csv files with our full results.

\begin{table}[htpb]
\caption{Average ratio between the result of a method
and the best one for each criterion and each group of $k$. The best
results are bold-faced. Distances are computed using 
$\ell_{1}$ norm}
\label{tab:l1}
\begin{center}
\begin{tabular}{lccc||ccc||ccc}
       & \multicolumn{3}{c}{Smal} & \multicolumn{3}{c}{Medium} & \multicolumn{3}{c}{Large} \\ \hline
       & A      & C      & S      & A       & C       & S      & A       & C      & S      \\ \hline

$\costmin$    & \textbf{0,99} & 0,81          & 0,75 & \textbf{0,99} & 0,86          & 0,66       & \textbf{0,99} & 0,9           & 0,71          \\
$\costavg$    & \textbf{0,98} & 0,83          & 0,93 & \textbf{0,96} & 0,89          & \textbf{1} & 0,97          & 0,95          & \textbf{0,99} \\
$\maxdiamset$ & 0,86          & \textbf{0,99} & 0,72 & 0,85          & \textbf{1}    & 0,5        & 0,81          & \textbf{1}    & 0,41          \\
$\maxavgset$  & \textbf{0,94} & \textbf{0,94} & 0,88 & \textbf{0,99} & 0,9           & 0,73       & \textbf{0,99} & 0,83          & 0,7           \\
$\goodnessdm$ & \textbf{0,96} & 0,91          & 0,62 & 0,96          & \textbf{0,98} & 0,38       & 0,88          & \textbf{0,99} & 0,32          \\
$\goodnessav$ & \textbf{0,98} & 0,8           & 0,71 & \textbf{1}    & 0,79          & 0,51       & \textbf{1}    & 0,76          & 0,51         
\end{tabular}
\end{center}
\end{table}

\begin{table}[htpb]
\caption{Average ratio between the result of a method
and the best one for each criterion and each group of $k$. The best
results are bold-faced. Distances are computed using 
$\ell_{\infty}$ norm}
\label{tab:linfty}
\begin{center}
\begin{tabular}{lccc||ccc||ccc}
       & \multicolumn{3}{c}{Smal} & \multicolumn{3}{c}{Medium} & \multicolumn{3}{c}{Large} \\ \hline
       & A      & C      & S      & A       & C       & S      & A       & C      & S      \\ \hline
$\costmin$    & \textbf{0,99} & 0,82          & 0,77 & \textbf{0,98} & 0,91          & 0,7        & \textbf{0,99} & 0,94          & 0,75       \\
$\costavg$    & \textbf{0,97} & 0,82          & 0,95 & \textbf{0,97} & 0,92          & \textbf{1} & 0,98          & 0,96          & \textbf{1} \\
$\maxdiamset$ & 0,94          & \textbf{1}    & 0,9  & 0,87          & \textbf{1}    & 0,7        & 0,85          & \textbf{1}    & 0,56       \\
$\maxavgset$  & 0,94          & \textbf{0,96} & 0,91 & \textbf{0,94} & 0,88          & 0,79       & \textbf{0,95} & 0,85          & 0,81       \\
$\goodnessdm$ & \textbf{0,97} & 0,86          & 0,74 & 0,91          & \textbf{0,98} & 0,52       & 0,89          & \textbf{0,99} & 0,45       \\
$\goodnessav$ & \textbf{0,96} & 0,82          & 0,74 & \textbf{0,96} & 0,85          & 0,59       & \textbf{0,97} & 0,82          & 0,65 
\end{tabular}
\end{center}
\end{table}

\remove{
\red{Tables \ref{} and \ref{} show the results for the norms $\ell_1$ and
$\ell_{\infty}$}

\begin{table}
\caption{Percentage of pairs (dataset,$k$) a linkage method was at least
as good as the others for the different criteria, using the
$\ell_{1}$ norm.}
\label{tab:experiments-inftynorm}
\begin{center}
\begin{tabular}{lcccccc}
            & $\goodnessav$   & $\goodnessdm$   & $\costavg$   & $\costmin$   & 
            $\maxavgset$   & $\maxdiamset$   \\ \hline 
\avglink &0,86 & 0,64 & 0,07 & 0,27 & 0,88 & 0,35 \\
\complink & 0,17 & 0,27 & 0,94 & 0,04 & 0,1  & 0,64 \\
\singlelink & 0,04 & 0,12 & 0    & 0,7  & 0,03 & 0,01

\end{tabular}
\end{center}
\end{table}

\begin{table}
\caption{Percentage of pairs (dataset,$k$) a linkage method was at least
as good as the others for the different criteria, using the
$\ell_{1}$ norm.}
\label{tab:experiments-l1ynorm}
\begin{center}
\begin{tabular}{lcccccc}
            & $\goodnessav$   & $\goodnessdm$   & $\costavg$   & $\costmin$   & 
            $\maxavgset$   & $\maxdiamset$   \\ \hline 

\avglink & 0,67 & 0,4  & 0,72 & 0,32 & 0,5  & 0,3  \\
\complink & 0,23 & 0,57 & 0,27 & 0,02 & 0,35 & 0,92 \\
\singlelink & 0,11 & 0,05 & 0,06 & 0,66 & 0,19 & 0,22
\end{tabular}
\end{center}
\end{table}

Tables \ref{tab:exp-app0}-\ref{tab:exp-app9} show, for each dataset,
the average values achieved by the different criteria for each group of $k$. Each entry is the average
for all  $k$ in the group (e.g. {\tt small}). A .csv file with all the results can be found in the supplemental material.

Regarding cohesion criterion, \complink \, and  \avglink were superior to \singlelink, except for {\tt Geographical Music} dataset.
For $\maxdiamset$,  the improvement  of \complink \, over  \avglink, the second best, ranges from $3.3\%$ to $36.5\% $. For $\maxavgset$,  \complink\, and \avglink \, performed very similarly for small $k$, with some advantage for the former. However,  the latter outperformed the former for medium and large $k$:
its maximum gain was around $32.0\%$ and  it was  worse just once ($2 \%$ loss).

When we turn to separability criteria, we observe that 
  \avglink \, had
 the best results for $\costmin$.  It was better than the other methods for all combinations, in some of them by a large margin. For $\costavg$ criterion,  \avglink\, and \singlelink, have close results. But for small size $k$ \avglink \, tends to perform better while for medium and large \singlelink \, improve $3\%$ on average over
 \avglink.

Finally, for the criteria 
that combine both
cohesion and separability, 
\avglink \, and \complink \, beat
\singlelink.
For $\goodnessav$, the improvement of \avglink \, over \complink, the second best, ranges from $4.9\% $ to $53\%$. For $\goodnessdm$, \complink
and \avglink present close results for small $k$ and the former 
achieves better results for medium and large $k$. Its gain is up to $15\%$.


\begin{table}[htpb]
\label{tab:exp-app-1}
\caption{Dataset Airfoil: Average ratio between the result of a method
and the best one for each criterion and each group of $k$.}
\begin{center}
\begin{tabular}{lccc||ccc||ccc}
       & \multicolumn{3}{c}{Smal} & \multicolumn{3}{c}{Medium} & \multicolumn{3}{c}{Large} \\ \hline
       & A      & C      & S      & A       & C       & S      & A       & C      & S      \\ \hline

$\costmin$    & \textbf{1}    & 0,95          & 0,68 & \textbf{1} & 0,91       & 0,61       & \textbf{1}    & 0,8           & 0,58          \\
$\costavg$    & \textbf{1}    & 0,95          & 0,82 & 0,95       & 0,92       & \textbf{1} & \textbf{0,99} & 0,98          & \textbf{0,99} \\
$\maxdiamset$ & 0,95          & \textbf{0,98} & 0,76 & 0,84       & \textbf{1} & 0,49       & 0,79          & \textbf{1}    & 0,6           \\
$\maxavgset$  & \textbf{0,97} & \textbf{0,97} & 0,82 & \textbf{1} & 0,77       & 0,66       & \textbf{0,91} & 0,74          & 0,98          \\
$\goodnessdm$ & \textbf{0,97} & 0,96          & 0,55 & 0,93       & \textbf{1} & 0,33       & 0,96          & \textbf{0,98} & 0,42          \\
$\goodnessav$ & \textbf{0,99} & 0,94          & 0,59 & \textbf{1} & 0,7        & 0,4        & \textbf{1}    & 0,65          & 0,63          

\end{tabular}
\end{center}
\end{table}

\begin{table}[htpb]
\label{tab:exp-app-1}
\caption{Dataset Banknote: Average ratio between the result of a method
and the best one for each criterion and each group of $k$.}
\begin{center}
\begin{tabular}{lccc||ccc||ccc}
       & \multicolumn{3}{c}{Smal} & \multicolumn{3}{c}{Medium} & \multicolumn{3}{c}{Large} \\ \hline
       & A      & C      & S      & A       & C       & S      & A       & C      & S      \\ \hline
$\costmin$    & \textbf{0,99} & 0,93          & 0,63          & \textbf{1} & 0,86       & 0,49       & \textbf{1} & 0,78          & 0,66       \\
$\costavg$    & 0.9 & 0,84          & \textbf{0,97} & 0,96       & 0,95       & \textbf{1} & 0,97       & 0,98          & \textbf{1} \\
$\maxdiamset$ & 0,88          & \textbf{0,99} & 0,5           & 0,79       & \textbf{1} & 0,28       & 0,74       & \textbf{1}    & 0,28       \\
$\maxavgset$  & \textbf{0,96} & \textbf{0,96} & 0,61          & \textbf{1} & 0,95       & 0,39       & \textbf{1} & 0,77          & 0,51       \\
$\goodnessdm$ & 0,92          & \textbf{0,97} & 0,34          & 0,92       & \textbf{1} & 0,16       & 0,94       & \textbf{0,99} & 0,23       \\
$\goodnessav$ & \textbf{0,97} & 0,91          & 0,4           & \textbf{1} & 0,82       & 0,19       & \textbf{1} & 0,6           & 0,34      

\end{tabular}
\end{center}
\end{table}

\begin{table}[htpb]
\label{tab:exp-app-1}
\caption{Dataset Collins: Average ratio between the result of a method and the best one for each criterion and each group of $k$.}
\begin{center}
\begin{tabular}{lccc||ccc||ccc}
       & \multicolumn{3}{c}{Smal} & \multicolumn{3}{c}{Medium} & \multicolumn{3}{c}{Large} \\ \hline
       & A      & C      & S      & A       & C       & S      & A       & C      & S      \\ \hline
$\costmin$    & \textbf{0,98} & 0,75       & 0,86          & \textbf{1} & 0,8        & 0,75       & \textbf{1} & 0,85       & 0,76       \\
$\costavg$    & 0,92          & 0,77       & \textbf{0,99} & 0,98       & 0,89       & \textbf{1} & 0,95       & 0,9        & \textbf{1} \\
$\maxdiamset$ & 0,84          & \textbf{1} & 0,78          & 0,69       & \textbf{1} & 0,53       & 0,76       & \textbf{1} & 0,34       \\
$\maxavgset$  & 0,96          & \textbf{1} & 0,93          & \textbf{1} & 0,94       & 0,88       & \textbf{1} & 0,86       & 0,69       \\
$\goodnessdm$ & \textbf{0,98} & 0,89       & 0,8           & 0,86       & \textbf{1} & 0,49       & 0,9        & \textbf{1} & 0,31       \\
$\goodnessav$ & \textbf{0,98} & 0,77       & 0,83          & \textbf{1} & 0,75       & 0,65       & \textbf{1} & 0,73       & 0,53      
\end{tabular}
\end{center}
\end{table}

\begin{table}[htpb]
\label{tab:exp-app-1}
\caption{Dataset Concrete: Average ratio between the result of a method and the best one for each criterion and each group of $k$.}
\begin{center}
\begin{tabular}{lccc||ccc||ccc}
       & \multicolumn{3}{c}{Smal} & \multicolumn{3}{c}{Medium} & \multicolumn{3}{c}{Large} \\ \hline
       & A      & C      & S      & A       & C       & S      & A       & C      & S      \\ \hline

$\costmin$    & \textbf{1}    & 0,91          & 0,69 & \textbf{1}    & 0,74       & 0,63       & \textbf{1} & 0,81          & 0,75       \\
$\costavg$    & \textbf{0,98} & 0,93          & 0,95 & 0,97          & 0,93       & \textbf{1} & 0,98       & 0,97          & \textbf{1} \\
$\maxdiamset$ & 0,88          & \textbf{1}    & 0,8  & 0,86          & \textbf{1} & 0,45       & 0,77       & \textbf{1}    & 0,4        \\
$\maxavgset$  & \textbf{0,98} & \textbf{0,98} & 0,85 & \textbf{0,98} & 0,97       & 0,64       & \textbf{1} & 0,82          & 0,69       \\
$\goodnessdm$ & 0,94          & \textbf{0,98} & 0,6  & \textbf{1}    & 0,87       & 0,33       & 0,94       & \textbf{0,99} & 0,37       \\
$\goodnessav$ & \textbf{1}    & 0,91          & 0,61 & \textbf{1}    & 0,73       & 0,41       & \textbf{1} & 0,66          & 0,52    
\end{tabular}
\end{center}
\end{table}

\begin{table}[htpb]
\label{tab:exp-app-1}
\caption{Dataset Digits: Average ratio between the result of a method and the best one for each criterion and each group of $k$.}
\begin{center}
\begin{tabular}{lccc||ccc||ccc}
       & \multicolumn{3}{c}{Smal} & \multicolumn{3}{c}{Medium} & \multicolumn{3}{c}{Large} \\ \hline
       & A      & C      & S      & A       & C       & S      & A       & C      & S      \\ \hline
$\costmin$    & \textbf{1} & 0,94          & 0,85 & \textbf{1} & 0,78       & 0,76 & \textbf{1} & 0,82       & 0,78 \\
$\costavg$    & \textbf{1} & 0,93          & 0,94 & \textbf{1} & 0,99       & 1    & \textbf{1} & \textbf{1} & 0,98 \\
$\maxdiamset$ & 0,93       & \textbf{1}    & 0,88 & 0,86       & \textbf{1} & 0,66 & 0,77       & \textbf{1} & 0,45 \\
$\maxavgset$  & 0,98       & \textbf{0,99} & 0,9  & \textbf{1} & 0,96       & 0,71 & \textbf{1} & 0,86       & 0,66 \\
$\goodnessdm$ & 0,97       & \textbf{0,99} & 0,79 & \textbf{1} & 0,91       & 0,59 & 0,93       & \textbf{1} & 0,43 \\
$\goodnessav$ & \textbf{1} & 0,95          & 0,78 & \textbf{1} & 0,75       & 0,54 & \textbf{1} & 0,71       & 0,51
\end{tabular}
\end{center}
\end{table}

\begin{table}[htpb]
\label{tab:exp-app-1}
\caption{Dataset Geographic Music: Average ratio between the result of a method and the best one for each criterion and each group of $k$.}
\begin{center}
\begin{tabular}{lccc||ccc||ccc}
       & \multicolumn{3}{c}{Smal} & \multicolumn{3}{c}{Medium} & \multicolumn{3}{c}{Large} \\ \hline
       & A      & C      & S      & A       & C       & S      & A       & C      & S      \\ \hline
$\costmin$    & \textbf{0,99} & 0,69       & 0,81          & \textbf{1} & 0,69       & 0,8        & \textbf{1} & 0,8        & 0,81       \\
$\costavg$    & \textbf{0,98} & 0,83       & 0,97          & 0,96       & 0,83       & \textbf{1} & 0,98       & 0,95       & \textbf{1} \\
$\maxdiamset$ & 0,85          & \textbf{1} & 0,72          & 0,75       & \textbf{1} & 0,51       & 0,72       & \textbf{1} & 0,42       \\
$\maxavgset$  & 0,98       & \textbf{0,99} & 0,9  & \textbf{1} & 0,96       & 0,71 & \textbf{1} & 0,86       & 0,66 \\
$\goodnessdm$ & 0,97       & \textbf{0,99} & 0,79 & \textbf{1} & 0,91       & 0,59 & 0,93       & \textbf{1} & 0,43 \\
$\goodnessav$ & \textbf{1} & 0,95          & 0,78 & \textbf{1} & 0,75       & 0,54 & \textbf{1} & 0,71       & 0,51
\end{tabular}
\end{center}
\end{table}

\begin{table}[htpb]
\label{tab:exp-app-1}
\caption{Dataset Mice: Average ratio between the result of a method and the best one for each criterion and each group of $k$.}
\begin{center}
\begin{tabular}{lccc||ccc||ccc}
       & \multicolumn{3}{c}{Smal} & \multicolumn{3}{c}{Medium} & \multicolumn{3}{c}{Large} \\ \hline
       & A      & C      & S      & A       & C       & S      & A       & C      & S      \\ \hline
$\costmin$    & \textbf{0,98} & 0,84          & 0,82       & \textbf{1} & 0,91          & 0,83       & \textbf{1} & 0,86       & 0,73       \\
$\costavg$    & 0,93          & 0,73          & \textbf{1} & 0,97       & 0,96          & \textbf{1} & 0,98       & 0,98       & \textbf{1} \\
$\maxdiamset$ & 0,83          & \textbf{1}    & 0,62       & 0,83       & \textbf{1}    & 0,43       & 0,81       & \textbf{1} & 0,34       \\
$\maxavgset$  & 0,92          & \textbf{1}    & 0,77       & \textbf{1} & 0,94          & 0,7        & \textbf{1} & 0,92       & 0,59       \\
$\goodnessdm$ & 0,93          & \textbf{0,95} & 0,6        & 0,9        & \textbf{0,99} & 0,39       & 0,94       & \textbf{1} & 0,29       \\
$\goodnessav$ & \textbf{0,96} & 0,89          & 0,68       & \textbf{1} & 0,86          & 0,58       & \textbf{1} & 0,79       & 0,44      
\end{tabular}
\end{center}
\end{table}

\begin{table}[htpb]
\label{tab:exp-app-1}
\caption{Dataset Qsarfish: Average ratio between the result of a method and the best one for each criterion and each group of $k$.}
\begin{center}
\begin{tabular}{lccc||ccc||ccc}
       & \multicolumn{3}{c}{Smal} & \multicolumn{3}{c}{Medium} & \multicolumn{3}{c}{Large} \\ \hline
       & A      & C      & S      & A       & C       & S      & A       & C      & S      \\ \hline
$\costmin$    & \textbf{1}    & 0,63          & 0,83 & \textbf{1}    & 0,77          & 0,65          & \textbf{1} & 0,71       & 0,67       \\
$\costavg$    & \textbf{1}    & 0,64          & 0,94 & \textbf{0,99} & 0,9           & \textbf{0,99} & 0,97       & 0,92       & \textbf{1} \\
$\maxdiamset$ & 0,75          & \textbf{1}    & 0,73 & 0,73          & \textbf{1}    & 0,5           & 0,63       & \textbf{1} & 0,24       \\
$\maxavgset$  & 0,97          & \textbf{0,99} & 0,95 & \textbf{1}    & 0,8           & 0,74          & \textbf{1} & 0,8        & 0,5        \\
$\goodnessdm$ & \textbf{0,98} & 0,84          & 0,79 & 0,94          & \textbf{0,99} & 0,42          & 0,89       & \textbf{1} & 0,23       \\
$\goodnessav$ & 1             & 0,64          & 0,82 & \textbf{1}    & 0,61          & 0,49          & \textbf{1} & 0,57       & 0,33      
\end{tabular}
\end{center}
\end{table}

\begin{table}[htpb]
\label{tab:exp-app-1}
\caption{Dataset Tripadvisor: Average ratio between the result of a method and the best one for each criterion and each group of $k$.}
\begin{center}
\begin{tabular}{lccc||ccc||ccc}
       & \multicolumn{3}{c}{Smal} & \multicolumn{3}{c}{Medium} & \multicolumn{3}{c}{Large} \\ \hline
       & A      & C      & S      & A       & C       & S      & A       & C      & S      \\ \hline

$\costmin$    & \textbf{0,99} & 0,62       & 0,69 & \textbf{1} & 0,79       & 0,67       & \textbf{1} & 0,8        & 0,71       \\
$\costavg$    & \textbf{1}    & 0,71       & 0,92 & 0,91       & 0,79       & \textbf{1} & 0,94       & 0,89       & \textbf{1} \\
$\maxdiamset$ & 0,74          & \textbf{1} & 0,69 & 0,74       & \textbf{1} & 0,45       & 0,73       & \textbf{1} & 0,27       \\
$\maxavgset$  & \textbf{0,97} & 0,95       & 0,96 & \textbf{1} & 0,78       & 0,77       & \textbf{1} & 0,82       & 0,54       \\
$\goodnessdm$ & \textbf{0,98} & 0,84       & 0,64 & 0,94       & \textbf{1} & 0,38       & 0,91       & \textbf{1} & 0,23       \\
$\goodnessav$ & \textbf{0,99} & 0,62       & 0,69 & \textbf{1} & 0,61       & 0,51       & \textbf{1} & 0,66       & 0,38      
\end{tabular}
\end{center}
\end{table}

\begin{table}[htpb]
\label{tab:exp-app-1}
\caption{Dataset Vowel: Average ratio between the result of a method and the best one for each criterion and each group of $k$.}
\begin{center}
\begin{tabular}{lccc||ccc||ccc}
       & \multicolumn{3}{c}{Smal} & \multicolumn{3}{c}{Medium} & \multicolumn{3}{c}{Large} \\ \hline
       & A      & C      & S      & A       & C       & S      & A       & C      & S      \\ \hline
$\costmin$    & \textbf{1}    & 0,94          & 0,7  & \textbf{1} & 0,85          & 0,61       & \textbf{1}    & 0,83       & 0,7        \\
$\costavg$    & \textbf{0,97} & 0,91          & 0,94 & 0,96       & 0,9           & \textbf{1} & 0,98          & 0,98       & \textbf{1} \\
$\maxdiamset$ & 0,91          & \textbf{1}    & 0,72 & 0,88       & \textbf{1}    & 0,47       & 0,85          & \textbf{1} & 0,48       \\
$\maxavgset$  & \textbf{0,98} & \textbf{0,98} & 0,85 & 0,98       & \textbf{0,99} & 0,64       & \textbf{1}    & 0,88       & 0,66       \\
$\goodnessdm$ & 0,95          & \textbf{0,98} & 0,53 & \textbf{1} & 0,97          & 0,33       & \textbf{0,99} & 0,98       & 0,39       \\
$\goodnessav$ & \textbf{0,99} & 0,94          & 0,61 & \textbf{1} & 0,86          & 0,4        & \textbf{1}    & 0,74       & 0,47      

\end{tabular}
\end{center}
\end{table}

\remove{
\end{tabular}
\end{center}
\end{table}

\begin{table}[htpb]
\label{tab:exp-app0}
\caption{Dataset Airfoil: averages of the different criteria according to the size of
$k$}
\begin{center}
\begin{tabular}{lccc||ccc||ccc}
& \multicolumn{3}{c}{Small} & \multicolumn{3}{c}{Medium} & \multicolumn{3}{c}{Large} \\ \hline
& A & C & S & A & C & S & A & C & S \\ \hline

$\costmin$    &  0,79 & 0,75 & 0,55 & 0,36 & 0,32 & 0,22 & 0,12 & 0,1  & 0,07  \\
$\costavg$    &  1,03 & 0,99 & 0,85 & 0,91 & 0,89 & 0,96 & 0,88 & 0,87 & 0,87  \\
$\maxdiamset$ &  1,33 & 1,28 & 1,65 & 0,64 & 0,53 & 1,1  & 0,21 & 0,17 & 0,28  \\
$\maxavgset$  &  0,57 & 0,58 & 0,67 & 0,28 & 0,36 & 0,42 & 0,12 & 0,15 & 0,11  \\
$\goodnessdm$ &  1,69 & 1,71 & 3,42 & 1,79 & 1,65 & 5,06 & 1,66 & 1,63 & 3,83  \\ 
$\goodnessav$ &  0,73 & 0,77 & 1,39 & 0,78 & 1,11 & 1,94 & 0,92 & 1,43 & 1,5 

\end{tabular}
\end{center}
\end{table}

\begin{table}[htpb]
\label{tab:exp-app1}
\caption{Dataset Banknote: averages of the different criteria according to the size of $k$}
\begin{center}
\begin{tabular}{lccc||ccc||ccc}
& \multicolumn{3}{c}{Small} & \multicolumn{3}{c}{Medium} & \multicolumn{3}{c}{Large} \\ \hline
& A & C & S & A & C & S & A & C & S \\ \hline

$\costmin$    &  8,77 & 8,20 & 5,81 & 3,21 & 2,77 & 1,56 & 1,16 & 0,90 & 0,74  \\
$\costavg$    & 15,06 &13,87 &16,15 &11,75 &11,52 &12,19 &10,42 &10,46 &10,71  \\
$\maxdiamset$ & 15,61 &14,04 &27,31 & 6,50 & 5,16 &18,54 & 2,16 & 1,59 & 5,73  \\
$\maxavgset$  &  5,95 & 5,88 & 9,31 & 2,49 & 2,63 & 6,43 & 1,11 & 1,47 & 2,17  \\
$\goodnessdm$ &  1,84 & 1,73 & 5,67 & 2,03 & 1,87 &11,91 & 1,82 & 1,72 & 7,59  \\ 
$\goodnessav$ &  0,70 & 0,75 & 1,96 & 0,78 & 0,96 & 4,13 & 0,96 & 1,61 & 2,90 

\end{tabular}
\end{center}
\end{table}

\begin{table}[htpb]
\label{tab:exp-app2}
\caption{Dataset Collins: averages of the different criteria according to the size of $k$}
\begin{center}
\begin{tabular}{lccc||ccc||ccc}
& \multicolumn{3}{c}{Small} & \multicolumn{3}{c}{Medium} & \multicolumn{3}{c}{Large} \\ \hline
& A & C & S & A & C & S & A & C & S \\ \hline

$\costmin$    &  1,10 & 0,83 & 0,98 & 0,82 & 0,66 & 0,61 & 0,57 & 0,48 & 0,43  \\
$\costavg$    &  1,33 & 1,11 & 1,43 & 1,21 & 1,09 & 1,23 & 1,07 & 1,02 & 1,13  \\
$\maxdiamset$ &  1,92 & 1,61 & 2,08 & 1,58 & 1,09 & 2,06 & 0,89 & 0,68 & 1,96  \\
$\maxavgset$  &  0,86 & 0,83 & 0,89 & 0,77 & 0,82 & 0,87 & 0,56 & 0,66 & 0,81  \\
$\goodnessdm$ &  1,76 & 1,94 & 2,29 & 1,92 & 1,65 & 3,38 & 1,56 & 1,39 & 4,54  \\ 
$\goodnessav$ &  0,79 & 1,01 & 0,98 & 0,93 & 1,24 & 1,43 & 0,99 & 1,35 & 1,88 

\end{tabular}
\end{center}
\end{table}

\begin{table}[htpb]
\label{tab:exp-app3}
\caption{Dataset Concrete: averages of the different criteria according to the size of $k$}
\begin{center}
\begin{tabular}{lccc||ccc||ccc}
& \multicolumn{3}{c}{Small} & \multicolumn{3}{c}{Medium} & \multicolumn{3}{c}{Large} \\ \hline
& A & C & S & A & C & S & A & C & S \\ \hline

$\costmin$    & 0,92 & 0,84 & 0,66 & 0,53 & 0,40 & 0,33 & 0,23 & 0,18 & 0,17 \\
$\costavg$    & 1,16 & 1,10 & 1,12 & 1,05 & 1,00 & 1,08 & 0,95 & 0,94 & 0,96 \\
$\maxdiamset$ & 1,65 & 1,44 & 1,80 & 0,88 & 0,75 & 1,66 & 0,38 & 0,28 & 0,75 \\
$\maxavgset$  & 0,70 & 0,70 & 0,80 & 0,43 & 0,44 & 0,66 & 0,21 & 0,26 & 0,32 \\
$\goodnessdm$ & 1,79 & 1,72 & 3,14 & 1,65 & 1,90 & 5,00 & 1,61 & 1,53 & 4,39 \\ 
$\goodnessav$ & 0,76 & 0,84 & 1,39 & 0,81 & 1,11 & 1,99 & 0,94 & 1,42 & 1,87

\end{tabular}
\end{center}
\end{table}

\begin{table}[htpb]
\label{tab:exp-app4}
\caption{Dataset Digits: averages of the different criteria according to the size of $k$}
\begin{center}
\begin{tabular}{lccc||ccc||ccc}
& \multicolumn{3}{c}{Small} & \multicolumn{3}{c}{Medium} & \multicolumn{3}{c}{Large} \\ \hline
& A & C & S & A & C & S & A & C & S \\ \hline

$\costmin$    & 49,73 & 46,96 & 42,43 & 37,63 & 29,48 & 28,66 & 26,01 & 21,48 & 20,11 \\
$\costavg$    & 54,17 & 50,49 & 50,90 & 49,84 & 49,52 & 49,69 & 48,66 & 48,64 & 47,84 \\
$\maxdiamset$ & 73,49 & 68,08 & 77,04 & 59,40 & 51,10 & 77,04 & 40,40 & 31,06 & 67,97 \\
$\maxavgset$  & 44,06 & 43,73 & 48,34 & 34,06 & 35,69 & 48,23 & 25,73 & 29,94 & 39,28 \\
$\goodnessdm$ &  1,48 &  1,45 &  1,83 &  1,58 &  1,73 &  2,69 &  1,54 &  1,44 &  3,37 \\ 
$\goodnessav$ &  0,89 &  0,93 &  1,15 &  0,91 &  1,21 &  1,68 &  0,99 &  1,39 &  1,94

\end{tabular}
\end{center}
\end{table}

\begin{table}[htpb]
\label{tab:exp-app5}
\caption{Dataset Geographical Music: averages of the different criteria according to the size of $k$}
\begin{center}
\begin{tabular}{lccc||ccc||ccc}
& \multicolumn{3}{c}{Small} & \multicolumn{3}{c}{Medium} & \multicolumn{3}{c}{Large} \\ \hline
& A & C & S & A & C & S & A & C & S \\ \hline

$\costmin$    &  3,05 & 2,06 & 2,57 & 1,88 & 1,29 & 1,51 & 1,20 & 0,95 & 0,98  \\
$\costavg$    &  3,93 & 3,29 & 3,91 & 3,43 & 2,94 & 3,55 & 2,42 & 2,34 & 2,48  \\
$\maxdiamset$ &  4,56 & 3,89 & 5,40 & 3,03 & 2,27 & 4,42 & 1,92 & 1,38 & 3,28  \\
$\maxavgset$  &  2,34 & 2,32 & 1,77 & 1,82 & 2,10 & 1,61 & 1,19 & 1,34 & 1,42  \\
$\goodnessdm$ &  1,55 & 1,87 & 2,25 & 1,61 & 1,76 & 2,93 & 1,59 & 1,43 & 3,37  \\ 
$\goodnessav$ &  0,79 & 1,13 & 0,77 & 0,97 & 1,63 & 1,07 & 0,99 & 1,40 & 1,46 

\end{tabular}
\end{center}
\end{table}

\begin{table}[htpb]
\label{tab:exp-app6}
\caption{Dataset Mice: averages of the different criteria according to the size of $k$}
\begin{center}
\begin{tabular}{lccc||ccc||ccc}
& \multicolumn{3}{c}{Small} & \multicolumn{3}{c}{Medium} & \multicolumn{3}{c}{Large} \\ \hline
& A & C & S & A & C & S & A & C & S \\ \hline

$\costmin$    &  3,56 & 2,98 & 3,10 & 2,05 & 1,87 & 1,70 & 1,12 & 0,95 & 0,79  \\
$\costavg$    &  5,04 & 3,94 & 5,48 & 4,07 & 4,00 & 4,19 & 3,49 & 3,47 & 3,54  \\
$\maxdiamset$ &  6,47 & 5,37 & 8,55 & 3,56 & 2,94 & 6,89 & 1,80 & 1,43 & 4,56  \\
$\maxavgset$  &  2,57 & 2,37 & 3,07 & 1,73 & 1,84 & 2,46 & 0,98 & 1,07 & 1,72  \\
$\goodnessdm$ &  1,82 & 1,79 & 3,15 & 1,73 & 1,58 & 4,07 & 1,57 & 1,48 & 5,53  \\ 
$\goodnessav$ &  0,74 & 0,81 & 1,14 & 0,84 & 0,99 & 1,46 & 0,90 & 1,15 & 2,13 

\end{tabular}
\end{center}
\end{table}

\begin{table}[htpb]
\label{tab:exp-app7}
\caption{Dataset Qsarfish: averages of the different criteria according to the size of $k$}
\begin{center}
\begin{tabular}{lccc||ccc||ccc}
& \multicolumn{3}{c}{Small} & \multicolumn{3}{c}{Medium} & \multicolumn{3}{c}{Large} \\ \hline
& A & C & S & A & C & S & A & C & S \\ \hline

$\costmin$    &  0,78 & 0,48 & 0,66 & 0,42 & 0,33 & 0,28 & 0,21 & 0,15 & 0,14  \\
$\costavg$    &  1,05 & 0,67 & 0,99 & 0,84 & 0,76 & 0,84 & 0,68 & 0,65 & 0,71  \\
$\maxdiamset$ &  1,51 & 1,14 & 1,56 & 0,87 & 0,64 & 1,28 & 0,41 & 0,26 & 1,05  \\
$\maxavgset$  &  0,49 & 0,48 & 0,50 & 0,35 & 0,44 & 0,47 & 0,20 & 0,25 & 0,39  \\
$\goodnessdm$ &  2,02 & 2,37 & 2,64 & 2,07 & 1,96 & 4,66 & 1,97 & 1,76 & 7,95  \\ 
$\goodnessav$ &  0,66 & 1,03 & 0,85 & 0,83 & 1,36 & 1,72 & 0,97 & 1,70 & 2,94 

\end{tabular}
\end{center}
\end{table}

\begin{table}[htpb]
\label{tab:exp-app8}
\caption{Dataset Tripadvisor: averages of the different criteria according to the size of $k$}
\begin{center}
\begin{tabular}{lccc||ccc||ccc}
& \multicolumn{3}{c}{Small} & \multicolumn{3}{c}{Medium} & \multicolumn{3}{c}{Large} \\ \hline
& A & C & S & A & C & S & A & C & S \\ \hline

$\costmin$    &  2,55 & 1,60 & 1,83 & 1,47 & 1,16 & 0,98 & 0,85 & 0,68 & 0,60  \\
$\costavg$    &  3,35 & 2,37 & 3,09 & 2,71 & 2,35 & 2,98 & 2,21 & 2,09 & 2,35  \\
$\maxdiamset$ &  4,88 & 3,63 & 5,25 & 2,81 & 2,09 & 4,65 & 1,47 & 1,07 & 3,98  \\
$\maxavgset$  &  1,74 & 1,77 & 1,75 & 1,31 & 1,68 & 1,70 & 0,83 & 1,04 & 1,53  \\
$\goodnessdm$ &  1,94 & 2,27 & 3,19 & 1,91 & 1,81 & 4,76 & 1,71 & 1,55 & 6,72  \\ 
$\goodnessav$ &  0,69 & 1,13 & 1,07 & 0,89 & 1,46 & 1,74 & 0,98 & 1,51 & 2,58 

\end{tabular}
\end{center}
\end{table}

\begin{table}[htpb]
\label{tab:exp-app9}
\caption{Dataset Vowel: averages of the different criteria according to the size of $k$}
\begin{center}
\begin{tabular}{lccc||ccc||ccc}
& \multicolumn{3}{c}{Small} & \multicolumn{3}{c}{Medium} & \multicolumn{3}{c}{Large} \\ \hline
& A & C & S & A & C & S & A & C & S \\ \hline

$\costmin$    &  1,67 & 1,57 & 1,18 & 1,13 & 0,96 & 0,69 & 0,61 & 0,51 & 0,42  \\
$\costavg$    &  2,02 & 1,88 & 1,96 & 1,88 & 1,77 & 1,96 & 1,68 & 1,67 & 1,71  \\
$\maxdiamset$ &  2,92 & 2,65 & 3,67 & 1,85 & 1,61 & 3,41 & 0,90 & 0,77 & 1,61  \\
$\maxavgset$  &  1,39 & 1,38 & 1,60 & 0,97 & 0,96 & 1,49 & 0,53 & 0,60 & 0,82  \\
$\goodnessdm$ &  1,74 & 1,68 & 3,22 & 1,64 & 1,68 & 4,97 & 1,48 & 1,50 & 3,79  \\ 
$\goodnessav$ &  0,84 & 0,88 & 1,41 & 0,86 & 1,00 & 2,17 & 0,89 & 1,22 & 1,94 

\end{tabular}
\end{center}
\end{table}
}
}

\end{document}